\newcommand{\catdigraph}{\mathcal{G}}
\newcommand{\catgraph}{\catdigraph}
\newcommand{\mor}{\operatorname{Mor}}
\newcommand{\N}{\mathbb{N}}
\newcommand{\funct}{\operatorname{Funct}}
\newcommand{\seta}[1][]{\xrightarrow[\vphantom{=}\smash{\raisebox{0.5ex}{=}}]{#1}}
\newcommand{\bx}{\mathbf{X}}
\newcommand{\by}{\mathbf{Y}}
\newcommand{\ff}{\mathfrak{F}}
\newcommand{\ffo}{\ff^\Omega}
\newcommand{\oa}{\Omega^\ast}
\newcommand{\ffoa}{\ff^{\Omega^\ast}}
\newcommand{\ffh}{\widehat{\ff}}
\newcommand{\net}{\mathcal{N}}
\newcommand{\netex}{\mathcal{N}_\mathrm{ext}}
\newcommand{\ultranet}{\mathcal{U}}
\newcommand{\ultranetex}{\mathcal{U}_\mathrm{ext}}
\newcommand{\disnet}{\mathcal{N}_\mathrm{dis}}
\newcommand{\dis}{\operatorname{dis}}
\newcommand{\subpart}{\operatorname{SubPart}}
\newcommand{\R}{\mathbb{R}}
\theoremstyle{definition}
\newtheorem{theorem}{Theorem}[section]
\newtheorem{definition}[theorem]{Definition}
\newtheorem{proposition}[theorem]{Proposition}
\newtheorem{remark}[theorem]{Remark}
\newtheorem{example}[theorem]{Example}
\newtheorem{lemma}[theorem]{Lemma}
\newtheorem{corollary}[theorem]{Corollary}
\theoremstyle{remark}
\newtheorem{claim}{Claim}[theorem]
\newenvironment{claimproof}{\noindent\textit{Proof.}}{\hfill $\blacksquare$\par\medskip}
\begin{document} 

\title{Motivic clustering schemes for directed graphs}

\author{Facundo M\'emoli}
\address{Departments of Mathematics and Computer Science and Engineering.  the Ohio State University. USA.}
\email{memoli@math.osu.edu}
\thanks{FM was  partially supported by NSF grants DMS-1723003, CCF-1740761, and AF-1526513. GV was partly supported by the Coordena{\c c}{\~a}o de Aperfei{\c c}oamento de Pessoal de N{\'i}vel Superior -- Brasil (CAPES) -- Finance Code 001}

\author{Guilherme Vituri F. Pinto}
\address{Departamento de Matem\'atica. UNESP, Rio Claro, Brazil.}
\email{vituri.vituri@gmail.com}
\date{\today}

\begin{abstract}
Motivated by the concept of network motifs we construct certain clustering methods (functors) which are parametrized by a given collection of motifs (or representers). 
\end{abstract}

\maketitle
\tableofcontents

\section{Introduction}
Clustering is an useful procedure to identify subgroups that exhibit some kind of proximity or similarity in datasets. The practice of clustering metric spaces, with its many algorithms, is well developed~\cite{RuiWunsch05}. 
From a theoretical perspective, in~\cite{Carlsson10ClCl, Carlsson10ChSt} the authors invoked the concept of ``functoriality'' to define desirable properties of maps that send a metric space to a hierarchical clustering of its vertices, and proved that a unique method (i.e. functor), \textit{single linkage} hierarchical clustering, satisfies these properties. When restricting to the category of metric spaces and \textit{injective} maps, the same authors found an infinite family of standard (or flat) and hierarchical clustering methods satisfying these same properties.
This is a counterpart to a result of Kleinberg~\cite{kleinberg}, which states that there is no method of standard (as opposed to hierarchical) clustering satisfying certain natural conditions. 

However, when a dataset can no longer be represented as a metric space, the interpretation of a clustering can be more difficult. In~\cite{Carlssonetal13AxiCons}, the authors extended their previous work and studied hierarchical clustering of \textit{dissimilarity networks}: pairs $(X, A)$ where $A\colon X \times X \to \mathbb{R}^+$ satisfies $A(x, x') = 0$ $\Leftrightarrow$ $x = x'$. Under reasonable conditions, the methods of hierarchical clustering they identified were well behaved and many results, such as stability with respect to a suitable notion of distance between networks, were proved.

We further generalize this line of work and study pairs $(X, w_X)$, where $X$ is a finite set and $w_X\colon X \times X \to \mathbb{R} \cup \{+\infty\}$ is any function. 
Any such object, called an \textit{extended network} (and whose category is denoted by $\netex$) can be regarded as a filtration of (directed) graphs. 
By studying endofunctors $\ff\colon \catdigraph \to \catdigraph$ (where $\catdigraph$ is the category of graphs) we are able to create many different clustering functors on $\catdigraph$. 
These endofunctors naturally give rise to endofunctors on $\netex$ whose output is a generalization of an ultrametric space (or, equivalently, of a dendrogram). 
This approach, although not as general as directly dealing with general endofunctors on $\netex$, turns out to be very useful and simplifies many proofs. 
For example, the notion of quasi-clustering from~\cite{Carlssonetal13AxiCons} can be obtained as in Definition~\ref{def:: quotient graph}, whenever $G$ is a transitive graph. 
Besides this, the study of endofunctors on $\catdigraph$ is interesting on its own.

We borrow the notion of representable methods from~\cite{Carlsson2016ExcisiveHC} and adapt it to the context of endofunctors on $\catdigraph$. 
Given a set of graphs $\Omega$ (the \textit{representers} or \textit{motifs}), we define $\ffo\colon \catdigraph \to \catdigraph$ as a functor that captures ``interesting shapes'' based on $\Omega$.  

From the point of view of applications, in the exploratory stage of the analysis of a given, experimentally measured network, applying a clustering procedure may be useful in identifying different structures within the given network. There is evidence  suggesting that biological networks are assembled of building blocks that do not appear to be random~\cite{holland1974statistical,holland1976local}. On the contrary, in the context of metabolic networks recent work~\cite{newman2003structure} indicates that only certain particular building blocks are the ones that are most often observed. This of course suggests that when looking at a given network, identifying these building blocks, or motifs, and invoking them in order to process the information conveyed by the network, is of chief importance.

\subsection*{Structure of the paper.} Section \ref{sec: background} covers the basic background used throughout the paper. In Section~\ref{sec: endofunctors} we introduce some endofunctors on $\catdigraph$, define a partial order on these and prove some useful properties of such endofunctors; we see that some information about an endofunctor $\ff\colon \catdigraph \to \catdigraph$ can be obtained by just applying it to a small graph with two vertices. 
Section~\ref{sec: representable} deals with representable endofunctors, which are symmetric as a direct consequence of the definition. In that section we show that many of the functors described in previous sections turned out to be representable. 
Section~\ref{sec: pointed representable} presents the notion of \textit{pointed} representable functor, which tries to remove the forced symmetry in the definition of usual representable functors. A composition rule is obtained, and it turns out that every endofunctor is pointed representable. 
In Section~\ref{sec: simplifications} we completely characterize the relation between sets of representers $\Omega_1$ and $\Omega_2$ so that the functors represented by them are equal, and show how to ``simplify'' a given family of representers.  
In Section~\ref{sec: relations to hc} we show how an endofunctor $\ff\colon \catdigraph \to \catdigraph$ induces a functor $\ffh\colon \netex \to \netex$, and explain how properties of $\ff$ such as symmetry and transitivity are ``transferred to'' $\ffh$. Thus, when $\ff$ is symmetric and transitive, the functor $\ffh$ can be seen as a hierarchical clustering method of extended networks. Moreover, if $\ff\colon \catdigraph \to \catdigraph$ is  a non trivial functor, we show a stability result: the distance between $\ffh(\bx)$ and $\ffh(\by)$ is bounded by the distance between $\bx$ and $\by$, for any pair of networks $\bx$ and $\by$. 

\section{Background and notation} \label{sec: background}

A (directed) graph is a pair $G = (V, E)$ where $V$ is a finite set and $E \subseteq V \times V$ is such that $E \supseteq \Delta(V) \coloneqq  \{(v,v), \, v \in V\}$. The elements of $V$ are called \textit{vertices}, the elements of $E$ are called \textit{arrows} (or \textit{edges}), and $|G|$ denotes the cardinality of $V$. Notice that with this definition directed graphs have all self loops. 
We denote an arrow $(v, v') \in E \setminus \Delta(E)$ by $v \xrightarrow{G} v'$ or, when the context is clear, simply by $v \to v'$. Also, $v \nrightarrow v'$ means $(v, v') \notin E$ and $v \leftrightarrow v'$ means both $v \to v'$ and $v' \to v$. We denote the fact that $v \xrightarrow{G} v'$ or $v = v'$ by $v \seta[G] v'$, or, more simply, by $v \seta[] v'$. In all illustrations below we will omit depicting self loops. 

To denote that $v$ is a vertex of $G$, we can write $v \in G$ or $v \in V$.

The category $\catdigraph$ of graphs has as objects all graphs and the morphisms are given by 
\[
\mor_{\catdigraph}(G, G') \coloneqq \big\{\phi \colon V \rightarrow V' \; | \; (\phi \times \phi)(E)\subseteq E' \big\},
\]
that is: $v \xrightarrow{G} v'$ implies $\phi(v) \seta[G'] \phi(v')$, for graphs $G = (V, E), G' = (V', E')$. We call any such map a \textit{graph map}, and denote an element $\phi \in \mor_\catdigraph(G, G')$ by $\phi\colon G \to G'$.

Given a graph map $\phi\colon G \to G'$, whenever we want to emphasize that $v_1', v_2'$ are in $\phi(G)$, we will write $\phi\colon G \to (G', v_1', v_2')$. If, even more, we write $\phi\colon (G, v_1, v_2) \to (G', v_1', v_2')$, this will mean that $\phi(v_1) = v_1'$ and $\phi(v_2) = v_2'$. 

The \emph{disjoint union of $G$ and $G'$}, denoted by $G \sqcup G'$, is the graph with vertex set $V \sqcup V'$ and arrow set $E \sqcup E'$. 

Two graphs $G$ and $G'$ are \textit{isomorphic} if there are graph maps $\phi\colon G \to G'$ and $\phi'\colon G' \to G$ such that $\phi \circ \phi'$ and $\phi' \circ \phi$ are the identity maps on $G'$ and $G$, respectively. Any such $\phi$ is called an \emph{isomorphism} between $G$ and $G'$. Thus, in this case, $G$ is obtained from $G'$ by a relabeling of the vertices. Whenever $G$ and $G'$ are isomorphic we may write $G\cong G'$.

When $G = (V, E)$ and  $G' = (V', E')$ are graphs with $V \subseteq V'$ and the inclusion map $i\colon G \to G'$ given by $i(v) = v$, $\forall v \in V$, is a graph map, we will denote this simply by $G \hookrightarrow G'$. In this setting, denote by $G' \cap V$ the graph $(V, E' \cap (V \times V))$.

Consider some interesting subcategories of $\catdigraph$:
\begin{itemize}
\item $\catdigraph^\mathrm{sym}$, whose objects are \textit{symmetric} graphs (that is: $v \xrightarrow{G} v'$ implies $v' \xrightarrow{G} v$).
\item $\catdigraph^\mathrm{trans}$, whose objects are \textit{transitive} graphs (that is: $v \xrightarrow{G} v'$ and $v' \xrightarrow{G} v''$ implies $v \seta[G] v''$).
\item $\catdigraph^\mathrm{clust} = \catdigraph^\mathrm{sym} \cap \catdigraph^\mathrm{trans}$, whose objects are symmetric and transitive graphs, which later we will regard as encoding a  \textit{clustering} of their sets of vertices.
\end{itemize}

Some standard notions of connectivity on graphs are the ones that follow. A pair of vertices $(v, v')$ on a graph $G=(V,E)$ is called:
\begin{itemize}
\item  \emph{strongly connected} if there is a sequence $v_1, \ldots, v_k \in V$, such that $v = v_1$, $v' = v_k$, and $v_i \to v_{i+1}$ for each $i$. We denote this by $v \leadsto v'$ in $G$. If, moreover, $v_k \xrightarrow{G} v_1$, the sequence $v_1, \ldots, v_k, v_1$ is a \textit{cycle} of size $k$. 
 
\item \emph{weakly connected} if there is a sequence $v_1, \ldots, v_k \in V$, such that $v = v_1$, $v' = v_k$, and $v_i \to v_{i+1}$ or $v_{i+1} \to v_i$, for each $i$. 
\end{itemize}

Given two sets $A, B \subseteq V \times V$, define
\[
A \otimes B \coloneqq \{(v, v') \in V \times V \; | \;
\exists v_1 \in V \mbox{s. t.} (v, v_1) \in A \mbox{ and } (v_1, v') \in B).
\]

Now, given a graph $G = (V, E)$, one defines $E^{(2)} \coloneqq E \otimes E$ and, in general, for $m \in \N$, 
\[
E^{(m+1)}\coloneqq E^{(m)} \otimes E.
\]

If $N$ is such that $E^{(m)} = E^{(N)}$ for all $m > N$, define $E^{(\infty)} \coloneqq E^{(N)}$. Notice that $(v, v') \in E^{(\infty)}$ $\Leftrightarrow$ $v \leadsto v'$.

Here are some important graphs that will appear several times in the text:
\begin{itemize}
\item $K_n$ is the \textit{complete graph} with $n$ vertices $a_1, \ldots, a_n$ and all possible arrows.
\item $D_n$ is the \textit{discrete graph} (or \textit{totally disconnected graph}) with $n$ vertices $a_1, \ldots, a_n$ and no arrows.
\item For a given graph $G = (V, E)$, we will denote by $K_{(G)}$ and $D_{(G)}$ the complete graph and the totally disconnected graph with vertex set $V$, respectively.
\item $L_n$ is the \textit{line graph} with $n$ vertices $a_1, \ldots, a_n$ and arrows $a_i \to a_{i+1}$, $i=1, \ldots, n-1$.
\item $T_n$ is the \textit{transitive line graph} with $n$ vertices $a_1, \ldots, a_n$ and arrows $a_i \to a_j$, for any $1 \leq i < j \leq n$.
\item $C_n$ is the \textit{cycle graph} obtained by adding the arrow $a_n \to a_1$ to $L_n$.
\end{itemize}

For any of the above graphs, its vertices will be called $a_1, \ldots, a_n$ unless stated otherwise.

\section{Endofunctors} \label{sec: endofunctors}
\begin{definition}
A functor $\ff\colon \catdigraph \rightarrow \catdigraph$ is called \textit{vertex preserving} if for any graph $G = (V, E) \in \catdigraph$, the graph $\ff(G)$ has vertex set $V$ and, if given any graph map $\phi\colon G \to G'$, we have $\ff(\phi) = \phi$. We will henceforth denote by $\funct(\catdigraph, \catdigraph)$ the collection of all such functors. \emph{All functors in this work are assumed to be vertex preserving.} Whenever we say that $\ff$ is an endofunctor, we mean that $\ff \in \funct(\catdigraph, \catdigraph)$.

That $\ff$ is a functor means that for every $G, G' \in \catdigraph$ and every graph map $\phi\colon G \to G'$, we have graphs $\ff(G)$ and $\ff(G')$, and the map $\ff(\phi)\colon \ff(G) \to \ff(G')$ is a graph map.

\[
\xymatrix{
G \ar[r]^\phi \ar@{=>}[d]^\ff & G' \ar@{=>}[d]^\ff \\
\ff(G) \ar[r]^{\ff(\phi)} & \ff(G')
}
\] 

We will regard two endofunctors $\ff_1, \ff_2$ as equal when $\ff_1(G) = \ff_2(G)$, for all $G \in \catdigraph$.

We say that $\ff \in \funct(\catdigraph, \catdigraph)$ is \textit{symmetric} if $\ff (\catdigraph) \subseteq \catdigraph^\mathrm{sym}$, and that $\ff$ is \textit{transitive} (resp.\ \textit{clustering}) if $\ff(\catdigraph) \subseteq \catdigraph^\mathrm{trans}$ (resp.\ $\ff(\catdigraph) \subseteq \catdigraph^\mathrm{clust}$). 
\end{definition}

\begin{definition}
Given two endofunctors $\ff_1$ and $\ff_2$, define $\ff_1 \cup \ff_2(G) = (V, E_1 \cup E_2)$, where $G = (V, E)$,  $\ff_1(G) = (V, E_1)$ and $\ff_2(G) = (V, E_2)$.
\end{definition}

\begin{example} Here are some endofunctors that will be used in the sequel:
\begin{itemize}
\item \textbf{Full disconnection}: $\ff^\mathrm{disc}$ taking $G = (V,E)$ to the totally disconnected graph $D_{(G)}$, that is, $\ff(G) = (V, \Delta(V))$.

\item \textbf{Connected component}: $\ff^\mathrm{conn}$, where $v \xrightarrow{\ff^\mathrm{conn}(G)} v'$ if $v$ and $v$ are weakly connected.

\item \textbf{Full completion}: $\ff^\mathrm{comp}$ taking $G = (V,E)$ to complete graph $K_{(G)} = (V, V \times V)$.

\item \textbf{Reversion}: $\ff^\mathrm{rev}$ taking $(V,E)$ to $(V, E^\mathrm{rev})$, where $E^\mathrm{rev} = \{(v', v) \; | \; (v, v') \in E\}.$ 

\item \textbf{Lower symmetrization}: $\ff^\mathrm{ls}$ taking $(V,E)$ to $(V,E\cap E^\mathrm{rev})$.

\item \textbf{Identity}: $\ff^\mathrm{id}$ the identity endofunctor.

\item \textbf{Upper symmetrization}: $\ff^\mathrm{us}$ taking $(V,E)$ to $(V,E \cup E^\mathrm{rev})$.

\item $\mathbf{m}$-\textbf{Power}: for $m \in \N$, $\ff^{[m]}$ taking $(V,E)$ to $(V,E^{(m)})$.

\item \textbf{Transitive closure}: $\ff^\mathrm{tc}$ taking $G = (V,E)$ to $(V,E^{(\infty)})$, that is: $v \xrightarrow{\ff^\mathrm{tc}(G)} v'$ if $v \leadsto v'$ in $G$.

\end{itemize}
\end{example}

\begin{remark}
The ``inversion map'' given by $(V,E) \mapsto (V,E^\mathrm{inv})$, where $E^\mathrm{inv} = \Delta(V) \cup (V \times V \backslash E)$, is not a functor. To see why it fails, just consider the inclusion $D_2 \hookrightarrow K_2$.
\end{remark} 

\begin{definition} \label{def: partial order}
Define the following partial order on $\funct(\catdigraph, \catdigraph)$: $\ff_1 \leq \ff_2$ $\Leftrightarrow$ $\ff_1(G) \hookrightarrow \ff_2(G)$ for all $G \in \catdigraph$.
\end{definition}

\begin{definition} An endofunctor $\ff$ is called \emph{arrow increasing} if for any $G \in \catdigraph$, $v \xrightarrow{G} v'$ implies $v \xrightarrow{\ff(G)} v'$, that is, $G \hookrightarrow \ff(G)$. According to Definition~\ref{def: partial order}, this is equivalent to $\ff^\mathrm{id} \leq \ff$. Analogously, we say that $\ff$ is called \textit{arrow decreasing} if $\ff \leq \ff^\mathrm{id}$. 
\end{definition}

\begin{remark}
It is clear that $\ff^\mathrm{ls} \leq \ff^\mathrm{id} \leq \ff^\mathrm{us}$. Notice that if $\ff$ is arrow increasing, then $\ff(L_2) \in \{L_2, K_2\}$. This condition is also sufficient, as we prove next.
\end{remark}

\begin{proposition}\label{theo:increasing}
Let $\ff$ be an endofunctor. Then, $\ff$ is arrow increasing $\Leftrightarrow$ $\ff(L_2) \in \{L_2, K_2\}$.
\end{proposition}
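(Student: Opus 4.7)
The forward direction is essentially the content of the preceding remark: if $\ff$ is arrow increasing, then applying the hypothesis $G\hookrightarrow\ff(G)$ to $G=L_2$ forces $\ff(L_2)$ to be a graph on the vertex set $\{a_1,a_2\}$ containing the arrow $a_1\to a_2$, and the only such graphs are $L_2$ itself and $K_2$.

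For the converse, the plan is to exploit the fact that $L_2$ is a ``universal witness'' for a single arrow: any arrow $v\xrightarrow{G} v'$ in any graph $G$ arises as the image of $a_1\to a_2$ under a graph map $L_2\to G$. Concretely, assume $\ff(L_2)\in\{L_2,K_2\}$ and fix an arbitrary arrow $v\xrightarrow{G} v'$ in an arbitrary graph $G=(V,E)$. Define $\phi\colon L_2\to G$ by $\phi(a_1)=v$ and $\phi(a_2)=v'$; this is a graph map since the unique non-loop arrow $a_1\to a_2$ of $L_2$ is sent into $E$.

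Now I would apply the functor and use the vertex-preserving hypothesis. Functoriality yields a graph map $\ff(\phi)\colon \ff(L_2)\to\ff(G)$, and because $\ff$ is vertex preserving, $\ff(\phi)=\phi$ as a map of vertex sets. In either case, $\ff(L_2)$ still contains the arrow $a_1\to a_2$, so its image must lie in the edge set of $\ff(G)$, giving $\phi(a_1)\seta[\ff(G)]\phi(a_2)$, i.e.\ $v\seta[\ff(G)]v'$. Since the notation $v\to v'$ in $G$ excludes self-loops (so $v\neq v'$), this actually gives $v\xrightarrow{\ff(G)} v'$, establishing $G\hookrightarrow\ff(G)$.

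I do not anticipate any real obstacle here; the whole argument is a clean application of functoriality together with the vertex-preserving axiom. The only mild point that must be handled correctly is keeping track of the distinction between the relation $\seta$ (which allows $v=v'$) and the strict arrow relation $\to$, using the fact that the two vertices $v,v'$ involved in the original arrow are distinct so that the conclusion can be strengthened from $\seta[\ff(G)]$ to $\xrightarrow{\ff(G)}$.
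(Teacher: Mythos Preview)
Your proof is correct and takes essentially the same approach as the paper: both construct the graph map $\phi\colon (L_2,a_1,a_2)\to(G,v,v')$ witnessing an arbitrary arrow, apply functoriality, and use that $\ff(L_2)\in\{L_2,K_2\}$ contains the arrow $a_1\to a_2$ to conclude $v\xrightarrow{\ff(G)}v'$. Your write-up is in fact slightly more careful than the paper's in distinguishing $\seta$ from $\to$ and noting why $v\neq v'$ allows the strengthening.
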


\begin{proof}
Let $G$ be a graph and suppose $v \xrightarrow{G} v'$. The $\phi\colon (L_2, a_1, a_2) \to (G, v, v')$ is a graph map. By functoriality, $\phi\colon (\ff(L_2), a_1, a_2) \to (\ff(G), v, v')$ is a graph map. If $\ff(L_2) = L_2$ or $K_2$, then $v \xrightarrow{\ff(G)} v'$. Thus, $G \hookrightarrow \ff(G)$.
\end{proof}

\begin{remark} \label{rem: ffl2 and k2}
Even when $\ff(L_2) = K_2$, we cannot ensure that $\ff$ is symmetric. Indeed, let $\ff = \ff^\mathrm{us} \cup \ff^\mathrm{tc}$. Then $\ff(L_2) = K_2$ but $\ff(L_3)$ is not symmetric. See Figure~\ref{fig: ffl2 and k2}.
\end{remark}

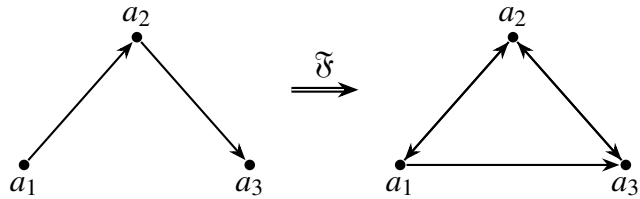
\begin{figure}[ht]
\centering
\begin{tikzpicture}[>=Stealth,thick]
\node[fill=black, circle, minimum size=4pt, inner sep=0pt] (a) at (0,0) {}; 
\node[fill=black, circle, minimum size=4pt, inner sep=0pt] (c) at (3,0) {}; 
\node[fill=black, circle, minimum size=4pt, inner sep=0pt] (b) at (1.5,1.7) {};
\node (A) at (3.4,1) {};
\node (B) at (4.6,1) {};
\node[fill=black, circle, minimum size=4pt, inner sep=0pt] (a1) at (0+5,0) {}; 
\node[fill=black, circle, minimum size=4pt, inner sep=0pt] (c1) at (3+5,0) {}; 
\node[fill=black, circle, minimum size=4pt, inner sep=0pt] (b1) at (1.5+5,1.7) {};
\draw[->] (a) -- (b);
\draw[->] (b) -- (c);
\node[below] at (a){$a_1$};
\node[above] at (b){$a_2$};
\node[below] at (c){$a_3$};

\draw[->] (a1) -- (b1);
\draw[->] (b1) -- (a1);
\draw[->] (b1) -- (c1);
\draw[->] (c1) -- (b1);
\draw[->] (a1) -- (c1);

\node[below] at (a1){$a_1$};
\node[above] at (b1){$a_2$};
\node[below] at (c1){$a_3$};

\draw[double, ->]  (A) -- node [above] {$\ff$} (B);
\end{tikzpicture} 
\caption{An example where $\ff(L_2) = K_2$ but $\ff$ is not symmetric. See Remark~\ref{rem: ffl2 and k2}} \label{fig: ffl2 and k2}
\end{figure}

Similarly to Proposition~\ref{theo:increasing}, we can obtain some information about $\ff$ by applying it to graphs with just two vertices, as described in the following proposition.

\begin{proposition} \label{prop: functor-on-two-point-graphs}
Let $\ff$ be an endofunctor. Then:
\begin{enumerate}
\item $\ff(D_2) \neq D_2$ $\Leftrightarrow$ $\ff = \ff^\mathrm{comp}$.
\item $\ff(K_2) \neq K_2$ $\Leftrightarrow$ $\ff = \ff^\mathrm{disc}$.
\end{enumerate}  
\end{proposition}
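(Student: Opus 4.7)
The plan is to handle both parts with the same two-stage strategy. In each case the reverse implication is immediate from the definitions of $\ff^\mathrm{comp}$ and $\ff^\mathrm{disc}$, so only the forward direction requires work. Stage one: pin down $\ff$ on the two-vertex graph (either $D_2$ or $K_2$) by exploiting the swap automorphism $\sigma\colon a_1\leftrightarrow a_2$ together with vertex preservation. Stage two: propagate the resulting value to every graph $G$ via functoriality applied to a carefully chosen graph map connecting the two-vertex graph and $G$.

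For (1), I would first note that $\sigma\colon D_2\to D_2$ is a graph map (vacuously, since $D_2$ has only self-loops), and vertex preservation forces $\ff(\sigma)=\sigma\colon \ff(D_2)\to\ff(D_2)$ to also be a graph map. This makes $\ff(D_2)$ symmetric, and since by hypothesis $\ff(D_2)\neq D_2$ the only remaining option on two vertices is $\ff(D_2)=K_2$. Next, for any $G=(V,E)$ and any distinct $v,v'\in V$, the assignment $a_1\mapsto v$, $a_2\mapsto v'$ gives a graph map $\phi\colon D_2\to G$ (again vacuously), so by functoriality $\phi\colon \ff(D_2)=K_2\to\ff(G)$ is a graph map. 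The arrow $a_1\to a_2$ in $K_2$ then yields $v\to v'$ in $\ff(G)$, giving $\ff(G)=K_{(G)}$ and hence $\ff=\ff^\mathrm{comp}$.

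For (2), the same swap argument applied to $K_2$ forces $\ff(K_2)$ to be symmetric, and since $\ff(K_2)\neq K_2$ this pins it down as $\ff(K_2)=D_2$. The key observation I would then use is that because $K_2$ is complete, \emph{every} set map $G\to K_2$ is automatically a graph map. So, fixing an arbitrary vertex $v\in G$, the map $\psi_v\colon G\to K_2$ sending $v\mapsto a_1$ and $u\mapsto a_2$ for $u\neq v$ is a graph map, and functoriality produces $\psi_v\colon\ff(G)\to\ff(K_2)=D_2$. Any non-loop arrow $u_1\to u_2$ in $\ff(G)$ must satisfy $\psi_v(u_1)\seta[D_2]\psi_v(u_2)$, forcing $\psi_v(u_1)=\psi_v(u_2)$ and hence $u_1,u_2\neq v$. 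Letting $v$ range over all vertices of $G$ rules out every non-loop arrow, so $\ff(G)=D_{(G)}$ and $\ff=\ff^\mathrm{disc}$. The main subtlety—and the main difference between the two parts—is that (1) requires us to \emph{insert} arrows into $\ff(G)$ using maps \emph{into} $G$, whereas (2) requires us to \emph{forbid} arrows in $\ff(G)$ using maps \emph{out of} $G$; the completeness of $K_2$, which makes every set map into it a graph map, is precisely what makes the second propagation step work.
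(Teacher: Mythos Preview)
Your proposal is correct and follows essentially the same approach as the paper: the swap automorphism to force $\ff(D_2)=K_2$ (resp.\ $\ff(K_2)=D_2$), then the map $D_2\to G$ sending $a_1,a_2$ to $v,v'$ for part (1), and the map $G\to K_2$ isolating a single vertex for part (2). The only cosmetic difference is that for (2) the paper argues by contradiction directly from a hypothetical arrow $v\to v'$ in $\ff(G)$ using the single map $\psi_v$, whereas you quantify over all $v$ first and then specialize; the content is the same.
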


\begin{proof}
(1) First notice that if $\ff(D_2) \neq D_2$, then $\ff(D_2) = K_2$. Indeed, suppose $\ff(D_2)$ has just one arrow. Let $p$ be the graph map $p\colon (D_2, a_1, a_2) \to (D_2, a_2, a_1)$. By functoriality, $\ff(D_2)$ must have both arrows $a_1 \to a_2$ and $a_2 \to a_1$.

Now let $G = (V, E) \in \catdigraph$ be a graph with $|G| \geq 2$, and let $v, v' \in G$. Consider the graph map $\phi\colon (D_2, a_1, a_2) \to (G, v, v')$. Applying $\ff$, we obtain $\phi\colon (\ff(D_2), a_1, a_2) \to (\ff(G), v, v')$. Since $\ff(D_2) = K_2$, we have $v \xleftrightarrow{\ff(G)} v'$. Hence, $\ff(G) = K_{(G)}$. 

(2) With the same argument used in the previous item, we can show that if $\ff(K_2) \neq K_2$, then $\ff(K_2) = D_2$. Now suppose there is $G \in \catdigraph$ such that $\ff(G) \neq D_{(G)}$. Let $v, v' \in G$ such that $v \xrightarrow{\ff(G)} v'$. Consider the graph map $\phi\colon (G, v, v') \to (K_2, a_1, a_2)$ given by $\phi(v) = a_1$ and $\phi(x) = a_2$, for any $x \neq v$. By functoriality, we have a graph map $\phi\colon (\ff(G), v, v') \to (D_2, a_1, a_2)$. But then we cannot have $v \xrightarrow{\ff(G)} v'$. This contradiction finishes the proof.
\end{proof}

\begin{corollary} \label{coro:image of complete is complete}
If $\ff \neq \ff^\mathrm{disc}$, then $\ff(K_n) = K_n$, for any $n$.
\end{corollary}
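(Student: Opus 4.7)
The plan is to bootstrap from the two-vertex case handled in Proposition~\ref{prop: functor-on-two-point-graphs}(2) up to arbitrary $n$ via functoriality applied to injections $K_2 \hookrightarrow K_n$.

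The first step is to invoke the contrapositive of Proposition~\ref{prop: functor-on-two-point-graphs}(2): since $\ff \neq \ff^\mathrm{disc}$, we must have $\ff(K_2) = K_2$. In particular, both arrows $a_1 \to a_2$ and $a_2 \to a_1$ survive in $\ff(K_2)$.

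The second step is to transport these arrows into $\ff(K_n)$ for $n \geq 2$. Given any ordered pair of distinct vertices $a_i, a_j$ of $K_n$, define $\psi\colon (K_2, a_1, a_2) \to (K_n, a_i, a_j)$ by $\psi(a_1) = a_i$ and $\psi(a_2) = a_j$. Since $K_n$ is complete, this is automatically a graph map. Applying $\ff$, functoriality (together with the fact that $\ff$ is vertex preserving, so $\ff(\psi) = \psi$) yields a graph map $\psi\colon (\ff(K_2), a_1, a_2) \to (\ff(K_n), a_i, a_j)$. Because $\ff(K_2) = K_2$ contains the arrow $a_1 \to a_2$ and $a_i \neq a_j$, we conclude $a_i \xrightarrow{\ff(K_n)} a_j$.

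Since $i$ and $j$ were arbitrary distinct indices, every possible non-loop arrow is present in $\ff(K_n)$, hence $\ff(K_n) = K_n$. The case $n = 1$ is immediate because $\ff$ is vertex preserving and the only graph on one vertex is $K_1$ itself. There is no real obstacle here; the only thing worth double-checking is that the map $\psi\colon K_2 \to K_n$ is a legitimate graph map before applying $\ff$, which is automatic from completeness of $K_n$.
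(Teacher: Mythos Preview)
Your proof is correct and follows essentially the same approach as the paper: invoke Proposition~\ref{prop: functor-on-two-point-graphs}(2) to obtain $\ff(K_2)=K_2$, then push the arrows of $K_2$ into $\ff(K_n)$ via the graph map $K_2\to K_n$ hitting a chosen pair of vertices. The only cosmetic difference is that the paper records both directions at once as $a_i \xleftrightarrow{\ff(K_n)} a_j$, whereas you deduce one arrow and then vary the ordered pair; the content is identical.
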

\begin{proof}
Given any $a_i, a_j \in K_n$, we can consider the graph map $\phi\colon (K_2, a_1, a_2) \to (K_n, a_i, a_j)$. By functoriality, $\phi\colon (\ff(K_2), a_1, a_2) \to (\ff(K_n), a_i, a_j)$ is a graph map. Since $\ff(K_2) = K_2$ by Proposition~\ref{prop: functor-on-two-point-graphs}, we have $a_i \xleftrightarrow{\ff(K_n)} a_j$.
\end{proof}

The next proposition is a simple characterization of the transitive closure functor.

\begin{proposition} \label{prop:tc}
Let $\ff\colon \catdigraph\rightarrow\catdigraph^{\mathrm{trans}}$ be a functor such that $\ff(D_2) = D_2$ and $\ff(L_2) = L_2$.
Then, $\ff = \ff^\mathrm{tc}.$
\end{proposition}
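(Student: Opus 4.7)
The plan is to prove both inclusions $\ff \leq \ff^{\mathrm{tc}}$ and $\ff^{\mathrm{tc}} \leq \ff$, since equality of endofunctors means equality on every graph.

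For $\ff^{\mathrm{tc}} \leq \ff$, the input is easy. Because $\ff(L_2) = L_2$, Proposition~\ref{theo:increasing} says that $\ff$ is arrow increasing, so $G \hookrightarrow \ff(G)$ for every $G$. Given any $v,v'\in V$ with $v \leadsto v'$ in $G$, pick a witnessing sequence $v = v_1 \to v_2 \to \cdots \to v_k = v'$. Each consecutive arrow lies in $\ff(G)$, and since $\ff(G)$ is transitive we conclude $v \seta[\ff(G)] v'$. Thus every arrow of $\ff^{\mathrm{tc}}(G)$ appears in $\ff(G)$.

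For the reverse inclusion, the main work is in producing, whenever $v \not\leadsto v'$ in $G$, a graph map that certifies the absence of $v \to v'$ in $\ff(G)$. I would take $A \coloneqq \{v\}\cup\{x\in V : v \leadsto x \text{ in } G\}$, the set of vertices reachable from $v$. Since $v\not\leadsto v'$ by assumption, we have $v' \notin A$. The defining property of $A$ is that any arrow $x \to y$ in $G$ with $x\in A$ forces $y \in A$: in other words, no arrow in $G$ exits $A$. Define $\phi\colon G \to L_2$ by $\phi(x) = a_2$ if $x\in A$ and $\phi(x)=a_1$ otherwise; in particular $\phi(v) = a_2$ and $\phi(v') = a_1$. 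Checking that $\phi$ is a graph map amounts to noting that an arrow $x\to y$ in $G$ cannot go from $A$ to $V\setminus A$, so the only non-trivial case $\phi(x) \neq \phi(y)$ arises from $x\notin A$, $y\in A$, which lands on the legitimate arrow $a_1 \to a_2$ of $L_2$.

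Now apply $\ff$: by functoriality and $\ff(L_2) = L_2$, the map $\phi\colon \ff(G)\to L_2$ is still a graph map. If we had $v \xrightarrow{\ff(G)} v'$ with $v\neq v'$, then $\phi$ would have to send this to an arrow $a_2 \to a_1$ in $L_2$, which does not exist. This contradiction shows $\ff(G) \hookrightarrow \ff^{\mathrm{tc}}(G)$, completing the proof. The creative step, and hence the main obstacle, is realizing that the forward-reachable set from $v$ gives exactly the right partition to transport the nonexistence of a path into the nonexistence of an arrow in the target $L_2$; the hypothesis $\ff(D_2) = D_2$ is not directly invoked here, but it guarantees that $\ff$ is well-behaved on disconnected pieces, consistent with the above construction.
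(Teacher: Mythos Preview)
Your proof is correct and follows the same overall architecture as the paper: establish both inclusions, with the nontrivial one handled by building a graph map $G\to L_2$ whose fibers separate $v$ from $v'$, then invoking functoriality together with $\ff(L_2)=L_2$.

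The one genuine difference is how that separating partition is produced. The paper appeals to Lemma~\ref{lemma:partition}, a general statement about transitive graphs (applied to $\ff^{\mathrm{tc}}(G)$), whose proof is an inductive growing argument. You instead write down the partition explicitly as the forward-reachable set $A$ from $v$ in $G$ and verify directly that no arrow exits $A$. Your route is more elementary and self-contained, since it bypasses the auxiliary lemma entirely; the paper's route has the virtue of isolating a reusable fact about transitive graphs. Your closing remark about $\ff(D_2)=D_2$ is also accurate: that hypothesis is not used in either argument, and indeed it is implied by $\ff(L_2)=L_2$ via Proposition~\ref{prop: functor-on-two-point-graphs}.
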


\begin{lemma} \label{lemma:partition}
Let $G = (V, E) \in \catdigraph^\mathrm{trans}$. Suppose there exists a pair $(v,v') \notin E$. Then there exists a partition $\{C, C'\}$ of $V$ into two non-empty sets with $v\in C$ and $v'\in C'$ such that $(c,c')\notin E$ for all $c \in C$ and $c' \in C'$.
\end{lemma}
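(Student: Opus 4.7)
The plan is to construct the partition explicitly using the vertex $v'$ as a ``target.'' Specifically, I would set
\[
C \coloneqq \{x \in V : (x, v') \notin E\}, \qquad C' \coloneqq V \setminus C = \{x \in V : (x, v') \in E\}.
\]
This clearly gives a partition of $V$, so the real content is to verify (i) that both parts are non-empty and contain the right distinguished vertex, and (ii) that no arrow goes from $C$ to $C'$.

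For (i), the hypothesis $(v,v') \notin E$ puts $v \in C$, so $C \neq \emptyset$. Since every graph in $\catdigraph$ contains the diagonal, $(v',v') \in E$, so $v' \in C'$, and in particular $C' \neq \emptyset$. Note that this also records, for later use, that $x \in C$ forces $x \neq v'$.

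For (ii), suppose toward contradiction that there exist $c \in C$ and $c' \in C'$ with $(c, c') \in E$. Then $(c, c') \in E$ and $(c', v') \in E$ (by definition of $C'$), so transitivity of $G$ yields $c \seta[G] v'$, i.e.\ either $(c, v') \in E$ or $c = v'$. The first option contradicts $c \in C$, and the second contradicts the observation from (i) that elements of $C$ are distinct from $v'$. Hence no arrow goes from $C$ to $C'$, and $\{C, C'\}$ is the desired partition.

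There is no real obstacle here; the only thing to be careful about is the convention that transitivity is formulated modulo the diagonal (so the conclusion of transitivity is $\seta$ rather than $\to$), which is exactly why one must record that $v' \notin C$ before running the final contradiction.
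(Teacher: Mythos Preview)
Your proof is correct and cleaner than the paper's. The paper argues by contradiction: assuming no such partition exists, it builds an increasing chain $C_1=\{v\}\subset C_2\subset\cdots$ by repeatedly finding some arrow crossing the current cut and using transitivity to conclude $v\to v_j$ at every stage; since $V$ is finite this eventually forces $v\to v'$, a contradiction. Your approach instead writes down the partition explicitly as $C=\{x:(x,v')\notin E\}$, $C'=V\setminus C$, and verifies the required properties in two lines. Both hinge on the same single use of transitivity, but your direct construction avoids the inductive bookkeeping entirely and makes the role of $v'$ (rather than $v$) as the anchor transparent. The only minor remark is that, because $\Delta(V)\subseteq E$, the disjunction ``$(c,v')\in E$ or $c=v'$'' already collapses to ``$(c,v')\in E$'', so the separate treatment of the case $c=v'$ is not strictly needed---but being explicit about it does no harm.
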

\begin{proof}[Proof of Proposition~\ref{prop:tc}]
Pick any graph $G = (V, E)$. Notice that by Theorem~\ref{theo:increasing}, $\ff$ is arrow increasing.

Assume $v \xrightarrow{\ff^\mathrm{tc}(G)} v'$. Then $v \leadsto v'$ in $G$, and since $\ff$ is arrow increasing, $v \leadsto v'$ in $\ff(G)$. Thus, $\ff^\mathrm{tc} \leq \ff$.

Now assume that $v \nrightarrow v'$ in $\ff^\mathrm{tc}(G)$. 
By Lemma~\ref{lemma:partition} we obtain a partition $\{C, C'\}$ of $V$ with $v \in C$, $v' \in C'$ and the property that $c \nrightarrow c'$ in $\ff^\mathrm{tc}(G)$ for all $c \in C$ and $c'\in C$, which implies that $c \nrightarrow c'$ in $G$, since $G \hookrightarrow \ff^\mathrm{tc}(G)$.

Consider the graph map $\phi\colon (G, v, v') \rightarrow (L_2, a_2, a_1)$ such that $\phi(C) = a_2$ and $\phi(C') = a_1$. Applying $\ff$, we obtain the graph map $\phi\colon (\ff(G), v, v') \to (L_2, a_2, a_1)$. Thus, $v \nrightarrow v'$ in $\ff(G)$. Hence, $\ff(G) = \ff^\mathrm{tc}(G)$. 
\end{proof}

\begin{proof}[Proof of Lemma~\ref{lemma:partition}]
Assume the claim is not true. Then, for any partition $\{C,C'\}$ of $V$ with $v\in C$, $v'\in C'$ there exists some $c\in C$ and $c'\in C'$ with $c \xrightarrow{G} c'$.

Consider first $C_1 = \{ v \}$ and $C_1' = V \backslash C_1$. Let $v_1 \in C_1'$ be such that $v \xrightarrow{G} v_1$.

Now, consider $C_2 = \{ v, v_1 \}$ and $C_2' = V \backslash C_2$. One obtains $v_2 \in C_2'$ such that either $v \xrightarrow{G} v_2$ or $v_1 \xrightarrow{G} v_2$. Since $v \xrightarrow{G} v_1$ and $G$ is transitive, in either case $v \xrightarrow{G} v_2$. 

Recursively define $C_j = \{v, v_1, \ldots, v_{j-1} \}$ and $C_j' = V \backslash C_j$ for $j \geq 1$. At each step we obtain $v_j \in C_j'$ such that $v \xrightarrow{G} v_j$. If $v_j=v'$ for some $j$ we have a contradiction. 

Furthermore, since at each step $v_j \notin C_j$, the process must end when $C_j'$ contains only one element. Thus, at some step in the process we must have $v_j = v'$. 
\end{proof}

\begin{corollary}
If $\ff$ is transitive and arrow increasing, then $\ff^\mathrm{tc} \leq \ff$.
\end{corollary}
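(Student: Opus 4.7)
The plan is to show, for every graph $G = (V, E)$, that $\ff^\mathrm{tc}(G) \hookrightarrow \ff(G)$. Self-loops lie in every graph by convention, so it suffices to handle a non-trivial arrow $v \xrightarrow{\ff^\mathrm{tc}(G)} v'$ with $v \neq v'$. Such an arrow records the existence of a directed path $v = v_1 \to v_2 \to \cdots \to v_k = v'$ in $G$, by definition of $\ff^\mathrm{tc}$.

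From here I would carry out two steps. First, use that $\ff$ is arrow increasing to lift each step of the path into $\ff(G)$: every $v_i \xrightarrow{G} v_{i+1}$ becomes $v_i \xrightarrow{\ff(G)} v_{i+1}$. Second, collapse the lifted path in $\ff(G)$ by iterated transitivity. Concretely, I would prove by induction on $i$ that $v_1 \seta[\ff(G)] v_i$, with the base case $i = 1$ being trivial and the inductive step using $\ff(G) \in \catdigraph^\mathrm{trans}$ applied to $v_1 \seta[\ff(G)] v_i$ together with $v_i \xrightarrow{\ff(G)} v_{i+1}$.

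The one subtlety is that the transitivity definition in the paper returns the relation $\seta$ (arrow or equality) rather than a strict arrow, so the inductive hypothesis must be stated in terms of $\seta$. In the inductive step I would split cases: if $v_1 = v_i$, then $v_i \xrightarrow{\ff(G)} v_{i+1}$ already gives $v_1 \xrightarrow{\ff(G)} v_{i+1}$; if instead $v_1 \xrightarrow{\ff(G)} v_i$, then transitivity of $\ff(G)$ applied to $v_1 \to v_i \to v_{i+1}$ yields $v_1 \seta[\ff(G)] v_{i+1}$. Taking $i = k$ gives $v \seta[\ff(G)] v'$; since $v \neq v'$, this must be the genuine arrow $v \xrightarrow{\ff(G)} v'$, completing the proof. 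No serious obstacle is expected here — the only care needed is bookkeeping around the arrow-or-equality relation inherent in the paper's definition of transitivity.
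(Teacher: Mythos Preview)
Your proof is correct and follows essentially the same route as the paper. The paper compresses the whole argument into a single chain of implications---$v \xrightarrow{\ff^\mathrm{tc}(G)} v' \Leftrightarrow v \leadsto v'$ in $G \Rightarrow v \leadsto v'$ in $\ff(G) \Rightarrow v \xrightarrow{\ff(G)} v'$---leaving the induction over the path and the arrow-versus-equality bookkeeping implicit, whereas you spell these out explicitly; the underlying idea is identical.
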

\begin{proof}
Given $G \in \catdigraph$, 
\[
v \xrightarrow{\ff^\mathrm{tc}(G)} v' \Leftrightarrow v \leadsto v' \; \text{in $G$} \; \Rightarrow v \leadsto v' \; \text{in $\ff(G)$} \Rightarrow v \xrightarrow{\ff(G)} v'. \qedhere
\]
\end{proof}

\begin{remark}
Not all functors $\ff$ satisfy $\ff(\catdigraph^\mathrm{trans}) \subseteq \catdigraph^\mathrm{trans}$. Take, for example, $\ff^\mathrm{us}$ and $G = \bullet \leftarrow \bullet \rightarrow \bullet$. Then $\ff^\mathrm{us}(G) = \bullet \leftrightarrow \bullet \leftrightarrow \bullet$ which is not transitive.

It turns out that in the case of $\catdigraph^\mathrm{sym}$ we indeed have $\ff(\catdigraph^\mathrm{sym}) \subset \catdigraph^\mathrm{sym}$ for any $\ff \in \funct(\catdigraph, \catdigraph)$. To prove this, we need the following lemma.
\end{remark}

\begin{lemma} \label{lemma: map sym graph}
For any $G = (V, E) \in \catdigraph^\mathrm{sym}$ and $v, v' \in G$, there is a graph map $\phi\colon (G, v, v') \to (G, v', v)$.
\end{lemma}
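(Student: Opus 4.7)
The plan is to construct $\phi$ explicitly by cases on whether $v$ and $v'$ lie in the same connected component of $G$. Because $G$ is symmetric, weak and strong connectedness coincide, and no arrow of $G$ can cross between distinct components; I write $C$ for the connected component of $v$.

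If $v' \notin C$, I would set $\phi(u) = v'$ for $u \in C$ and $\phi(u) = v$ for $u \notin C$. Every arrow $u \to u'$ has its two endpoints in a single component, so its image under $\phi$ is a self-loop at $v'$ or at $v$; hence $\phi$ is a graph map, and by construction $\phi(v) = v'$ and $\phi(v') = v$. If instead $v' \in C$, let $k = d(v, v')$ be the directed distance in $G$ and fix a shortest path $v = u_0, u_1, \dots, u_k = v'$; by symmetry each pair $(u_i, u_{i+1})$ is joined by arrows in both directions. I would then define
\[
\phi(u) = u_{\max\{0,\, k - d(u, v)\}} \quad \text{for } u \in C, \qquad \phi(u) = v \quad \text{for } u \notin C.
\]
Plugging in the two basepoints gives $\phi(v) = u_k = v'$ and $\phi(v') = u_0 = v$; the degenerate subcase $v = v'$ (where $k = 0$) is absorbed automatically, since then $\phi \equiv v$.

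The remaining task is to verify $\phi$ is a graph map. For any arrow $u \to u'$, both endpoints lie in a common component of $G$, so either both are outside $C$ (forcing $\phi(u) = \phi(u') = v$) or both are inside $C$. In the latter case the standard inequality $|d(u, v) - d(u', v)| \leq 1$, valid because $G$ is symmetric, forces the clamped indices $\max\{0, k - d(u, v)\}$ and $\max\{0, k - d(u', v)\}$ to differ by at most one. Hence $\phi(u)$ and $\phi(u')$ are either equal or are consecutive vertices of the geodesic $u_0, \dots, u_k$, which are joined by arrows in $G$ in either direction. The one subtlety I expect to be the main obstacle is handled by the $\max\{0,\,\cdot\,\}$ clamp: without it, vertices of $C$ lying farther from $v$ than $v'$ does would produce negative indices; the clamp sends all such vertices uniformly to $u_0 = v$, which is consistent with the endpoints of the path and preserves the graph map property.
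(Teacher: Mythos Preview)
Your proof is correct and follows essentially the same approach as the paper: split on whether $v,v'$ share a component, collapse the other components to a single vertex, and in the interesting case retract the component onto a geodesic from $v$ to $v'$ using the distance to $v$, then reverse the geodesic. The paper factors this last step as two maps (a retraction $r(x)=x_{\min\{d(x,v),n\}}$ followed by the reversal $x_i\mapsto x_{n-i}$), while you write down the composite $u\mapsto u_{\max\{0,\,k-d(u,v)\}}$ directly; these are the same map since $n-\min\{d,n\}=\max\{0,n-d\}$.
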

\begin{proof}
Suppose $G$ has connected components $C_1, \ldots, C_k$ (recall that for symmetric graphs both notions of connectivity coincides).

If $v$ and $v'$ are in different connected components, say $v \in C_1$ and $v' \in C_2$, define $\phi|_{C_1} \equiv v'$ and $\phi|_{C_i} \equiv v$, $i \neq 1$, and we are done.

If $v$ and $v'$ are in the same component, say $v, v' \in C_1$, we define $\phi|_{C_i} \equiv v$, $i \neq 1$, and the problem reduces to defining $\phi$ on $C_1$. Hence, we can suppose that $G$ is connected.

Let $H = (V', E')$ be a connected subgraph of $G$ containing $v$ and $v'$, with the minimum number of vertices possible. It is clear that $H$ is isomorphic to $\ff^\mathrm{us}(L_{n+1})$ for some $n \geq 1$. Let $V' = \{x_0, \ldots, x_n\}$ with $x_0 = v$, $x_n = v'$ and $x_i \xleftrightarrow{H } x_{i+1}$, $i = 0, \ldots, n-1$.

For any $x, y \in V$, define $d(x,y)$ as the number of arrows in the shortest path connecting $x$ to $y$. Let $r\colon G \to H$ be defined by $r(x) = x_k$ where $k = \min\{ d(x, v), n\}$ (notice that $n = d(v, v')$). We claim that $r$ is a graph map. Indeed, let $x, y \in V$ such that $x \xrightarrow{G} y$. Suppose $m = d(x, v) \leq d(y, v) = m'$. If $m = m'$, then $r(x) = r(y)$. If $m' = m + 1 \leq n$, then $r(x) = x_m \xleftrightarrow{H} x_{m+1} = r(y)$. If $m \geq n$, then $r(x) = r(y) = r_n$. This proves the claim.

Finally, let $\phi = \iota \circ f \circ  r\colon G \to G$, where $f\colon H \to H$ is the graph map given by $f(x_i) = x_{n-i}$ and $\iota\colon H \to G$ is the inclusion. Thus, $\phi$ is a graph map satisfying $\phi(v) = v'$ and $\phi(v') = v$.
\end{proof}

\begin{theorem} \label{theo: f sym to sym}
Let $\ff$ be any endofunctor. Then, $\ff(\catdigraph^\mathrm{sym}) \subset \catdigraph^\mathrm{sym}$.
\end{theorem}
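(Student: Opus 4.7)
The plan is to reduce the symmetry of $\ff(G)$ for $G \in \catdigraph^\mathrm{sym}$ to the existence of a suitable self-map of $G$, which is exactly what Lemma~\ref{lemma: map sym graph} provides. Suppose $v \xrightarrow{\ff(G)} v'$; we need to produce the reverse arrow $v' \xrightarrow{\ff(G)} v$. Note that by the convention set in Section~\ref{sec: background}, the notation $v \to v'$ excludes the diagonal, so $v \neq v'$ here.

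First I would invoke Lemma~\ref{lemma: map sym graph} to obtain a graph map $\phi\colon (G, v, v') \to (G, v', v)$, so that $\phi(v) = v'$ and $\phi(v') = v$. Since $\ff$ is vertex preserving, $\ff(\phi) = \phi$ as a set map on $V$, and functoriality guarantees that this same map $\phi$ is a graph map $\phi\colon (\ff(G), v, v') \to (\ff(G), v', v)$.

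Now I would apply $\phi$ to the arrow $v \xrightarrow{\ff(G)} v'$: because $\phi$ is a graph map on $\ff(G)$, we conclude $\phi(v) \seta[\ff(G)] \phi(v')$, i.e., $v' \seta[\ff(G)] v$. Since $v \neq v'$, this upgrades to the genuine arrow $v' \xrightarrow{\ff(G)} v$, establishing symmetry of $\ff(G)$.

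There is no real obstacle here — all the work was already done in Lemma~\ref{lemma: map sym graph}, whose construction of the graph map folding $G$ onto a shortest path between $v$ and $v'$ and then reversing it is the substantive ingredient. The present theorem is essentially a one-line consequence of that lemma combined with the vertex-preserving functoriality of $\ff$.
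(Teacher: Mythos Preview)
Your proof is correct and follows exactly the same approach as the paper: invoke Lemma~\ref{lemma: map sym graph} to obtain a graph map $\phi\colon (G, v, v') \to (G, v', v)$, then apply the vertex-preserving functor $\ff$ so that $\phi$ remains a graph map $\ff(G)\to\ff(G)$ sending the arrow $v\to v'$ to $v'\to v$. The only difference is that you spell out a couple of details (the case $v\neq v'$, the fact that $\ff(\phi)=\phi$) that the paper leaves implicit.
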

\begin{proof}
Let $G \in \catdigraph^\mathrm{sym}$ and $v \xrightarrow{\ff(G)} v'$. Let $\phi\colon (G, v, v') \to (G, v', v)$ be the graph map from Lemma~\ref{lemma: map sym graph}. Applying $\ff$, we obtain the graph map $\phi\colon (\ff(G), v, v') \to (\ff(G), v', v)$, which implies $v' \xrightarrow{\ff(G)} v$.
\end{proof}

\begin{definition} An endofunctor $\ff$ is \emph{additive} if 
\[
\ff \left( \bigsqcup_{i\in I}G_i \right) = \bigsqcup_{i\in I} \ff(G_i)
\]
for all finite collections $\{G_i, i \in I\}$ of graphs.
\end{definition}

\begin{proposition}
All endofunctors $\ff \in \funct(\catdigraph, \catdigraph)$ except $\ff^\mathrm{comp}$ are additive.
\end{proposition}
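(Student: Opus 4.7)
The plan is to show both directions of set inclusion for the edge sets of $\ff(\bigsqcup_i G_i)$ and $\bigsqcup_i \ff(G_i)$. Write $G = \bigsqcup_{i \in I} G_i$. The easy direction, $\bigsqcup_i \ff(G_i) \hookrightarrow \ff(G)$, holds for every endofunctor, including $\ff^\mathrm{comp}$: each inclusion $\iota_i \colon G_i \hookrightarrow G$ is a graph map, so by functoriality $\iota_i \colon \ff(G_i) \to \ff(G)$ is a graph map, and these together realize the disjoint union on the left as a subgraph of $\ff(G)$.

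For the reverse inclusion I would invoke the hypothesis $\ff \neq \ff^\mathrm{comp}$: by Proposition~\ref{prop: functor-on-two-point-graphs}(1) this is equivalent to $\ff(D_2) = D_2$. Suppose $v \xrightarrow{\ff(G)} v'$ with $v \in G_i$ and $v' \in G_j$. First I would rule out $i \neq j$: the map $\phi \colon G \to D_2$ sending $G_i$ to $a_1$ and every other $G_k$ to $a_2$ is a graph map (there are no cross-component edges in the disjoint union), so by functoriality $\phi \colon \ff(G) \to \ff(D_2) = D_2$ is a graph map; but then $v \xrightarrow{\ff(G)} v'$ would force $a_1 \xrightarrow{D_2} a_2$, a contradiction. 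Hence $i = j$.

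Once $v, v'$ lie in the same $G_i$, I would use a retraction to land the edge inside $\ff(G_i)$. Pick any $v_0 \in G_i$ and define $r \colon G \to G_i$ by $r|_{G_i} = \mathrm{id}_{G_i}$ and $r(G_k) = v_0$ for $k \neq i$. This $r$ is a graph map: edges inside $G_i$ are preserved by the identity, edges inside any other $G_k$ collapse to the self-loop at $v_0$, and there are no edges between distinct components to check. Applying $\ff$, the map $r \colon \ff(G) \to \ff(G_i)$ is a graph map, and since $r$ fixes $v$ and $v'$, we obtain $v \xrightarrow{\ff(G_i)} v'$, completing the reverse inclusion.

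The argument is routine modulo Proposition~\ref{prop: functor-on-two-point-graphs}; the only mildly delicate point is the retraction, which requires choosing an explicit landing vertex $v_0$ and checking that constant maps on the other components respect all the self-loops built into the definition of $\catdigraph$. I do not expect any genuine obstacle. It is also worth noting in passing that $\ff^\mathrm{comp}$ genuinely fails additivity, since $\ff^\mathrm{comp}(D_1 \sqcup D_1) = K_2 \neq D_2 = \ff^\mathrm{comp}(D_1) \sqcup \ff^\mathrm{comp}(D_1)$, so the hypothesis $\ff \neq \ff^\mathrm{comp}$ cannot be dropped.
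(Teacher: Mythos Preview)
Your proof is correct and follows the same key idea as the paper: use Proposition~\ref{prop: functor-on-two-point-graphs}(1) to get $\ff(D_2)=D_2$, then collapse the disjoint union onto $D_2$ to rule out cross-component arrows in $\ff(G)$. In fact your argument is more complete than the paper's: the paper reduces to $|I|=2$ and proves only that no arrow of $\ff(G_1\sqcup G_2)$ can go from $V_1$ to $V_2$, leaving both the easy inclusion $\bigsqcup_i \ff(G_i)\hookrightarrow \ff(G)$ and the step ``$v,v'\in V_i$ and $v\xrightarrow{\ff(G)}v'$ implies $v\xrightarrow{\ff(G_i)}v'$'' implicit. Your retraction $r\colon G\to G_i$ (identity on $G_i$, constant on the other components) supplies exactly that missing step, and your explicit mention of the inclusions $\iota_i$ covers the other direction. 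The counterexample for $\ff^\mathrm{comp}$ is also a nice addition.
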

\begin{proof}

First consider the case $|I| = 2$. Write $G_i = (V_i, E_i)$ and $\ff(G_i) = (V, E_i^{\ff})$ for $i = 1,2$. Also, let $V = V_1 \sqcup V_2$, $E = E_1 \sqcup E_2$, $G = G_1 \sqcup G_2 = (V,E)$ and write $\ff(G)=(V, E^\ff)$. We will prove that $E^\ff = E_1^\ff\sqcup E_2^\ff$.

Let $\phi\colon G \rightarrow D_2$ be the graph map given by $\phi(v_1) = a_1$ for all $v_1 \in V_1$ and $\phi(v_2) = a_2$, for all $v_2 \in V_2$. Since $\ff(D_2) = D_2$ (because $\ff \neq \ff^\mathrm{comp}$), we cannot have $v_1 \xrightarrow{\ff(G)} v_2$ with $v_1 \in V_1$ and $v_2 \in V_2$.
\end{proof}

\section{Representable endofunctors} \label{sec: representable}
Given a family $\Omega$ of graphs, we consider the functor $\ff^\Omega\colon \catdigraph\rightarrow\catdigraph$ defined as follows: given $G = (V, E)$, $\ff^\Omega(G) = (V, E^\Omega)$, where $(v, v')\in E^\Omega$ $\Leftrightarrow$ there exists $\omega \in \Omega$ and a graph map $\phi\colon \omega \to (G, v, v')$ (this means that $v, v' \in \phi(\omega)$, as defined in Section~\ref{sec: background}). Also, set $\ff^\Omega(\phi) = \phi$ for all graph maps $\phi\colon G \to G'$.  

\begin{definition}\label{def: representable functor}
 We say that an endofunctor $\ff$ is \emph{representable} (or \textit{motivic}) whenever there exists a family $\Omega$ of graphs such that $\ff = \ff^\Omega$. In this case we say that $\ff$ is \textit{represented} by $\Omega$, the set of \textit{representers} (or \textit{motifs}) of $\ff$.
\end{definition}

\begin{proposition}
Assume $\ff$ is represented by a family $\Omega$. Then, $\ff^\Omega = \bigcup_{\omega\in \Omega} \ff^{ \{\omega\} }$. Thus, if $\Omega_1 \subseteq \Omega_2$ are two families of graphs, we have $\ff^{\Omega_1} \leq \ff^{\Omega_2}$.
\end{proposition}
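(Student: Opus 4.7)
The plan is a routine unwinding of definitions, followed by an inclusion argument for the monotonicity statement.

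First I would fix a graph $G=(V,E)$ and compare the edge sets of $\ff^\Omega(G)$ and of $\bigcup_{\omega\in\Omega}\ff^{\{\omega\}}(G)$. By the definition of $\ff^\Omega$, a pair $(v,v')\in V\times V$ belongs to the edge set of $\ff^\Omega(G)$ if and only if there exist some $\omega\in\Omega$ and a graph map $\phi\colon \omega\to(G,v,v')$. By the same definition applied to the one-element family $\{\omega\}$, this latter condition is precisely that $(v,v')$ belongs to the edge set of $\ff^{\{\omega\}}(G)$. Quantifying over $\omega$, this gives exactly membership in $\bigcup_{\omega\in\Omega}E^{\{\omega\}}$, which by the definition of the union of endofunctors is the edge set of $\bigcup_{\omega\in\Omega}\ff^{\{\omega\}}(G)$. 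Since both endofunctors are vertex preserving and act trivially on morphisms, this equality of edge sets for every $G$ gives $\ff^\Omega=\bigcup_{\omega\in\Omega}\ff^{\{\omega\}}$.

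For the monotonicity statement, I would argue directly from the first part. Given $\Omega_1\subseteq\Omega_2$ and any graph $G$, every pair $(v,v')$ in the edge set of $\ff^{\Omega_1}(G)$ is witnessed by some $\omega\in\Omega_1\subseteq\Omega_2$, and hence lies in the edge set of $\ff^{\Omega_2}(G)$. Equivalently, the union $\bigcup_{\omega\in\Omega_1}\ff^{\{\omega\}}(G)$ is obtained from $\bigcup_{\omega\in\Omega_2}\ff^{\{\omega\}}(G)$ by restricting to a smaller index set, so the inclusion of edge sets is automatic. By Definition~\ref{def: partial order}, this is exactly $\ff^{\Omega_1}\leq\ff^{\Omega_2}$.

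There is essentially no obstacle here: both assertions reduce to the observation that the existential quantifier over $\omega\in\Omega$ in the definition of $\ff^\Omega$ matches the union over $\omega$ of the single-representer functors. The only point worth being careful about is that the union of endofunctors was defined for pairs, so to interpret an indexed union $\bigcup_{\omega\in\Omega}\ff^{\{\omega\}}$ one implicitly iterates the binary union (well defined since the operation is commutative and associative at the level of edge sets), but this is harmless and the argument is unchanged.
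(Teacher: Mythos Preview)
Your proof is correct and is exactly the routine unwinding the paper has in mind; the paper in fact states this proposition without proof, treating both assertions as immediate from Definition~\ref{def: representable functor}. The one small quibble is your parenthetical about iterating the binary union: since $\Omega$ may be infinite this phrasing is not quite right, but the intended meaning---that $\bigcup_{\omega\in\Omega}\ff^{\{\omega\}}(G)$ has edge set $\bigcup_{\omega\in\Omega}E^{\{\omega\}}$---is clear and is what your argument actually uses.
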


\begin{example}
There are several interesting examples of representable functors:

\begin{itemize}

\item Let $\Omega = \{D_1\}$, then $\ff^{\Omega} = \ff^\mathrm{disc}$.

\item Let $\Omega$ be the set of all graphs with vertex set $\{x_1, \ldots, x_n\}$ and such that for each $i = 1, \ldots, n-1$ it holds exactly one of the following conditions: $x_i \to x_{i+1}$ or $x_{i+1} \to x_i$. These graphs are called \textit{zig-zag graphs}. Then $\ff^{\Omega} = \ff^\mathrm{conn}$. 

\item If $\Omega = \{D_2\}$, then $\ff^{\Omega} = \ff^\mathrm{comp}$. 

\item If $\Omega = \{K_2\}$, then $\ff^\Omega = \ff^\mathrm{ls}$.

\item If $\Omega = \{L_2\}$, then $\ff^{\Omega} = \ff^\mathrm{us}$.

\item If $\Omega = \{ \bullet \leftarrow \bullet \rightarrow \bullet \}$ or $\Omega = \{\bullet \rightarrow \bullet \leftarrow \bullet\}$, $\ffo$ is reminiscent of hub or authority type of connections: a connection between two vertices exists if they are both connected to some other vertex (in some direction). In~\cite{Chowdhury2018AFuDoTh} the authors use this kind of connection to construct simplicial complexes over networks.

\item If $\Omega=\{L_{m+1}\}$, then $\ff^\mathrm{us} \circ \ff^{[m]}$.

\item If $\Omega= \{C_n\}$, then $v \seta[\ffo(G)] v'$ when $v$ and $v'$ are in a cycle of size at most $n$. We can also consider $\Omega = \{ C_n \}_{n \in \mathbb{N}}$: then $v \seta[\ffo(G)] v'$ when $v$ and $v'$ are in a cycle (of any length). 
\end{itemize}
\end{example}

The next proposition follows readily from Definition~\ref{def: representable functor}.

\begin{proposition} \label{prop: representable is symmetric}
Every representable functor $\ffo$ is symmetric.
\end{proposition}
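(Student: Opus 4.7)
The plan is to unpack the definition of $\ff^\Omega$ and observe that the membership condition for an arrow $(v,v')$ in the edge set of $\ff^\Omega(G)$ is manifestly symmetric in $v$ and $v'$. Recall that the notation $\phi\colon \omega \to (G, v, v')$ is defined (see Section~\ref{sec: background}) to mean simply that $v$ and $v'$ both lie in $\phi(\omega)$; no specific vertex of $\omega$ is required to map to $v$ or to $v'$ in particular.

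Concretely, I would fix a graph $G = (V, E)$ and an arrow $v \xrightarrow{\ff^\Omega(G)} v'$ with $v \neq v'$. By the definition of $\ff^\Omega$, there exist $\omega \in \Omega$ and a graph map $\phi\colon \omega \to G$ with $\{v, v'\} \subseteq \phi(\omega)$. Since the predicate $\{v, v'\} \subseteq \phi(\omega)$ depends only on the unordered pair $\{v, v'\}$, the very same pair $(\omega, \phi)$ witnesses the existence of a graph map $\phi\colon \omega \to (G, v', v)$ in the sense of the definition. Therefore $v' \xrightarrow{\ff^\Omega(G)} v$, and $\ff^\Omega(G) \in \catdigraph^\mathrm{sym}$.

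Since $G$ was arbitrary, this shows $\ff^\Omega(\catdigraph) \subseteq \catdigraph^\mathrm{sym}$, i.e.\ $\ff^\Omega$ is symmetric in the sense of the paper. I do not anticipate any real obstacle; the statement is essentially a tautology once one notes that the ordered-pair notation $(G, v, v')$ in the definition of $\ff^\Omega$ secretly encodes the unordered two-element subset $\{v, v'\} \subseteq \phi(\omega)$, which is why the forced symmetry pointed out in the introduction arises and motivates the subsequent notion of \emph{pointed} representable functor in Section~\ref{sec: pointed representable}.
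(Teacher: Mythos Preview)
Your argument is correct and matches the paper's approach: the paper simply remarks that the proposition ``follows readily from Definition~\ref{def: representable functor},'' and your unpacking of that definition---observing that the condition $v,v'\in\phi(\omega)$ is symmetric in $v$ and $v'$---is exactly the intended one-line justification.
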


\begin{remark} \label{remark: symmetric but not representable}
Not all symmetric endofunctors are representable. Consider, for example, $\ff = \ff^\mathrm{ls} \circ \ff^{[2]}$. Then $\ff(C_4)$ has arrows $a_1 \leftrightarrow a_3$ and $a_2 \leftrightarrow a_4$, that is, $\ff(C_4) = K_2 \sqcup K_2$. See Figure~\ref{fig: symmetric but not representable}. Suppose $\ff = \ffo$, for some family of representers $\Omega$. Then we have a graph map $\phi\colon \omega \to (C_4, a_1, a_3)$ for some $\omega \in \Omega$. But, since there are no more arrows in $\ff(C_4)$, we must have $\phi(\omega) = C_4 \cap \{a_1, a_3\}$, which is a graph with two vertices and no arrows. This implies that $\ff(D_2) = K_2$. By Proposition~\ref{prop: functor-on-two-point-graphs}, we should have $\ff = \ff^\mathrm{comp}$, a contradiction.

\begin{figure}[ht] 
\centering
\begin{tikzpicture}[>=Stealth,thick]
\node[fill=black, circle, minimum size=4pt, inner sep=0pt] (a) at (1,0) {}; 
\node[fill=black, circle, minimum size=4pt, inner sep=0pt] (b) at (0,1) {}; 
\node[fill=black, circle, minimum size=4pt, inner sep=0pt] (c) at (1,2) {};
\node[fill=black, circle, minimum size=4pt, inner sep=0pt] (d) at (2,1) {};
\node (A) at (3,1) {};
\node (B) at (4,1) {};
\node[fill=black, circle, minimum size=4pt, inner sep=0pt] (a1) at (1+5,0) {}; 
\node[fill=black, circle, minimum size=4pt, inner sep=0pt] (b1) at (0+5,1) {}; 
\node[fill=black, circle, minimum size=4pt, inner sep=0pt] (c1) at (1+5,2) {};
\node[fill=black, circle, minimum size=4pt, inner sep=0pt] (d1) at (2+5,1) {};
\draw[->] (a) -- (b);
\draw[->] (b) -- (c);
\draw[->]  (c) -- (d);
\draw[->]  (d) -- (a);

\node[below] at (a){$a_1$};
\node[left] at (b){$a_2$};
\node[above] at (c){$a_3$};
\node[right] at (d){$a_4$};

\draw[->] (a1) -- (c1);
\draw[->] (c1) -- (a1);
\draw[->]  (b1) -- (d1);
\draw[->]  (d1) -- (b1);

\node[below] at (a1){$a_1$};
\node[left] at (b1){$a_2$};
\node[above] at (c1){$a_3$};
\node[right] at (d1){$a_4$};

\draw[double, ->]  (A) -- node [above] {$\ff$} (B);
\end{tikzpicture}
\caption{The image of $C_4$ via $\ff = \ff^\mathrm{ls} \circ \ff^{[2]}$. See Remark~\ref{remark: symmetric but not representable}.} 
\label{fig: symmetric but not representable}
\end{figure}
\end{remark}

\begin{definition}
Given an endofunctor $\ff$, we define the \textit{set of completions of $\ff$} by
\[
\mathrm{Comp}(\ff) \coloneqq \{ \omega \in \catdigraph \; | \; \ff(\omega) = K_{(\omega)} \},
\]
which is non-empty since $D_1 \in \mathrm{Comp}(\ff)$.
\end{definition}

\begin{proposition} \label{prop-image-is-complete}
Let $\ff\colon \catdigraph \to \catdigraph$ be any endofunctor and $A = \mathrm{Comp}(\ff)$. Then $
\ff^A \leq \ff^\mathrm{ls} \circ \ff \leq \ff$.
\end{proposition}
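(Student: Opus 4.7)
The plan is to prove the two inequalities separately, and both follow directly from the definitions.

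\textbf{First inequality} $\ff^{\mathrm{ls}} \circ \ff \leq \ff$: This is immediate. For any $G \in \catdigraph$, if $\ff(G) = (V, E^\ff)$, then by definition $\ff^{\mathrm{ls}}(\ff(G)) = (V, E^\ff \cap (E^\ff)^\mathrm{rev})$, and since $E^\ff \cap (E^\ff)^\mathrm{rev} \subseteq E^\ff$, we have $\ff^{\mathrm{ls}}(\ff(G)) \hookrightarrow \ff(G)$.

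\textbf{Second inequality} $\ff^A \leq \ff^{\mathrm{ls}} \circ \ff$: Pick $G \in \catdigraph$ and suppose $(v, v') \in E^A$ where $\ff^A(G) = (V, E^A)$. If $v = v'$, then $(v, v')$ lies in every graph on vertex set $V$ by the self-loop convention, so in particular in $\ff^{\mathrm{ls}}(\ff(G))$. Assume $v \neq v'$. By definition of $\ff^A$, there exist $\omega \in A$ and a graph map $\phi\colon \omega \to (G, v, v')$, hence there are $v_1, v_2 \in \omega$ with $\phi(v_1) = v$ and $\phi(v_2) = v'$; note $v_1 \neq v_2$. By functoriality, $\phi\colon \ff(\omega) \to \ff(G)$ is a graph map. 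Since $\omega \in A = \mathrm{Comp}(\ff)$, we have $\ff(\omega) = K_{(\omega)}$, so both $(v_1, v_2)$ and $(v_2, v_1)$ are arrows of $\ff(\omega)$. Because $\phi(v_1) = v \neq v' = \phi(v_2)$, applying $\phi$ to these arrows yields genuine (non-identity) arrows $v \to v'$ and $v' \to v$ in $\ff(G)$. Hence $(v, v') \in E^\ff \cap (E^\ff)^\mathrm{rev}$, i.e.\ $(v, v')$ is an arrow of $\ff^{\mathrm{ls}}(\ff(G))$.

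The only step requiring any thought is the second one, and the key observation is that one must exploit the \emph{symmetry} guaranteed by $\ff(\omega) = K_{(\omega)}$, which is what enables passing through $\ff^{\mathrm{ls}}$ rather than just $\ff$; no additional obstacle arises because graph maps preserve self-loops automatically and the case $v = v'$ is handled by the convention that every graph contains $\Delta(V)$.
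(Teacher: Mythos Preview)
Your proof is correct and follows essentially the same approach as the paper's: the inequality $\ff^{\mathrm{ls}}\circ\ff\leq\ff$ is immediate from $\ff^{\mathrm{ls}}\leq\ff^{\mathrm{id}}$, and for $\ff^A\leq\ff^{\mathrm{ls}}\circ\ff$ both you and the paper take a graph map $\phi\colon\omega\to(G,v,v')$ with $\omega\in A$, apply functoriality, and use $\ff(\omega)=K_{(\omega)}$ to conclude $v\xleftrightarrow{\ff(G)}v'$. Your version is simply more explicit about the self-loop case and the choice of preimages $v_1,v_2$.
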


\begin{proof}
The second inequality is clear: $\ff^\mathrm{ls} \leq \ff^\mathrm{id}$ implies $\ff^\mathrm{ls} \circ \ff \leq \ff^\mathrm{id} \circ \ff = \ff$. 

To prove the first inequality, let $G \in \catdigraph$ and suppose that $v \xrightarrow{\ff^A(G)} v'$. Then there are $\omega \in A$ and $\phi\colon \omega \to (G, v, v')$. By functoriality, $\phi\colon \ff(\omega) \to (\ff(G), v, v')$ is also a graph map. Since $\ff(\omega) = K_{(\omega)}$, we have $v \xleftrightarrow{\ff(G)} v'$. Thus, $v \xrightarrow{ \ff^\mathrm{ls} \circ \ff (G))} v'$.
\end{proof}

\begin{corollary}
If $\ffo$ is a representable functor, for some family $\Omega$, then $\ffo = \ff^A$.
\end{corollary}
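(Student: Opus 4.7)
The plan is to prove the two inequalities $\ff^A \leq \ffo$ and $\ffo \leq \ff^A$ separately, and then use the convention that equality of endofunctors is pointwise equality on graphs.

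For the inequality $\ff^A \leq \ffo$, I would invoke Proposition~\ref{prop-image-is-complete}, which already gives $\ff^A \leq \ff^\mathrm{ls} \circ \ffo$. Since $\ffo$ is representable, Proposition~\ref{prop: representable is symmetric} tells us that $\ffo$ is symmetric, so $\ff^\mathrm{ls}(\ffo(G)) = \ffo(G)$ for every $G$, i.e.\ $\ff^\mathrm{ls} \circ \ffo = \ffo$. Combining gives $\ff^A \leq \ffo$.

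For the reverse inequality $\ffo \leq \ff^A$, the key observation is that $\Omega \subseteq A$; once established, the monotonicity statement already recorded in the excerpt ($\Omega_1 \subseteq \Omega_2$ implies $\ff^{\Omega_1} \leq \ff^{\Omega_2}$) yields $\ffo = \ff^\Omega \leq \ff^A$. To verify $\Omega \subseteq A$, fix any $\omega \in \Omega$ and any pair of vertices $x, y \in \omega$. The identity map on $\omega$ is a graph map $\omega \to (\omega, x, y)$ (trivially, since $x, y \in \omega = \mathrm{id}(\omega)$), so by the very definition of $\ffo$ we have $x \xrightarrow{\ffo(\omega)} y$. Since $x, y$ were arbitrary, $\ffo(\omega) = K_{(\omega)}$, whence $\omega \in A$.

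Putting the two directions together gives $\ffo = \ff^A$. I do not anticipate any real obstacle here: the whole argument rests on the simple but crucial remark that the identity map $\mathrm{id}_\omega$ serves as a witness of every arrow in $\ffo(\omega)$, which forces $\ffo(\omega)$ to be complete for every representer $\omega$. The symmetry of $\ffo$ (needed only to convert $\ff^\mathrm{ls} \circ \ffo$ back to $\ffo$) is the one external ingredient, and it is already available via Proposition~\ref{prop: representable is symmetric}.
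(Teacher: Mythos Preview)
Your proof is correct and follows essentially the same approach as the paper: one direction comes from Proposition~\ref{prop-image-is-complete}, and the other from the observation that $\Omega \subseteq A$ (which the paper states without verification; you supply the identity-map argument). One small simplification: you do not need the symmetry detour, since Proposition~\ref{prop-image-is-complete} already gives $\ff^A \leq \ff^\mathrm{ls} \circ \ffo \leq \ffo$ directly.
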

\begin{proof}
We already have $\ff \leq \ffo$. For the other inclusion, just notice that $\Omega \subseteq A$.
\end{proof}

The endofunctors $\ff^\mathrm{comp}$ and $\ff^\mathrm{disc}$ are, respectively, the maximal and minimal elements of the partial order $\leq$, because their outputs are ``all or nothing'' graphs (complete or discrete), that is: $\ff^\mathrm{disc} \leq \ff \leq \ff^\mathrm{comp}$. For other $\ff$, we have two more constraints:

\begin{proposition} \label{prop: order in funct}
Let $\ff \neq \ff^\mathrm{disc}, \ff^\mathrm{comp}$ be an endofunctor. Then 
\[
\ff^\mathrm{disc} \leq \ff^\mathrm{ls} \leq \ff \leq \ff^\mathrm{conn} \leq \ff^\mathrm{comp}.
\]
\end{proposition}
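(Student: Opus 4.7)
The plan is to establish each of the four inequalities separately, noting that the two outer ones are immediate and that the two middle ones are precisely where the hypotheses $\ff \neq \ff^\mathrm{disc}, \ff^\mathrm{comp}$ come into play, via Proposition~\ref{prop: functor-on-two-point-graphs}.

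First I would dispose of the trivial bookends: $\ff^\mathrm{disc}(G) = (V,\Delta(V))$ includes into any graph with vertex set $V$, so $\ff^\mathrm{disc} \leq \ff^\mathrm{ls}$; and $\ff^\mathrm{comp}(G) = (V, V \times V)$ contains every graph on $V$, so $\ff^\mathrm{conn} \leq \ff^\mathrm{comp}$.

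For $\ff^\mathrm{ls} \leq \ff$, I would use that $\ff \neq \ff^\mathrm{disc}$ forces $\ff(K_2) = K_2$ by Proposition~\ref{prop: functor-on-two-point-graphs}. Given $G = (V,E)$ and $v \xrightarrow{\ff^\mathrm{ls}(G)} v'$, by definition $v \xleftrightarrow{G} v'$, so the assignment $a_1 \mapsto v$, $a_2 \mapsto v'$ defines a graph map $\phi\colon (K_2, a_1, a_2) \to (G, v, v')$. Applying $\ff$ and using $\ff(K_2) = K_2$, functoriality yields $v \xrightarrow{\ff(G)} v'$, so $\ff^\mathrm{ls}(G) \hookrightarrow \ff(G)$.

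For the key step $\ff \leq \ff^\mathrm{conn}$, the idea is dual: since $\ff \neq \ff^\mathrm{comp}$, Proposition~\ref{prop: functor-on-two-point-graphs} gives $\ff(D_2) = D_2$. Suppose for contradiction that $v \xrightarrow{\ff(G)} v'$ while $v$ and $v'$ lie in distinct weakly connected components of $G$. Let $C \subseteq V$ be the weak component of $v$, so that no arrow of $G$ joins $C$ to $V \setminus C$ in either direction. Define $\phi\colon G \to D_2$ by $\phi(C) = \{a_1\}$ and $\phi(V \setminus C) = \{a_2\}$; the absence of cross-arrows ensures $\phi$ is a graph map with $\phi(v) = a_1$ and $\phi(v') = a_2$. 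Applying $\ff$ and using $\ff(D_2) = D_2$ yields a graph map $\phi\colon \ff(G) \to D_2$ that would have to send the arrow $v \to v'$ of $\ff(G)$ to an arrow $a_1 \to a_2$ in $D_2$, contradicting $\ff(D_2) = D_2$. Hence $v$ and $v'$ are weakly connected in $G$, i.e.\ $v \xrightarrow{\ff^\mathrm{conn}(G)} v'$.

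The only subtle point is verifying that the collapsing map in the last step is a genuine graph map; this is exactly the statement that weakly connected components are separated by no arrows in either direction, which is immediate from the definition of weak connectivity. Everything else is a direct application of functoriality together with Proposition~\ref{prop: functor-on-two-point-graphs}.
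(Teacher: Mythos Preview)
Your proof is correct and follows essentially the same approach as the paper: both use Proposition~\ref{prop: functor-on-two-point-graphs} to get $\ff(K_2)=K_2$ and $\ff(D_2)=D_2$, then push a graph map $K_2\to G$ forward for $\ff^\mathrm{ls}\leq\ff$ and collapse weak components via $G\to D_2$ for $\ff\leq\ff^\mathrm{conn}$. The only cosmetic difference is that the paper packages the first of these two steps by invoking Proposition~\ref{prop-image-is-complete} (noting $K_2\in\mathrm{Comp}(\ff)$ so that $\ff^\mathrm{ls}=\ff^{\{K_2\}}\leq\ff^{\mathrm{Comp}(\ff)}\leq\ff$), whereas you unwind that lemma and argue directly.
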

\begin{proof}
Indeed, if $\ff \neq \ff^\mathrm{disc}$, then by Proposition~\ref{prop: functor-on-two-point-graphs} we have $\ff(K_2) = K_2$. Thus, by Proposition~\ref{prop-image-is-complete}, $\ff^{\{K_2\}} = \ff^\mathrm{ls} \leq \ff^{\mathrm{Comp(\ff)}} \leq \ff$.

Now suppose that we don't have $\ff \leq \ff^\mathrm{conn}$. Then, for some $G \in \catdigraph$, we have $v \nrightarrow v'$ in $\ff^\mathrm{conn}(G)$ and $v \xrightarrow{\ff(G)} v'$. Then $v$ and $v'$ are in different weak connected components of $G$, say $C$ and $C'$. Thus, the surjective map $\phi\colon G \to D_2$ given by $\phi(C) = v$ and $\phi(G \setminus C) = v'$ is a graph map and, by functoriality, $\ff(D_2) \neq D_2$. Thus, $\ff = \ff^\mathrm{comp}$.
\end{proof}

\section{Pointed representable functors} \label{sec: pointed representable}

For a representable functor $\ffo$, we have $v \xrightarrow{\ffo(G)} v'$ when there are $\omega \in \Omega$ and a graph map $\phi\colon \omega \to (G, v, v')$, that is, $v, v' \in \phi(\omega)$. Now we will adapt this definition in order to obtain, in some cases, non-symmetric endofunctors. In order to do this, we will distinguish which vertices $z, \widehat{z}$ of $\omega$ satisfy $\phi(z) = v$ and $\phi(\widehat{z}) = v'$. 

\begin{definition}
Let $\catdigraph^\ast$ be the category whose objects are all triples $(\omega, z, \widehat{z})$, called \textit{pointed graphs}, where $\omega$ is a graph with $z, \widehat{z} \in \omega$. The morphisms in $\catdigraph^\ast$ from $(\omega_1, z_1, \widehat{z}_1)$ to $(\omega_2, z_2, \widehat{z}_2)$ are all graph maps $\phi\colon (\omega_1, z_1, \widehat{z}_1) \to (\omega_2, z_2, \widehat{z}_2)$.

Given a set $\oa$ of pointed graphs, consider $\ffoa \in \funct(\catdigraph, \catdigraph)$, where $\ffoa(G)$ satisfies, for any $G \in \catdigraph$,
\[
v \xrightarrow{\ffoa(G)} v' \Leftrightarrow \exists (\omega, z, \widehat{z}) \in \Omega^\ast, \exists \phi\colon (\omega, z, \widehat{z}) \to (G, v, v' ).
\] 

When $\ff = \ffoa$ for some $\oa \subset \catdigraph^\ast$, we say that $\ff$ is \textit{pointed representable}, and is \textit{represented} by $\oa$.
\end{definition}

\begin{example}

If $\oa$ consists exactly of:
\begin{itemize}
\item $(\omega, z, z)$, then $\ffoa = \ff^\mathrm{disc}$, for any $\omega \in \catdigraph$ and $z \in \omega$. 
\item $(D_2, a_1, a_2)$, then $\ffoa = \ff^\mathrm{comp}$.
\item $(L_2, a_1, a_2)$, then $\ffoa = \ff^\mathrm{id}$, the identity functor. Remember that $L_2$ has vertex set $\{a_1, a_2\}$ and an arrow $a_1 \to a_2$.
\item $(L_2, a_2, a_1)$, then $\ffoa = \ff^\mathrm{rev}$.
\item $(K_2, a_1, a_2)$, then $\ffoa = \ff^\mathrm{ls}$.
\item $\{(L_n, a_1, a_n), n \leq m\}$, for a given $m \in \mathbb{N}$, then $\ffoa = \ff^{[m]}$.
\item $\{(L_n, a_1, a_n), n \in \mathbb{N}\}$, then $\ffoa = \ff^\mathrm{tc}$. 
\end{itemize}
\end{example}

Any representable functor $\ffo$ is equal to a pointed representable functor, just taking as representer the set of all possible pointed graphs induced by $\Omega$. More precisely:

\begin{proposition} \label{prop: pointed and not pointed}
Let $\Omega$ be a family of graphs. Consider the family of pointed graphs induced by $\Omega$, $P^\ast(\Omega) \coloneqq \{(\omega, z, \widehat{z}), z, \widehat{z} \in \omega \in \Omega\}$. Then $\ff^\Omega = \ff^{P^\ast(\Omega)}$.
\end{proposition}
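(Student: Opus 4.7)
The plan is to unwind both definitions at a fixed graph $G = (V,E)$ and pair of vertices $v,v' \in V$, and verify that the edge condition defining $\ff^\Omega(G)$ matches exactly the edge condition defining $\ff^{P^\ast(\Omega)}(G)$. Since both functors are vertex preserving and agree on morphisms (both send $\phi$ to $\phi$), checking equality on edge sets for each $G$ suffices by our convention on equality of endofunctors.

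For the forward inclusion $\ff^\Omega \leq \ff^{P^\ast(\Omega)}$, I would suppose $v \xrightarrow{\ff^\Omega(G)} v'$. By definition, there is $\omega \in \Omega$ and a graph map $\phi\colon \omega \to (G, v, v')$, which (by the notational convention established in Section~\ref{sec: background}) means $v, v' \in \phi(\omega)$. Thus there exist $z, \widehat{z} \in \omega$ with $\phi(z) = v$ and $\phi(\widehat{z}) = v'$. By construction $(\omega, z, \widehat{z}) \in P^\ast(\Omega)$, and the very same map $\phi$ is a morphism $(\omega, z, \widehat{z}) \to (G, v, v')$ in $\catdigraph^\ast$, so $v \xrightarrow{\ff^{P^\ast(\Omega)}(G)} v'$.

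For the reverse inclusion, I would take $v \xrightarrow{\ff^{P^\ast(\Omega)}(G)} v'$ and obtain some $(\omega, z, \widehat{z}) \in P^\ast(\Omega)$ together with a morphism $\phi\colon (\omega, z, \widehat{z}) \to (G, v, v')$ in $\catdigraph^\ast$. By definition of $P^\ast(\Omega)$ we have $\omega \in \Omega$, and forgetting the basepoints $\phi$ is simply a graph map $\omega \to G$ with $v = \phi(z)$ and $v' = \phi(\widehat{z})$ lying in $\phi(\omega)$; hence $\phi\colon \omega \to (G, v, v')$ witnesses $v \xrightarrow{\ff^\Omega(G)} v'$.

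There is no real obstacle here: the statement is essentially a tautology expressing that a representable functor already records, implicitly, every possible choice of basepoints $(z,\widehat{z})$ in each representer $\omega$. The only mildly delicate point is keeping straight the two slightly different uses of the notation $\phi\colon \omega \to (G, v, v')$ in Sections~\ref{sec: representable} and \ref{sec: pointed representable}, which I would address by explicitly rewriting each in terms of conditions on $\phi(z)$ and $\phi(\widehat{z})$.
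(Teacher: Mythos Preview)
Your proof is correct and follows essentially the same route as the paper: both arguments unwind the two definitions at an arbitrary $G$ and pair $(v,v')$, extracting $z,\widehat{z}$ from the condition $v,v'\in\phi(\omega)$ for one inclusion and forgetting the basepoints for the other. The only difference is that the paper declares the inclusion $\ff^{P^\ast(\Omega)}\leq\ff^\Omega$ to be ``clear'' and writes out only the direction $\ff^\Omega\leq\ff^{P^\ast(\Omega)}$, whereas you spell out both.
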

\begin{proof}
It is clear that $\ff^{P^\ast(\Omega)} \leq \ffo$. Now we prove the remaining inequality. Given $G \in \catdigraph$, by definition, $v \xrightarrow{\ff(G)} v' \Leftrightarrow \exists \omega \in \Omega, \exists \phi\colon \omega \to (G, v, v')$. This is equivalent to the condition that there is $z, \widehat{z} \in \omega$ such that $\phi\colon (\omega, z, \widehat{z}) \to (G, v, v')$. Since $(\omega, z, \widehat{z}) \in \oa$, we have $v \xrightarrow{\ff^{P^\ast(\Omega)}(G)} v'$. Thus $\ffo \leq \ff^{P^\ast(\Omega)}$. 
\end{proof}

Let's consider some standard notation that will ease the reading of what follows: for any graph $G = (V, E)$, the maps $s, t\colon E \to V$ are defined by $s(e) = v$ and $t(e) = v'$, for any $e = (v, v') \in E$. These functions are called the \textit{source} and \textit{target}, respectively. Thus, for any $e \in E$ we have $s(e) \seta[G] t(e)$. 

When computing the composition $\ff^{\oa_2} \circ \ff^{\oa_1}$, for two families of pointed graphs $\oa_1$ and $\oa_2$, we encounter the following construction. Let $(\omega, z, \widehat{z}) \in \oa_2$. For every arrow $e$ in $\omega$, one ``attaches'' to $\omega$ an arbitrary graph $(\omega_{e}, z_{e}, \widehat{z}_{e}) \in \oa_1$ by identifying $z_{e}$ with $s(e)$ and $\widehat{z}_{e}$ with $t(e)$. After doing this to all the arrows in $\omega$, we delete the original arrows of $\omega$ and only keep the ``attached'' ones, and thus obtain another graph, let's say $Z$. See Figure~\ref{fig:composition} for an illustration. 

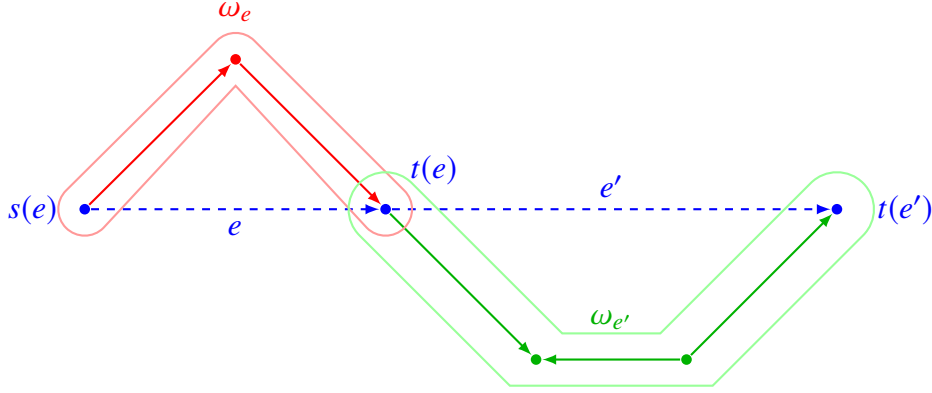
\begin{figure}

\centering

\begin{tikzpicture}[>=Stealth,thick,scale=2]
\node[fill=blue, circle, minimum size=4pt, inner sep=0pt] (w1) at (180:2cm){}; 
\node[fill=blue, circle, minimum size=4pt, inner sep=0pt] (w2) at (0,0){}; 
\node[fill=blue, circle, minimum size=4pt, inner sep=0pt] (w3) at (0:3cm){}; 
\node[fill=red, circle, minimum size=4pt, inner sep=0pt] (t1) at (135:1.41cm){}; 
\node[fill=green!70!black, circle, minimum size=4pt, inner sep=0pt] (t3) at (1,-1){}; 
\node[fill=green!70!black, circle, minimum size=4pt, inner sep=0pt] (t4) at (2,-1){}; 
\node[left=6pt,blue] at (w1){$s(e)$};
\node[above right=5pt,blue] at (w2){$t(e)$};
\node[right=11pt,blue] at (w3){$t(e')$};
\node[above=10pt,red] at (t1){$\omega_{e}$};
\node[above=7pt,green!70!black] at (1.5, -1){$\omega_{e'}$};

\draw[blue,dashed,-latex] (w1) --node[below]{$e$} (w2);
\draw[blue,dashed,-latex] (w2) --node[above]{$e'$} (w3);

\draw[red,-latex] (w1) -- (t1);
\draw[red,-latex] (t1) -- (w2);
\draw[green!70!black,-latex] (w2) -- (t3);
\draw[green!70!black,-latex] (t4) -- (t3);
\draw[green!70!black,-latex] (t4) -- (w3);

\begin{scope}[red!40,line join=rounded]
\draw
([shift={(-45:5pt)}]w1.center) arc (-45:-225:5pt) --
([shift={(135:5pt)}]t1.center) arc (135:45:5pt) --
([shift={(45:5pt)}]w2.center)  arc (45:-135:5pt) --
([shift={(-90:5pt)}]t1.center) -- cycle
;
\end{scope}

\begin{scope}[green!40,line join=rounded]
\draw
([shift={(-135:7pt)}]w2.center) arc (225:45:7pt) --
([shift={(45:7pt)}]t3.center) --
([shift={(135:7pt)}]t4.center) --
([shift={(135:7pt)}]w3.center) arc (135:-45:7pt) --
([shift={(-45:7pt)}]t4.center) --
([shift={(-135:7pt)}]t3.center) -- cycle
;
\end{scope}

\end{tikzpicture}

\caption{In blue: the graph $\omega$, with dashed arrows. In red: the graph $\omega_{e}$. In green: the graph $\omega_{e'}$. The graph $Z$ is obtained excluding from the picture the blue dashed arrows. See Definition~\ref{defi: exclamation point}.} \label{fig:composition}
\end{figure}

\begin{definition} \label{defi: exclamation point}
Let $(\omega, z, \widehat{z}) \in \catdigraph^\ast$ with vertex set $V_\omega$ and arrow set $E_\omega$, and let $\oa_1$ be a set of pointed graphs. Let $F\colon E_\omega \to \oa_1$ be any function assigning edges from $E_\omega$ to pointed graphs in $\oa_1$.  For each $e\in E_\omega$ denote by $(\omega_{e}, z_{e}, \widehat{z}_{e})$ the element $F(e)$ and denote its  vertex set by  $V_{\omega_e}$. 
Define a graph $\omega_F$ as follows:
\begin{enumerate}
\item The vertex set of $\omega_F$ is the quotient of the disjoint union
\[
\left( V_\omega \bigsqcup_{e \in E_\omega} V_{\omega_e} \right) / {\sim}
\]
where for every $e \in E_\omega$ we identify $s(e) \sim z_e$ and $t(e) \sim \widehat{z}_e$.
\item There is an arrow from $x$ to $y$ in $\omega_F$ $\Leftrightarrow$ $x$ and $y$ are both contained in some $\omega_e$ and $x \seta[\omega_e] y$.
\end{enumerate}

Since $z, \widehat{z} \in \omega_F$, we can consider the pointed graph $(\omega_F, z, \widehat{z})$.
Notice that for all $e \in E_\omega$ we have natural inclusions $\iota_{e}\colon (\omega_{e}, z_{e}, \widehat{z}_{e}) \hookrightarrow (\omega_F, s(e), t(e))$.

Define the \emph{composition} of the single pointed graph $(\omega,z,\widehat{z})$ with the whole set $\oa_1$ as
\[
(\omega, z, \widehat{z})  \diamond \oa_1 \coloneqq \{(\omega_F, z, \widehat{z}) \; | \; F\colon E_\omega \to \oa_1 \text{ is a function} \}.
\] 

Finally, define 
\[
\oa_2 \diamond \oa_1 \coloneqq \bigcup_{(\omega, z, \widehat{z}) \in \oa_2} (\omega, z, \widehat{z}) \diamond \oa_1,
\]
which we will refer to as \textit{the composition of $\oa_2$ with $\oa_1$}.
\end{definition}

\begin{theorem} \label{theo: composition of pointed representable}
Composition of representable functors is also representable. More precisely, given $\oa_1$ and $\oa_2$ two families of pointed graphs, we have
\[
\ff^{\oa_2} \circ \ff^{\oa_1} = \ff^{\oa_2 \diamond \oa_1}.
\]
\end{theorem}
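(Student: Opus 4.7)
The plan is to prove the two set inclusions $\ff^{\oa_2 \diamond \oa_1}(G) \subseteq \ff^{\oa_2}\circ\ff^{\oa_1}(G)$ and the reverse separately, for each graph $G$. The conceptual content is that a graph map from a composite representer $\omega_F$ into $G$ can be decomposed into a ``skeleton map'' from $\omega \in \oa_2$ into $\ff^{\oa_1}(G)$ together with decorating maps from each $\omega_e$ into $G$, and conversely such a decomposition can be glued into a single graph map from $\omega_F$ into $G$. Both directions will be routine verifications that these two operations are inverse at the level of individual arrows, with the definitions of $\oa_2 \diamond \oa_1$ and of $\ff^{\oa_1}$ playing off each other.

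For the inclusion $\ff^{\oa_2 \diamond \oa_1}(G) \subseteq \ff^{\oa_2}\circ \ff^{\oa_1}(G)$, suppose $v \xrightarrow{\ff^{\oa_2 \diamond \oa_1}(G)} v'$. Then there exist $(\omega, z, \widehat{z}) \in \oa_2$, a choice function $F\colon E_\omega \to \oa_1$ with $F(e) = (\omega_e, z_e, \widehat{z}_e)$, and a graph map $\psi\colon (\omega_F, z, \widehat{z}) \to (G, v, v')$. For each edge $e \in E_\omega$, the composition $\psi \circ \iota_e\colon (\omega_e, z_e, \widehat{z}_e) \to (G, \psi(s(e)), \psi(t(e)))$ is a graph map, which witnesses $\psi(s(e)) \seta[\ff^{\oa_1}(G)] \psi(t(e))$. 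Consequently the restriction $\psi|_{V_\omega}\colon (\omega, z, \widehat{z}) \to (\ff^{\oa_1}(G), v, v')$ is itself a graph map, giving $v \xrightarrow{\ff^{\oa_2} \circ \ff^{\oa_1}(G)} v'$ as required.

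For the reverse inclusion, suppose $v \xrightarrow{\ff^{\oa_2}\circ \ff^{\oa_1}(G)} v'$, witnessed by $\psi\colon (\omega, z, \widehat{z}) \to (\ff^{\oa_1}(G), v, v')$ with $(\omega, z, \widehat{z}) \in \oa_2$. For each edge $e = (x,y)\in E_\omega$ one has $\psi(x) \seta[\ff^{\oa_1}(G)] \psi(y)$. When $\psi(x) \xrightarrow{\ff^{\oa_1}(G)} \psi(y)$ is a genuine arrow, the definition of $\ff^{\oa_1}$ supplies $(\omega_e, z_e, \widehat{z}_e) \in \oa_1$ together with a graph map $\phi_e\colon (\omega_e, z_e, \widehat{z}_e) \to (G, \psi(x), \psi(y))$; when $\psi(x)= \psi(y)$ (in particular if $e$ is a self-loop of $\omega$), pick any element of $\oa_1$ and take $\phi_e$ to be the constant map at $\psi(x)$, which is a graph map because its image is a single vertex and hence every edge of $\omega_e$ is sent to a self-loop of $G$. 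Setting $F(e)$ to be the chosen element, assemble $\widetilde{\psi}\colon \omega_F \to G$ by sending $V_\omega$ via $\psi$ and each $V_{\omega_e}$ via $\phi_e$.

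The main verification is that $\widetilde{\psi}$ descends to a well-defined graph map on the quotient $\omega_F$. Well-definedness follows because $\phi_e(z_e) = \psi(s(e))$ and $\phi_e(\widehat{z}_e) = \psi(t(e))$ by construction, which exactly matches the identifications $s(e) \sim z_e$ and $t(e) \sim \widehat{z}_e$ from Definition~\ref{defi: exclamation point}. To see that $\widetilde{\psi}$ is a graph map, observe that every arrow of $\omega_F$ lies inside some $\omega_e$, and is thus preserved by $\phi_e$. Finally, since $z, \widehat{z} \in V_\omega$, one has $\widetilde{\psi}(z) = v$ and $\widetilde{\psi}(\widehat{z}) = v'$, so $(\omega_F, z, \widehat{z}) \in \oa_2 \diamond \oa_1$ witnesses $v \xrightarrow{\ff^{\oa_2 \diamond \oa_1}(G)} v'$. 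The only delicate point is handling the degenerate cases (self-loops and edges of $\omega$ whose endpoints are collapsed by $\psi$) via the constant-map fallback; this works cleanly because $G$ always contains every diagonal self-loop, so constant maps are always graph maps.
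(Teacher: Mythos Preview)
Your proof is correct and follows essentially the same approach as the paper's: both directions are established by decomposing a map out of $\omega_F$ into a skeleton map on $V_\omega$ together with restrictions to each attached $\omega_e$, and conversely gluing such pieces back together. Your treatment is in fact slightly more careful than the paper's in one respect: you explicitly handle the degenerate case where $\psi(s(e)) = \psi(t(e))$ (in particular self-loops of $\omega$) by falling back to a constant map, whereas the paper's proof tacitly asserts the existence of the witnessing $\phi_e$ in that case without comment.
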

\begin{proof}
Let $\ff_1 = \ff^{\oa_1}$, $\ff_2 = \ff^{\oa_2}$, $\ff = \ff^{\oa_2 \diamond \oa_1}$ and $G = (V, E) \in \catdigraph$.

Suppose $v \xrightarrow{\ff_2 (\ff_1(G))} v'$. By definition, there are $(\omega, z, \widehat{z}) \in \oa_2$ and a graph map $\phi\colon (\omega, z, \widehat{z}) \to (\ff_1(G), v, v')$. Denote the vertex set of $\omega$ by $V_\omega$ and the arrow set by $E_\omega$. Since $\phi$ is a graph map, for each $e \in E_\omega$ we have $\phi(s(e)) \seta[\ff_1(G)] \phi(t(e))$. Then, for every $e \in E_\omega$ there are $(\omega_e, z_{e}, \widehat{z}_{e}) \in \oa_1$ and a graph map $\phi_{e}\colon (\omega_{e}, z_{e}, \widehat{z}_{e}) \to (G, \phi(s(e)), \phi(t(e)))$. Let $F\colon E_\omega \to \oa_1$ be defined by $F(e) = (\omega_{e}, z_{e}, \widehat{z}_{e})$.

The graph maps $\phi$ and $\phi_{e}$ induce a graph map $\psi\colon (\omega_F, z, \widehat{z}) \to (G, v, v')$ (notice that if $x \in \omega_{e}$ and $y \in \omega_{e'}$ with $e \neq e'$, then there is no arrow between $x$ and $y$ in $\omega_F$). This implies that $v \xrightarrow{\ff(G)} v'$, and then $\ff_2 \circ \ff_1 \leq \ff$. 

Reciprocally, if $v \xrightarrow{\ff(G)} v'$, there is a graph map $\psi\colon (\omega_F, z, \widehat{z}) \to (G, v, v')$, for some $(\omega, z, \widehat{z}) \in \oa_2$ with vertex set $V_\omega$ and arrow set $E_\omega$, and for some $F\colon E_\omega \to \oa_1$, where $F(e) = (\omega_{e}, z_{e}, \widehat{z}_{e})$.

Consider the graph maps $\phi_{e}\colon (\omega_{e}, z_{e}, \widehat{z}_{e}) \to (G, \psi(s(e)), \psi(t(e)))$ obtained by restricting the domain of $\psi$, that is: $\phi_{e} = \psi|_{\omega_{e}}$. Thus, $\phi_{e}(s(e)) \xrightarrow{\ff_1(G)} \phi_{e}(t(e))$. Consider $\phi\colon (\omega, z, \widehat{z}) \to (\ff_1(G), v, v')$ defined by $\phi(x) = \psi(x)$. This is indeed a graph map since for any $e \in E_\omega$ we have 
\[
\phi(s(e)) = \phi_e(s(e)) \xrightarrow{\ff_1(G)} \phi_e(t(e)) = \phi(t(e)).
\]
Thus, $v \xrightarrow{\ff_2 \circ \ff_1(G)} v'$, that is: $\ff \leq \ff_2 \circ \ff_1$.
\end{proof}

\begin{example} \label{ex:: pointed}
In Remark~\ref{remark: symmetric but not representable} we have seen that $\ff = \ff^\mathrm{ls} \circ \ff^{[2]}$ is not representable. However, this endofunctor is pointed representable: $\ff^{[2]}$ is pointed represented by a line graph with 3 vertices, $\oa_1 = \{(L_3, a_1, a_3)\}$, and $\ff^\mathrm{ls}$ is pointed represented by the complete graph with two vertices, $\oa_2 = \{(\omega, z, \widehat{z})\}$. By the Theorem~\ref{theo: composition of pointed representable}, $\ff$ is pointed represented by $\oa_2 \diamond \oa_1 = \{(Z, z, \widehat{z})\}$, with $(Z, z, \widehat{z})$ isomorphic to $(C_4, a_1, a_3)$. See Figure~\ref{fig:composition example}.

\begin{figure}[ht]

\centering

\begin{tikzpicture}[scale=2, >=Stealth, thick]

\node[fill=black, circle, minimum size=4pt, inner sep=0pt] (a1) at (-1, 0){}; 
\node[fill=black, circle, minimum size=4pt, inner sep=0pt] (a2) at (0, 1){}; 
\node[fill=black, circle, minimum size=4pt, inner sep=0pt] (a3) at (1, 0){}; 
\node[fill=black, circle, minimum size=4pt, inner sep=0pt] (a4) at (0, -1){}; 
\node[left] at (a1){$z$};
\node[right] at (a3){$\widehat{z}$};
\draw[blue,dashed,-latex] (a1) to [out=20,in=160] (a3);
\draw[blue,dashed,-latex] (a3) to [out=-160,in=-20] (a1);

\draw[red,-latex] (a1) -- (a2);
\draw[red,-latex] (a2) -- (a3);
\draw[red,-latex] (a3) -- (a4);
\draw[red,-latex] (a4) -- (a1);
\end{tikzpicture}

\caption{In blue: the graph $(\omega, z, \widehat{z})$. In red: two copies of $L_3$. The graph $Z$ is obtained excluding from the picture the blue arrows. See Example~\ref{ex:: pointed}.} \label{fig:composition example}

\end{figure}
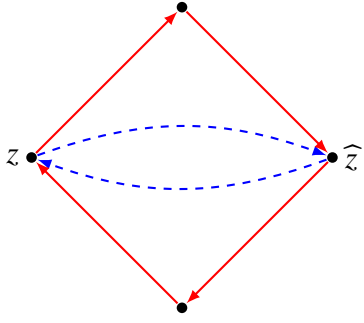

\end{example}

The interesting part of Theorem~\ref{theo: composition of pointed representable} is the formula for the composite, since it turns out that \textit{every} endofunctor is pointed representable.

For any endofunctor $\ff$, let 
\[
\Omega^\ast_\ff \coloneqq \{(\omega, z, \widehat{z}) \in \catdigraph^\ast \; | \; z \seta[\ff(\omega)] \widehat{z}\}.
\]

\begin{remark}[Interpretation of $\Omega^\ast_\ff$]
 The set $\Omega^\ast_\ff$ is composed of all graphs $(\omega, z, \widehat{z})$ such that $\ff(\omega)$ has at least one arrow, together with all arrows $z \to \widehat{z}$ generated by $\ff$. Intuitively, $\Omega^\ast_\ff$ is the collection of all possible arrows that $\ff$ can create, thus it is not surprising that all the information of $\ff$ is contained in $\Omega^\ast_\ff$ as stated by the following theorem.
\end{remark}
 
\begin{theorem} \label{theo: every functor is pointed representable}
For every endofunctor $\ff$, it holds $\ff = \ff^{\Omega^\ast_\ff}$.
\end{theorem}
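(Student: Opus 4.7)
The plan is to prove both inequalities $\ff \leq \ff^{\Omega^\ast_\ff}$ and $\ff^{\Omega^\ast_\ff} \leq \ff$ directly from the definitions, exploiting functoriality. Both directions are short; the content of the theorem lies in the clever choice of $\Omega^\ast_\ff$ rather than in any intricate argument.

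For the inclusion $\ff \leq \ff^{\Omega^\ast_\ff}$, fix $G = (V,E) \in \catdigraph$ and suppose $v \xrightarrow{\ff(G)} v'$ with $v \neq v'$. I would then take as representer the pointed graph $(G, v, v')$ itself: by the very definition of $\Omega^\ast_\ff$, the assumption $v \seta[\ff(G)] v'$ places $(G, v, v')$ in $\Omega^\ast_\ff$. The identity map $\mathrm{id}_G\colon (G, v, v') \to (G, v, v')$ is a graph map, witnessing that $v \xrightarrow{\ff^{\Omega^\ast_\ff}(G)} v'$. (The self-loop case $v = v'$ is automatic since all graphs contain $\Delta(V)$.)

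For the reverse inclusion $\ff^{\Omega^\ast_\ff} \leq \ff$, suppose $v \xrightarrow{\ff^{\Omega^\ast_\ff}(G)} v'$. By definition there exists $(\omega, z, \widehat{z}) \in \Omega^\ast_\ff$ and a graph map $\phi\colon (\omega, z, \widehat{z}) \to (G, v, v')$. The membership $(\omega,z,\widehat{z}) \in \Omega^\ast_\ff$ means exactly that $z \seta[\ff(\omega)] \widehat{z}$. Applying $\ff$ to $\phi$ yields a graph map $\phi\colon \ff(\omega) \to \ff(G)$ (still sending $z \mapsto v$ and $\widehat{z} \mapsto v'$), and graph maps preserve the relation $\seta$, so $v = \phi(z) \seta[\ff(G)] \phi(\widehat{z}) = v'$. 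Hence $v \xrightarrow{\ff(G)} v'$ (or $v = v'$), establishing $\ff^{\Omega^\ast_\ff}(G) \hookrightarrow \ff(G)$.

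Combining the two inclusions gives $\ff(G) = \ff^{\Omega^\ast_\ff}(G)$ for every $G \in \catdigraph$, which is the claimed equality of endofunctors. There is no real obstacle: the first direction uses $G$ as its own representer (so the "universe of representers" $\Omega^\ast_\ff$ is tautologically large enough), and the second direction is just functoriality applied to the defining witness $\phi$.
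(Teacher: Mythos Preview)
Your proof is correct and essentially identical to the paper's own argument: both directions use exactly the same witnesses (the identity on $G$ for $\ff \leq \ff^{\Omega^\ast_\ff}$, and functoriality applied to the defining $\phi$ for the reverse inclusion). The only difference is cosmetic---you treat the diagonal case $v=v'$ explicitly and present the inclusions in the opposite order.
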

\begin{proof}
Let $G$ be a graph. If $v \xrightarrow{\ff^{\Omega^\ast_\ff}(G)} v'$, then, by definition, there are $(\omega, z, \widehat{z}) \in \Omega^\ast_\ff$ and a graph map $\phi\colon (\omega, z, \widehat{z}) \to (G, v, v')$. Applying $\ff$ to this diagram, we obtain the graph map $\phi\colon (\ff(\omega), z, \widehat{z}) \to (\ff(G), v, v')$. By definition of $\Omega^\ast_\ff$, we have that $z \xrightarrow{\ff(\omega)} \widehat{z}$, and since $\phi$ is a graph map we also have $v \xrightarrow{\ff(G)} v'$.

Reciprocally, if $v \xrightarrow{\ff(G)} v'$, then $(G, v, v') \in \Omega^\ast_\ff$ and since the identity $\mathrm{id}_G\colon (G, v, v') \to (G, v, v')$ is a graph map, it follows that $v \xrightarrow{\ff^{\Omega^\ast_\ff}(G)} v'$.
\end{proof}

The set $\Omega^\ast_\ff$ is often unnecessarily large. For example, if $\ff = \ff^\mathrm{comp}$, then $\Omega^\ast_\ff$ is the set of all pointed graphs. Reducing its size is the content of the next section.

\section{Simplification and clustering} \label{sec: simplifications}
Given a set $\oa$ of pointed graphs, can one find a simpler (in some suitable sense) $\oa_0$ such that $\ffoa = \ff^{\oa_0}$? A necessary and sufficient condition for this is the following.

\begin{definition}
We say that $(\omega_2, z_2, \widehat{z}_2)$ \textit{covers} $(\omega_1, z_1, \widehat{z}_1)$ if there is a graph map $p\colon (\omega_2, z_2, \widehat{z}_2) \to (\omega_1, z_1, \widehat{z}_1)$. Given two families of graphs $\oa_1$ and $\oa_2$, we say that $\oa_2$ \textit{covers} $\oa_1$ if for any $(\omega_1, z_1, \widehat{z}_1) \in \oa_1$, there is $(\omega_2, z_2, \widehat{z}_2) \in \oa_2$ that covers $(\omega_1, z_1, \widehat{z}_1)$. We denote this by $\oa_1 \preccurlyeq \oa_2$.

Notice that $\oa_1 \subseteq \oa_2$ implies $\oa_1 \preccurlyeq \oa_2$.
\end{definition}

For the next two results, let $\oa_1$ and $\oa_2$ be two families of pointed graphs, and let the endofunctors they represented be denoted by $\ff_1 = \ff^{\oa_1}$ and $\ff_2 = \ff^{\oa_2}$, respectively.

\begin{theorem} \label{theo: ff_1 leq ff_2 se e só se oa_1 menor oa_2}
$\ff_1 \leq \ff_2 \Leftrightarrow \oa_1 \preccurlyeq \oa_2$.
\end{theorem}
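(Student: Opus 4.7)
The plan is to prove the two directions separately; both are fairly direct consequences of the definitions, together with the ``universal test'' trick of plugging an element of $\oa_1$ into $\ff_1$ itself.

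For the implication $(\Leftarrow)$, I assume $\oa_1 \preccurlyeq \oa_2$ and aim to show $\ff_1(G) \hookrightarrow \ff_2(G)$ for every $G \in \catdigraph$. Take $G$ and suppose $v \xrightarrow{\ff_1(G)} v'$. By definition of $\ff_1 = \ff^{\oa_1}$, there exist $(\omega_1, z_1, \widehat{z}_1) \in \oa_1$ and a graph map $\phi\colon (\omega_1, z_1, \widehat{z}_1) \to (G, v, v')$. Since $\oa_1 \preccurlyeq \oa_2$, there exist $(\omega_2, z_2, \widehat{z}_2) \in \oa_2$ and a graph map $p\colon (\omega_2, z_2, \widehat{z}_2) \to (\omega_1, z_1, \widehat{z}_1)$. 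The composition $\phi \circ p$ is a graph map $(\omega_2, z_2, \widehat{z}_2) \to (G, v, v')$, so $v \xrightarrow{\ff_2(G)} v'$, and hence $\ff_1 \leq \ff_2$.

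For the converse $(\Rightarrow)$, the key idea is to feed $\omega_1$ itself back into $\ff_1$. Assume $\ff_1 \leq \ff_2$ and pick an arbitrary $(\omega_1, z_1, \widehat{z}_1) \in \oa_1$. The identity map $\mathrm{id}_{\omega_1}\colon (\omega_1, z_1, \widehat{z}_1) \to (\omega_1, z_1, \widehat{z}_1)$ is obviously a graph map, so by the definition of $\ff_1$ we immediately get $z_1 \xrightarrow{\ff_1(\omega_1)} \widehat{z}_1$. Since $\ff_1 \leq \ff_2$, this gives $z_1 \xrightarrow{\ff_2(\omega_1)} \widehat{z}_1$, and unpacking the definition of $\ff_2 = \ff^{\oa_2}$ yields some $(\omega_2, z_2, \widehat{z}_2) \in \oa_2$ and a graph map $p\colon (\omega_2, z_2, \widehat{z}_2) \to (\omega_1, z_1, \widehat{z}_1)$. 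This is precisely the covering of $(\omega_1, z_1, \widehat{z}_1)$ required for $\oa_1 \preccurlyeq \oa_2$.

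Neither direction has a substantial obstacle; the only subtle point is recognizing that the universal witness for $(z_1, \widehat{z}_1)$ being an edge of $\ff_1(\omega_1)$ is the identity map on $(\omega_1, z_1, \widehat{z}_1)$ itself (the same device used in the proof of Theorem~\ref{theo: every functor is pointed representable}). Everything else is pure chasing of the definition of pointed representability and composition of graph maps.
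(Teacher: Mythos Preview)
Your proof is correct and matches the paper's argument essentially line for line: the $(\Leftarrow)$ direction composes the covering map with the witnessing graph map, and the $(\Rightarrow)$ direction applies $\ff_1$ to $\omega_1$ itself to obtain the edge $z_1 \to \widehat{z}_1$, then uses $\ff_1 \leq \ff_2$ and the definition of $\ff_2$ to extract the covering map. The only cosmetic difference is that you spell out the identity map as the witness for $z_1 \xrightarrow{\ff_1(\omega_1)} \widehat{z}_1$, whereas the paper simply asserts this is clear.
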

\begin{proof}
Let's prove the ($\Leftarrow$) implication first. Suppose $v \xrightarrow{\ff_1(G)} v'$. Then there are $(\omega_1, z_1, \widehat{z}_1) \in \oa_1$ and a graph map $\phi\colon (\omega_1, z_1, \widehat{z}_1) \to (G, v, v')$. Since $\oa_1 \preccurlyeq \oa_2$, there are $(\omega_2, z_2, \widehat{z}_2) \in \oa_2$ and a graph map $p\colon (\omega_2, z_2, \widehat{z}_2) \to (\omega_1, z_1, \widehat{z}_1)$. 

\begin{displaymath}
\begin{xymatrix}{
(\omega_2, z_2, \widehat{z}_2) \ar[d]^p \ar[dr]^{\phi \circ p} \\
(\omega_1, z_1, \widehat{z}_1) \ar[r]^\phi & (G, v, v').
}
\end{xymatrix}
\end{displaymath}

Then $\phi \circ p\colon (\omega_2, z_2, \widehat{z}_2) \to (G, v, v')$ is a graph map, which implies $v \xrightarrow{\ff_2(G)} v'$. Hence, $\ff_1 \leq \ff_2$.

Now for the ($\Rightarrow$) implication, let $(\omega_1, z_1, \widehat{z}_1) \in \oa_1$. It is clear that $z_1 \xrightarrow{\ff_1(\omega_1)} \widehat{z}_1$, and $z_1 \xrightarrow{\ff_2(\omega_1)} \widehat{z}_1$ follows from the fact that $\ff_1 \leq \ff_2$. By definition, there are $(\omega_2, z_2, \widehat{z}_2) \in \oa_2$ and a graph map $p\colon (\omega_2, z_2, \widehat{z}_2) \to (\omega_1, z_1, \widehat{z}_1)$. This proves that $\oa_1 \preccurlyeq \oa_2$.
\end{proof}

\begin{corollary}
We have $\ff_1 = \ff_2$ $\Leftrightarrow$ $\oa_1 \preccurlyeq \oa_2$ and $\oa_2 \preccurlyeq \oa_1$.
\end{corollary}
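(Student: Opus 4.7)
The plan is to reduce the equality $\ff_1=\ff_2$ to the conjunction of the two inequalities $\ff_1\leq\ff_2$ and $\ff_2\leq\ff_1$, and then invoke the immediately preceding Theorem~\ref{theo: ff_1 leq ff_2 se e só se oa_1 menor oa_2} once in each direction. So there is essentially no new content to prove here; the corollary is a formal consequence of the antisymmetry of the partial order $\leq$ on $\funct(\catdigraph,\catdigraph)$.

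First I would observe that, because functors are assumed to be vertex preserving, both $\ff_1(G)$ and $\ff_2(G)$ have the same vertex set as $G$, and the relation $\hookrightarrow$ defining $\leq$ is then just set inclusion on edge sets. Consequently $\ff_1(G)=\ff_2(G)$ for every $G$ (which by the convention fixed in Section~\ref{sec: endofunctors} is the definition of $\ff_1=\ff_2$) is equivalent to having both $\ff_1(G)\hookrightarrow\ff_2(G)$ and $\ff_2(G)\hookrightarrow\ff_1(G)$ for every $G$, i.e.\ to $\ff_1\leq\ff_2$ and $\ff_2\leq\ff_1$.

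Then I would simply apply Theorem~\ref{theo: ff_1 leq ff_2 se e só se oa_1 menor oa_2}: the first inequality is equivalent to $\oa_1\preccurlyeq\oa_2$, and the second to $\oa_2\preccurlyeq\oa_1$. Conjoining these two equivalences yields the statement. There is no genuine obstacle to this argument; the only thing to be careful about is making explicit that ``$\ff_1=\ff_2$'' is the pointwise equality of functors (as declared in the definition of $\funct(\catdigraph,\catdigraph)$), so that the decomposition into two $\leq$ conditions is valid, and that both directions of the previous theorem are being used — the forward direction to extract the covering relations from equality, and the reverse direction to deduce equality from the two covering relations.
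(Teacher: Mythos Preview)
Your proposal is correct and matches the paper's approach: the paper states this corollary without proof, treating it as an immediate consequence of Theorem~\ref{theo: ff_1 leq ff_2 se e só se oa_1 menor oa_2} via the antisymmetry of $\leq$, exactly as you describe.
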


Via Proposition~\ref{prop: pointed and not pointed} we can obtain a theorem similar to Theorem~\ref{theo: ff_1 leq ff_2 se e só se oa_1 menor oa_2} which applies in the case of standard (non pointed) representable functors.

\begin{definition}
Let $\Omega_1$ and $\Omega_2$ be any two families of graphs. We say that $\Omega_2$ \textit{covers} $\Omega_1$ if for any $\omega_1 \in \Omega_1$ and $z_1, \widehat{z}_1 \in \omega_1$, there are $\omega_2 \in \Omega_2$ and a graph map $p\colon \omega_2 \to (\omega_1, z_1, \widehat{z}_1)$. We denote this by $\Omega_1 \preccurlyeq \Omega_2$. Notice that $\Omega_1 \preccurlyeq \Omega_2$ is equivalent to $P^\ast(\Omega_1) \preccurlyeq P^\ast(\Omega_2)$, where $P^\ast$ is as in Proposition~\ref{prop: pointed and not pointed}.
\end{definition}

\begin{corollary}
Let $\Omega_1$ and $\Omega_2$ be two families of graphs. Then $\ff^{\Omega_1} \leq \ff^{\Omega_2}$ $\Leftrightarrow$ $\Omega_1 \preccurlyeq \Omega_2$.
\end{corollary}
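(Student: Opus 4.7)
The plan is to bootstrap this corollary from the pointed version (Theorem~\ref{theo: ff_1 leq ff_2 se e só se oa_1 menor oa_2}) by passing through the pointification $P^\ast$. Proposition~\ref{prop: pointed and not pointed} already tells us that $\ff^{\Omega_i} = \ff^{P^\ast(\Omega_i)}$ for $i = 1, 2$, so the inequality $\ff^{\Omega_1} \leq \ff^{\Omega_2}$ is equivalent to $\ff^{P^\ast(\Omega_1)} \leq \ff^{P^\ast(\Omega_2)}$. Applying Theorem~\ref{theo: ff_1 leq ff_2 se e só se oa_1 menor oa_2} to the two families of pointed graphs $P^\ast(\Omega_1)$ and $P^\ast(\Omega_2)$, this is in turn equivalent to $P^\ast(\Omega_1) \preccurlyeq P^\ast(\Omega_2)$.

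To close the loop, I would invoke the remark (embedded in the preceding definition) that $\Omega_1 \preccurlyeq \Omega_2$ is equivalent to $P^\ast(\Omega_1) \preccurlyeq P^\ast(\Omega_2)$. If any justification is wanted for that equivalence, it is immediate from unwinding definitions: a cover $p\colon \omega_2 \to (\omega_1, z_1, \widehat{z}_1)$ witnessing $\Omega_1 \preccurlyeq \Omega_2$ picks out the pointed graph $(\omega_2, p^{-1}\text{-preimages of } z_1, \widehat{z}_1)$---more precisely, choose any $z_2, \widehat{z}_2 \in \omega_2$ with $p(z_2) = z_1$ and $p(\widehat{z}_2) = \widehat{z}_1$, which exist because $p$ is forced to hit those two marked vertices. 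Conversely, forgetting the basepoints in a pointed cover yields a graph cover.

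Chaining these three equivalences in sequence gives the desired biconditional. The main obstacle, such as it is, is purely notational rather than mathematical: one must be careful that the ``covers'' relation defined just above for (unpointed) families of graphs really does match the pointed version under $P^\ast$, because the marked vertices $z_1, \widehat{z}_1$ in the definition of $\Omega_1 \preccurlyeq \Omega_2$ range over \emph{all} pairs in $\omega_1$, which is precisely what it takes to include every element of $P^\ast(\Omega_1)$. Once this compatibility is noted, the corollary follows with no further work.
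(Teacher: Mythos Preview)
Your proposal is correct and follows exactly the same route as the paper: pass to $P^\ast$ via Proposition~\ref{prop: pointed and not pointed}, apply Theorem~\ref{theo: ff_1 leq ff_2 se e só se oa_1 menor oa_2}, and then use the observation (recorded in the definition) that $\Omega_1 \preccurlyeq \Omega_2$ iff $P^\ast(\Omega_1) \preccurlyeq P^\ast(\Omega_2)$. The paper's proof is the same chain of three equivalences, only more tersely stated.
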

\begin{proof}
We already know that $\Omega_1 \preccurlyeq \Omega_2 \Leftrightarrow {P^\ast(\Omega_1)} \preccurlyeq {P^\ast(\Omega_2)}$. By Proposition~\ref{prop: pointed and not pointed}, $\ff^{\Omega_1} = \ff^{P^\ast(\Omega_1)}$ and $\ff^{\Omega_2} = \ff^{P^\ast(\Omega_2)}$. 
Then,
\[
\ff^{\Omega_1} \leq \ff^{\Omega_2} \Leftrightarrow \ff^{P^\ast(\Omega_1)} \leq \ff^{P^\ast(\Omega_2)} \Leftrightarrow {P^\ast(\Omega_1)} \preccurlyeq {P^\ast(\Omega_2)} \Leftrightarrow \Omega_1 \preccurlyeq \Omega_2,
\]
where the second equivalence follows by Theorem~\ref{theo: ff_1 leq ff_2 se e só se oa_1 menor oa_2}.
\end{proof}

\begin{example}
Given $\oa \subset \catdigraph^\ast$ and $(\omega_0, z_0, \widehat{z}_0) \in \oa$, let 
\[
A_0 \coloneqq \{(\omega, z, \widehat{z}) \in \oa \; | \; (\omega_0, z_0, \widehat{z}_0) \text{ covers } (\omega, z, \widehat{z}) \},
\]
and $\oa_0 = (\oa \setminus A_0) \cup \{(\omega, z, \widehat{z})\} $. Then $\oa_0 \subseteq \oa$ implies $\oa_0 \preccurlyeq \oa$, and, moreover, we have $\oa \preccurlyeq \oa_0$. Thus, $\ffoa = \ff^{\oa_0}$. We can regard $\oa_0$ as a simplification of $\oa$ which yields the same pointed representable functor. 

Consider, for example, $\oa = \{(L_n, a_1, a_n), n \leq m\}$, for a given $m \in \mathbb{N}$. Then, choosing $(L_m, a_1, a_m)$ as the pointed graph $(\omega_0, z_0, \widehat{z}_0)$ above, we obtain $\oa_0 = \{(L_m, a_1, a_m)\}$ and $\ff^{[m]} = \ffoa = \ff^{\oa_0}$, that is: we can ``throw away'' every $(L_n, a_1, a_n)$ with $n < m$ and still obtain the same functor. When $\oa = \{(L_n, a_1, a_n), n \in \mathbb{N}\}$, however, we cannot obtain a finite family $\oa_0$ such that $\ffoa = \ff^{\oa_0}$.

Another example: take $\oa$ as the set of all pointed graphs, and $\oa_0 = \{D_2\}$. We have $\oa_0 \preccurlyeq \oa$ because $\oa_0 \subset \oa$, and $\oa \preccurlyeq \oa_0$, since for any $(\omega, z, \widehat{z}) \in \oa$ the map $\phi\colon (D_2, a_1, a_2) \to (\omega, z, \widehat{z})$ is a graph map. Thus, $\ffoa = \ff^{\oa_0} = \ff^\mathrm{comp}$.
\end{example}

\subsection{Clustering functors}

We already know (Proposition~\ref{prop: representable is symmetric}) that any representable functor $\ffo$ is symmetric. We now establish conditions under which $\ffo$ turns out to be a clustering functor (that is, $\ffo$ is also transitive). 

\begin{definition}
For any $G_1, G_2 \in \catdigraph$, $v_1 \in G_1$ and $v_2 \in G_2$, define the \textit{wedge product} of $(G_1, v_1)$ and $(G_2, v_2)$ by
\[
(G_1, v_1) \vee (G_2, v_2) \coloneqq \dfrac{G_1 \sqcup G_2}{v_1 \sim v_2},
\]
that is: we identify $v_1$ and $v_2$ in the disjoint union $G_1 \sqcup G_2$. 

Given two families of graphs $\Omega_1$ and $\Omega_2$, define 
\[
\Omega_1 \vee \Omega_2 \coloneqq \bigcup \limits_{\substack{\omega_i \in \Omega_i, \; z_i \in \omega_i
\\
i = 1, 2}}(\omega_1, z_1) \vee (\omega_2, z_2).
\]

We say that $\Omega$ is \textit{wedge covered} whenever $\Omega \vee \Omega \preccurlyeq \Omega$.
\end{definition}

\begin{theorem} \label{theo: wedge covered families}
$\Omega$ is a wedge covered family of graphs $\Leftrightarrow$ $\ffo$ is a clustering functor.
\end{theorem}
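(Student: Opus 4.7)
The plan is to exploit that $\ffo$ is automatically symmetric by Proposition~\ref{prop: representable is symmetric}, reducing the stated equivalence to: wedge cover iff \emph{transitivity} of $\ffo$. I would then split into two implications.

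For the direction ``$\Omega$ wedge covered $\Rightarrow$ $\ffo$ is transitive,'' I start from arrows $v \xrightarrow{\ffo(G)} v'$ and $v' \xrightarrow{\ffo(G)} v''$ in some graph $G$. Unfolding the definition of $\ffo$ yields $\omega_1, \omega_2 \in \Omega$, vertices $z_i, \widehat{z}_i \in \omega_i$, and graph maps $\phi_i\colon \omega_i \to G$ with $\phi_1(z_1) = v$, $\phi_1(\widehat{z}_1) = v' = \phi_2(z_2)$, and $\phi_2(\widehat{z}_2) = v''$. I form $W \coloneqq (\omega_1, \widehat{z}_1) \vee (\omega_2, z_2) \in \Omega \vee \Omega$; since $\phi_1$ and $\phi_2$ agree on the identified vertex (both sending it to $v'$), they glue to a graph map $\Phi\colon W \to G$ with $\Phi(z_1) = v$ and $\Phi(\widehat{z}_2) = v''$. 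Applying the wedge cover hypothesis to the pointed wedge $(W, z_1, \widehat{z}_2)$ produces $\omega \in \Omega$ and $p\colon \omega \to (W, z_1, \widehat{z}_2)$, and the composite $\Phi \circ p\colon \omega \to (G, v, v'')$ witnesses $v \seta[\ffo(G)] v''$.

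For the converse ``$\ffo$ transitive $\Rightarrow$ $\Omega$ wedge covered,'' I pick an arbitrary $W = (\omega_1, z_1) \vee (\omega_2, z_2) \in \Omega \vee \Omega$ with wedge point $v$, and arbitrary $z, \widehat{z} \in W$, and look for $\omega \in \Omega$ and a graph map $\omega \to (W, z, \widehat{z})$. The key observation is that for each $i$ the identity map on $\omega_i$ witnesses $x \seta[\ffo(\omega_i)] y$ for all $x, y \in \omega_i$; functoriality applied to the inclusion $\omega_i \hookrightarrow W$ transfers this to $\ffo(W)$. If both $z, \widehat{z}$ lie on a single side $\omega_i$, the inclusion itself is the required covering. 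Otherwise $z$ lies in one side and $\widehat{z}$ in the other, so $z \seta[\ffo(W)] v$ and $v \seta[\ffo(W)] \widehat{z}$; transitivity of $\ffo$ yields $z \seta[\ffo(W)] \widehat{z}$, and unfolding the definition of $\ffo$ supplies the desired $\omega \in \Omega$ together with a graph map to $(W, z, \widehat{z})$.

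The main delicate point is the case analysis in the converse, including the degenerate sub-case $z = \widehat{z}$ (handled by any constant map $\omega \to W$ sending all vertices to $z$, using any $\omega \in \Omega$; one may assume $\Omega \ne \emptyset$, since otherwise both sides of the equivalence are vacuous). The wedge-gluing $\Phi$ in the forward direction is well-defined as a graph map because every arrow of $W$ lies entirely within one of $\omega_1$ or $\omega_2$, so $\Phi$ restricted to either side is simply $\phi_i$, which is a graph map by hypothesis.
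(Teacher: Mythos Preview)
Your proposal is correct and follows essentially the same route as the paper's own proof: both directions use the same wedge-gluing construction for $(\Rightarrow)$ and the same case analysis through the wedge point for $(\Leftarrow)$, with symmetry dispatched via Proposition~\ref{prop: representable is symmetric}. Your added remarks on the degenerate case $z = \widehat{z}$ and on why the glued map $\Phi$ is a graph map are sound and simply make explicit what the paper leaves implicit.
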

\begin{proof}
Let's prove the ($\Rightarrow$) implication first. Let $G$ be a graph. Suppose that $v_1 \xrightarrow{\ffo(G)} v_2$ and $v_2 \xrightarrow{\ffo(G)} v_3$. Then, for some $\omega_1, \omega_2 \in \Omega$, there exist graph maps $\phi_1\colon (\omega_1, z_1, \widehat{z}_1) \to (G, v_1, v_2)$ and $\phi_2\colon (\omega_2, z_2, \widehat{z}_2) \to (G, v_2, v_3)$. Let $\omega = (\omega_1, \widehat{z}_1) \vee (\omega_2, z_2)$. Then the map $\phi\colon (\omega, z_1, \widehat{z}_2) \to (G, v_1, v_3)$ given by 
\[
\phi(v) = 
\begin{cases}
\phi_1(v), & \text{if } v \in \omega_1, \\
\phi_2(v), & \text{if } v \in \omega_2,
\end{cases} 
\]
is a well defined graph map. Since $\Omega$ is wedge covered, there are $\tilde{\omega} \in \Omega$ and a graph map $p\colon (\tilde{\omega}, z, \widehat{z}) \to (\omega, z_1, \widehat{z}_2)$. The composite $\phi \circ p\colon (\tilde{\omega}, z, \widehat{z}) \to (G, v_1, v_3)$ is a graph map, which implies $v_1 \xrightarrow{\ffo(G)} v_3$, that is, $\ffo$ is transitive. That $\ffo$ is symmetric follows from it being representable.

Now let's prove the ($\Leftarrow$) implication. Let $z_1 \in \omega_1 \in \Omega$ and $z_2 \in \omega_2 \in \Omega$. Consider $G = (\omega_1, z_1) \vee (\omega_2, z_2)$. Let $v, v' \in G$. Consider the inclusions $i_1\colon \omega_1 \hookrightarrow G$ and $i_2\colon \omega_2 \hookrightarrow G$. Suppose $v \in i_1(\omega_1)$ and $v' \in i_2(\omega_2)$. Then we have $v \xrightarrow{\ff(G)} i_1(z_1)$ and $i_2(z_2) \xrightarrow{\ff(G)} v'$. Since $\ff(G)$ is transitive and $i_1(z_1) = i_2(z_2)$, we have $v \xrightarrow{\ff(G)} v'$. By definition of wedge covered, there are $\omega \in \Omega$ and a graph map $p\colon \omega \to (G, v, v')$. The cases when both $v, v' \in i_1(\omega_1)$ or $v, v' \in i_2(\omega_2)$ are immediate. This proves that $\Omega$ is wedge covered.
\end{proof}

\begin{example} \label{rec-e-nao-rec}
Two important examples of wedge covered families are:
\begin{enumerate}
\item the set of arbitrary size \textit{reciprocal} line graphs: $\Omega = \{\ff^\mathrm{us}(L_n), n \in \mathbb{N}\}$. $\ffo$ is called \textit{the reciprocal clustering functor}, denoted by $\ff^\mathrm{rec}$. Notice that $\ff^\mathrm{us}(L_n)$ has vertex set $\{a_1, \dots, a_n\}$ and arrows $a_i \leftrightarrow a_{i+1}$, for $i = 1, \dots, n-1$.

\item the set of arbitrary size cycle graphs $\Omega = \{C_n, n \in \mathbb{N}\}$. The functor  $\ffo$ is called \textit{the non-reciprocal clustering functor}, denoted by $\ff^\mathrm{nrec}$.
\end{enumerate}
\end{example}

\section{Relations with hierarchical clustering} \label{sec: relations to hc}

In this section we will explore how functors $\ff\colon \catgraph \to \catgraph$ induce functors on the category of networks (and, as special case, metric spaces). This will allow us to create clustering functors which can be visualized as \textit{treegrams}.

\subsection{Hierarchical clustering of extended networks} \label{subsec: hierarchical clustering}

We now introduce a suitable notion of network.

\begin{definition}
An \textit{extended network} is a pair $\bx = (X, w_X)$, where $X$ is a finite set, and $w_X\colon X \times X \to \mathbb{R} \cup \{ + \infty \}$, with $w_X(x, x) < +\infty$, for any $x \in X$, is a function (called \textit{weight function}). A \textit{network map} $\phi\colon \bx \to \by$, where $\by = (Y, w_Y)$, is a map $\phi\colon X \to Y$ such that $w_Y(\phi(x), \phi(x')) \leq w_X(x, x')$ for any $x, x' \in X$, with the convention that $a < + \infty$, for all $a \in \mathbb{R}$.
\end{definition}

Denote by $\netex$ the category of extended networks and network maps. Dropping the prefix ``extended'' means that the codomain of the weight function is $\mathbb{R}$, instead of $\mathbb{R} \cup \{+\infty\}$. Thus, the category $\net$ of networks (see~\cite{Memoli2017DiIs}) is a subcategory of $\netex$.

For any $\bx = (X, w_X) \in \netex$ and $\epsilon \in \mathbb{R}$, define the graph $\bx_\epsilon$ with vertex set 
\[
V(\bx_\epsilon) \coloneqq \{ x \in X \; | \; w_X(x, x) \leq \epsilon \}
\] 
and arrow set 
\[
E(\bx_\epsilon) \coloneqq \{ (x, x') \in V(\bx_\epsilon) \times V(\bx_\epsilon) \; | \; w_X(x, x') \leq \epsilon \}.
\]

Any functor $\ff\colon \catdigraph \to \catdigraph$ induces a functor $\ffh\colon \netex \to \netex$ defined on the objects of $\netex$ by $\ffh(\bx) = (X, w_X^\ff)$, where 
\[
w_X^\ff(x, x') \coloneqq \min \left\{ \epsilon \in \mathbb{R} \; | \; x \seta[\ff(\bx_\epsilon)] x'\right\},
\]
and where we consider the minimum over the empty set to be $+ \infty$. This is equivalent to stating  that $\ffh(\bx)_\epsilon \coloneqq \ff(\bx_\epsilon)$. 
Now let $\phi\colon \bx \to \by$ be a network map, with $\by = (Y, w_Y)$. On morphisms, define $\ffh(\phi) \coloneqq \phi$. Let's prove that $\phi\colon \ffh(\bx) \to \ffh(\by)$ is a network map. 

It is clear that $\phi$ induces graph maps $\phi_\epsilon\colon \bx_\epsilon \to \by_\epsilon$ for all $\epsilon \in \R$, because 
\[
x \seta[\bx_\epsilon] x' \Leftrightarrow
w_X(x, x') \leq \epsilon \Rightarrow
w_Y(\phi(x), \phi(x')) \leq \epsilon \Leftrightarrow
\phi(x) \seta[\by_\epsilon] \phi(x').
\]
The functoriality of $\ff$ guarantees that $\phi_\epsilon\colon \ff(\bx_\epsilon) \to \ff(\by_\epsilon)$ is also a graph map, and the following
\[
w_X^\ff(x, x') \leq \epsilon  \Leftrightarrow x \seta[\ff(\bx_\epsilon)] x'\Rightarrow
\phi(x) \seta[\ff(\by_\epsilon)] \phi(x') \Leftrightarrow w_Y^\ff(\phi(x), \phi(x')) \leq \epsilon
\]
implies that $\phi\colon \ffh(\bx) \to \ffh(\by)$ is a network map.

An extended network $\bx = (X, w_X)$ is said to be \textit{symmetric} if $w_X$ is symmetric, that is, $w_X(x, x') = w_X(x', x)$, for any $x, x' \in X$. We denote the category of symmetric extended networks by $\netex^\mathrm{sym}$, and we write $\net^\mathrm{sym} = \net \cap \netex^\mathrm{sym}$. Thus, $\bx \in \netex^\mathrm{sym}$ implies that $\bx_\epsilon \in \catdigraph^\mathrm{sym}$ for all $\epsilon$.

\begin{definition}
An \textit{extended ultranetwork} is an extended network $\bx = (X, u_X)$ that satisfies the \textit{strong triangle inequality}: $u_X(x, x'') \leq \max\{u_X(x, x'), u_X(x', x'')\}$ for all $x, x', x'' \in X$. This implies that $\bx_\epsilon \in \catdigraph^\mathrm{trans}$ for all $\epsilon$. Denote by $\ultranetex$ the subcategory of $\netex$ whose objects are the extended ultranetworks, and the obvious analogues adding the modifiers ``symmetric'' or ``extended'': $\ultranet$, $\ultranetex^\mathrm{sym}$ and $\ultranet^\mathrm{sym}$.  

Any $\bx = (X, u_X) \in \ultranetex^\mathrm{sym}$ can be mapped to a function $T_X\colon \mathbb{R} \to \subpart(X)$, where $\subpart(X)$ is the set of partitions of subsets of $X$ and $T_X(\epsilon)$ is the quotient of $V(\bx_\epsilon)$ by the following equivalence relation: $x {\sim}_\epsilon x' \Leftrightarrow u_X(x, x') \leq \epsilon$. The map $T_X$, called a \textit{treegram}, satisfies the following properties:
\begin{itemize}
\item (Boundary conditions) $T_X(\epsilon) = \emptyset$, for $\epsilon < \epsilon_0 = \min \limits_{x, x' \in X} u_X(x, x')$; and $T_X(\epsilon) = T_X(\epsilon_1)$ for $\epsilon \geq \epsilon_1 = \max \{u_X(X \times X) \setminus \{+ \infty\} \}$.
\item (Hierarchy) For any $\epsilon \leq \epsilon'$, $x {\sim}_\epsilon x'$ implies $x \sim_{\epsilon'} x'$.
\item (Right continuity) For any $\epsilon$, there exists $\delta > 0$ such that $T_X(\epsilon) = T_X(\epsilon + r)$, for any $r \in [0, \delta]$.
\end{itemize}

The term ``treegram'' is taken from~\cite{MemoliHiReOf}, but here it is a bit more general since we don't need condition 2 in~\cite[Definition 1]{MemoliHiReOf}: that $T_X(\epsilon) = \{X\}$ for some $\epsilon$.  We depict a treegram as in Figure~\ref{fig: treegram}. Reading it from left to right, a vertex $x$ appears in the tree at the parameter $u_X(x,x)$ and merges with $x'$ at parameter $u_X(x, x')$. See Example~\ref{ex: treegram}.
\end{definition}

\begin{example} \label{ex: treegram}
Let $\bx = (X, u_X) \in \ultranetex^\mathrm{sym}$ be as follows: $X$ is the set $\{x_1, \ldots, x_4\}$ and $u$ is given by the following matrix:
\[
u = 
\begin{pmatrix}
0 & 3 & 4 & + \infty \\ 
3 & 1 & 4 & + \infty \\ 
4 & 4 & 2 & + \infty \\ 
+ \infty & + \infty & + \infty & 0
\end{pmatrix},
\]
where the entry $(i, j)$ of the matrix is $u_X(x_i, x_j)$. We can depict $\bx$ as in Figure~\ref{fig: treegram}.

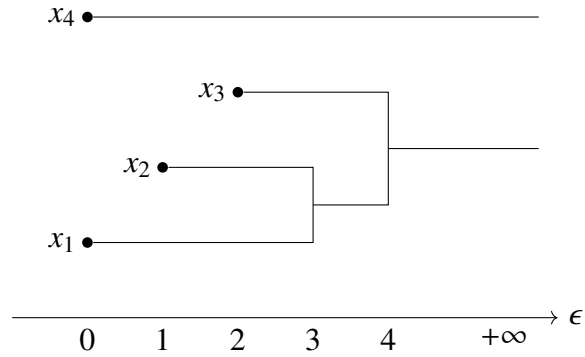
\begin{figure}[ht] 

\centering

\begin{tikzpicture}[sloped] 
\node[fill=black, circle, minimum size=4pt, inner sep=0pt] (a) at (0,1) {}; 
\node[fill=black, circle, minimum size=4pt, inner sep=0pt] (b) at (1,2) {}; 
\node[fill=black, circle, minimum size=4pt, inner sep=0pt] (c) at (2,3) {};
\node[fill=black, circle, minimum size=4pt, inner sep=0pt] (d) at (0,4) {};
\node (ab) at (3,1.5){};
\node (abc) at (4, 2.25){};
\draw  (a) -| (ab.center);
\draw  (b) -| (ab.center);
\draw  (c) -| (abc.center);
\draw  (ab.center) -| (abc.center);
\draw[->] (-1,0) --  (6.25,0)node[right]{$\epsilon$};

\node[left] at (a){$x_1$};
\node[left] at (b){$x_2$};
\node[left] at (c){$x_3$};
\node[left] at (d){$x_4$};
\node[below] at (0,0) {0};
\node[below] at (1,0) {1};
\node[below] at (2,0) {2};
\node[below] at (3,0) {3};
\node[below] at (4,0) {4};
\node[below left] at (6,0) {$+\infty$};
\draw[black] (d) -- (6, 4);
\draw[black] (abc.center) -- (6, 2.25);
\end{tikzpicture}

\caption{A graphical representation of the treegram associated to $\bx$ (see Example~\ref{ex: treegram}). Notice that two branches may never merge.}
\label{fig: treegram}

\end{figure}
\end{example}

\begin{remark} \label{remark:: about ffhat}
Let $\ff_1, \ff_2\colon \catdigraph \to \catdigraph$ with $\ff_1 \leq \ff_2$, and $\bx = (X, w_X) \in \netex$. Let $\ff_1(\bx) = (X, w^\ff_1)$ and $\ff_2(\bx) = (X, w^\ff_2)$. For any $\epsilon \in \mathbb{R}$, we have $\ff_1(\bx_\epsilon) \hookrightarrow \ff_2(\bx_\epsilon)$. Then for any $x, x' \in X$, we have $w^\ff_2(x, x') \leq w^\ff_1(x, x')$.
\end{remark}

\begin{definition}[Axiom of value] \label{def: axiom of value}
We say that the endofunctor $\ff\colon \catdigraph \to \catdigraph$ satisfies the \textit{axiom of value (for graphs)}, or simply \textbf{A1}, if $\ff(L_2) = D_2$ and $\ff(K_2) = K_2$. This definition is analogous to the one in~\cite{Carlssonetal13AxiCons}.
\end{definition}

Notice that the condition $\ff(L_2) = D_2$ implies $\ff(D_2) = D_2$.

\begin{remark}
In~\cite{Carlssonetal13AxiCons}, the authors are concerned with functors $$\mathfrak{H}\colon \disnet \to \disnet,$$ where $\disnet$ is the subcategory of $\net$ whose objects are \textit{dissimilarity networks:} pairs $(X, w_X)$ where $w_X(x, x') \geq 0$ for all $x, x' \in X$, and $w_X(x, x') = 0$ $\Leftrightarrow$ $x = x'$. Such functors are required to satisfy that $\by \coloneqq \mathfrak{H}(\bx)$ be a symmetric ultranetwork for all $\bx \in \disnet$. The \textit{axiom of value (for networks)} of~\cite{Carlssonetal13AxiCons} states that if for some $\epsilon$ one has $\bx_\epsilon = D_2$ or $L_2$ then $\by_\epsilon = D_2$, and if $\bx_\epsilon = K_2$ then $\by_\epsilon = K_2$. This definition in turn inspired Definition~\ref{def: axiom of value} . It is true that if $\ff \colon \catdigraph \to \catdigraph$ satisfies the axiom of value (for graphs) then $\ffh$ satisfies the axiom of value (for networks). However, not all functors $\mathfrak{H}$ are equal to some $\ffh$, as shown by Example \ref{ex: h not equal to ffh} below.

\begin{example} \label{ex: h not equal to ffh}
Let $\mathfrak{H} \colon \disnet \to \disnet$ be the \textit{grafting functor} of~\cite[Proposition 3]{Carlssonetal13AxiCons}. It is defined, for a given $\beta > 0$, as follows: 
\[
\big(\mathfrak{H}(\bx)\big)_\epsilon \coloneqq \begin{cases}
\ff^\mathrm{nrec}(\bx_\epsilon) \cap \ff^\mathrm{nrec}(\bx_\beta), & \epsilon \leq \beta, \\
\ff^\mathrm{rec}(\bx_\epsilon), & \epsilon > \beta,
\end{cases}
\]
for any $\bx  \in \disnet$.

Take $\beta = 2$ and $\bx = (X, w_X)$ where $X = \{x_1, x_2, x_3\}$ and the weight map $w_X$ in  matrix form is (see Figure~\ref{fig: ex-no-rep})
\[
w_X \coloneqq \begin{pmatrix}
0 & 2 & 4 \\
4 & 0 & 2 \\
2 & 4 & 0
\end{pmatrix}.
\] 
Thus, $X_\epsilon = \emptyset$ for $\epsilon < 0$, $\bx_\beta \cong C_3$, and
\[
\bx_\epsilon \cong \begin{cases}
D_3, & 0 \leq \epsilon < 2, \\
C_3, & 2 \leq \epsilon < 4, \\
K_3, & \epsilon \geq 4.
\end{cases}
\]

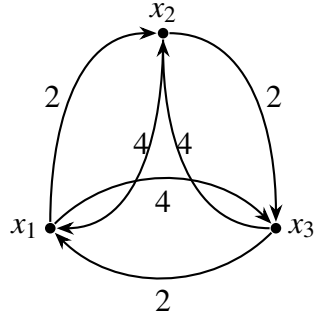
\begin{figure}[ht] 
\centering
\begin{tikzpicture}[>=Stealth,thick]
\node[fill=black, circle, minimum size=4pt, inner sep=0pt] (a1) at (0,0) {}; 
\node[fill=black, circle, minimum size=4pt, inner sep=0pt] (c1) at (3,0) {}; 
\node[fill=black, circle, minimum size=4pt, inner sep=0pt] (b1) at (1.5,1.5 * 1.73) {};
\draw[->] (a1) to [out=90,in=180]  node[anchor=east] {2} (b1);
\draw[->] (b1) to [out=-90,in=0] node[anchor=south] {4} (a1);
\draw[->] (b1) to [out=0,in=90] node[anchor=west] {2} (c1);
\draw[->] (c1) to [out=180,in=-90] node[anchor=south] {4} (b1);
\draw[->] (a1) to [out=45,in=135] node[anchor=north] {4} (c1);
\draw[->] (c1) to [out=-135,in=-45] node[anchor=north] {2} (a1);

\node[left] at (a1){$x_1$};
\node[above] at (b1){$x_2$};
\node[right] at (c1){$x_3$};
\end{tikzpicture} 
\caption{Dissimilarity network $\mathbf{X}$ from Example~\ref{ex: h not equal to ffh}} \label{fig: ex-no-rep}
\end{figure}

If $\mathfrak{H}$ were equal to $\ffh$, for some endofunctor $\ff$, we should have, for $\epsilon = 2$, 
\[
\ff(C_3) \cong \ff(\bx_\epsilon) = \big(\mathfrak{H}(\bx)\big)_\epsilon =  \ff^\mathrm{nrec}(\bx_\epsilon) \cap \ff^\mathrm{rec}(\bx_\beta) \cong K_3 \cap D_3 = D_3.
\]

Now considering $\by = (Y, w_Y)$ where $Y = X$ and $w_Y$ is obtained dividing $w_X$ by 2, we obtain, for $\epsilon = 1$,
\[
\ff(C_3) \cong \ff(\by_\epsilon) = \big(\mathfrak{H}(\by)\big)_\epsilon = \ff^\mathrm{nrec}(\by_\epsilon) \cap \ff^\mathrm{rec}(\by_\beta) \cong K_3 \cap K_3 = K_3.
\] 
Thus, $\ff(C_3) = D_3$ and $\ff_(C_3) = K_3$, a contradiction.
\end{example}

Example \ref{ex: h not equal to ffh} above shows that not all functors $\mathfrak{H}\colon \disnet \to \disnet$ arise as some $\ffh$. On the other hand, the functors $\ffh$ have as domain the category of extended networks, which contains the category of dissimilarity networks. Most functors $\mathfrak{H}\colon \disnet \to \disnet$ in~\cite{Carlssonetal13AxiCons} are equal to some $\ffh$: the reciprocal, non-reciprocal, semi-reciprocal, directed single linkage and unilateral functors. Another advantage of considering functors $\ffh$ is that they are simpler in the sense that we can work at the graph level instead of the network level: in order to determine $\ffh(\bx)_\epsilon$ all we need to consider is $\bx_\epsilon$, in contrast to the representable functors of~\cite{Carlsson10ClCl}, which implicitly assume knowledge of the network at (some) parameters greater than $\epsilon$. As a consequence, the proofs are much shorter, including the proof of stability (see Theorem~\ref{theo: stability}), which has as a corollary the stability of all functors mentioned in the previous sentence.
\end{remark}

\begin{remark} Property \textbf{A1} is not related to symmetry nor transitivity, as the following examples show:
\begin{itemize} 
\item $\ff$ transitive $\nRightarrow$ $\ff$ satisfies \textbf{A1}. To see this, consider $\ff^\mathrm{tc}$.
\item $\ff$ satisfies \textbf{A1} $\nRightarrow$ $\ff$ is transitive. Consider $\ff^\mathrm{ls}$. 
\item $\ff$ symmetric $\nRightarrow$ $\ff$ satisfies \textbf{A1}. Consider $\ff^\mathrm{us}$.
\item $\ff$ satisfies \textbf{A1} $\nRightarrow$ $\ff$ is symmetric. Consider $\ffoa$ where $\oa = \{(C_3, a_2, a_3)\}$. Then $\ffoa(C_3) = \ff^\mathrm{rev}(C_3) \notin \catdigraph^\mathrm{sym}$.
\end{itemize}
\end{remark}

\begin{definition} \label{def:: quotient graph}
Let $G = (V, E) \in \catdigraph$ and $P$ a partition of $V$. Consider the map $\pi\colon V \to P$ which assigns to each $x$ the unique block $B_x \in P$ containing $x$. Define $G_P \coloneqq (P, E_P)$, where $B \xrightarrow{G_P} B'$ if there are $x \in B$ and $x' \in B'$ such that $x \xrightarrow{G} x'$. 

Now given $\ff\colon \catdigraph \to \catdigraph^\mathrm{clust}$, consider the set $P_\ff(G)$ of connected components of $\ff(G)$. This set is a partition of $V$ which can be given as the following equivalence relation: $x \sim y$ $\Leftrightarrow$ $x \seta[\ff(G)] y$. Define $G_\ff \coloneqq G_P$, where $P = P_\ff(G)$.
\end{definition}

Notice that $G_P$ is the graph with vertex set $P$ that has the least possible number of arrows such that $\pi\colon G \to G_P$ is a graph map; that is: if $G'$ is another such graph, then $G_P \hookrightarrow G'$. 

\begin{proposition}
Let $\ffo$ be a representable functor with $D_1 \notin \Omega$. Then $\ffo$ satisfies \textbf{A1} $\Leftrightarrow$ for any $\omega \in \Omega$ and any partition $P = \{A, B\}$ of $V_\omega$ (the vertex set of $\omega$) into two blocks, we have $\omega_P = K_2$ (as in Definition~\ref{def:: quotient graph}).
\end{proposition}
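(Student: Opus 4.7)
The plan is to exploit a correspondence between 2-block partitions of $V_\omega$ and graph maps $\omega \to L_2$ hitting both vertices. Concretely, given a partition $P = \{A, B\}$, the function $\phi \colon V_\omega \to V_{L_2}$ sending $A$ to $a_1$ and $B$ to $a_2$ is a graph map to $L_2$ \emph{iff} no arrow in $\omega$ goes from $B$ to $A$; within-block arrows always become self-loops, and $L_2$ admits only the non-loop arrow $a_1 \to a_2$. Conversely every graph map $\phi \colon \omega \to (L_2, a_1, a_2)$ induces a 2-block partition via preimages. This dictionary is the engine of both implications.

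For the ($\Rightarrow$) direction I would argue by contrapositive. Suppose some $\omega \in \Omega$ admits a 2-block partition $P = \{A, B\}$ with $\omega_P \neq K_2$; then one of the cross-arrows is missing, and after possibly swapping $A$ and $B$ I may assume no arrow of $\omega$ goes from $B$ to $A$. The partition-to-map recipe then produces a graph map $\phi \colon (\omega, z, \widehat{z}) \to (L_2, a_1, a_2)$ for any basepoints $z \in A$, $\widehat{z} \in B$, witnessing $a_1 \xrightarrow{\ffo(L_2)} a_2$ and contradicting $\ff(L_2) = D_2$.

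For the ($\Leftarrow$) direction I would verify the two clauses of \textbf{A1} separately. The clause $\ff(K_2) = K_2$ uses only the hypothesis $D_1 \notin \Omega$: picking any $\omega \in \Omega$, which must have at least two vertices, \emph{any} surjection $V_\omega \twoheadrightarrow \{a_1, a_2\}$ is automatically a graph map into $K_2$ (since $K_2$ has all possible edges), so $a_1 \xrightarrow{\ffo(K_2)} a_2$. The clause $\ff(L_2) = D_2$ is the reverse of the ($\Rightarrow$) argument: if some $\phi \colon (\omega, z, \widehat{z}) \to (L_2, a_1, a_2)$ existed, then $P = \{\phi^{-1}(a_1), \phi^{-1}(a_2)\}$ would be a 2-block partition, forcing $\omega_P = K_2$ by hypothesis, hence yielding an arrow of $\omega$ from $\phi^{-1}(a_2)$ to $\phi^{-1}(a_1)$ whose $\phi$-image $(a_2, a_1)$ is not an edge of $L_2$---a contradiction.

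No step presents a real obstacle; the argument is essentially bookkeeping around the partition/graph-map dictionary. The one subtlety is keeping the direction of arrows in $L_2$ straight: since $L_2$ has the unique non-loop arrow $a_1 \to a_2$, the ``without loss of generality'' in the $\Rightarrow$ direction amounts to choosing which of the two missing cross-directions to exploit (the other case being handled by the symmetric relabeling of the partition).
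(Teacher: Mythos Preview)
Your proof is correct and follows essentially the same approach as the paper's. The paper organizes the argument as a list of three equivalences---one for each of $\ffo(D_2)=D_2$, $\ffo(L_2)=D_2$, and $\ffo(K_2)=K_2$---whereas you do the two implications directly, but the core mechanism in both is exactly your ``partition/graph-map dictionary'': surjective graph maps $\omega\to L_2$ correspond to $2$-block partitions with a missing cross-direction, so ruling them out is precisely the condition $\omega_P=K_2$ for every such partition, and the $\ffo(K_2)=K_2$ clause is handled in both proofs by noting that $D_1\notin\Omega$ forces some $\omega$ to have at least two vertices, after which any surjection onto $K_2$ is automatically a graph map.
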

\begin{proof}
The claim follows from the following equivalences:
\begin{itemize}
\item $\ffo(D_2) = D_2 \Leftrightarrow $ every $\omega \in \Omega$ is connected $\Leftrightarrow$ for any partition $P = \{A, B\}$ of $V_\omega$ into two blocks, we have $\omega_P \neq D_2$.

\item $\ffo(L_2) = D_2 \Leftrightarrow$ for any $\omega \in \Omega$, there is no surjective graph map $\omega \to L_2$ $\Leftrightarrow$ for any $\omega \in \Omega$ and any partition $P = \{A, B\}$ of $V_\omega$ into two blocks, we have $\omega_P = K_2$.

\item $\ffo(K_2) = K_2 \Leftrightarrow$ some $\omega \in \Omega$ has more than one vertex. \qedhere
\end{itemize}
\end{proof}

\begin{remark}
Theorem~\ref{theo: ffrec menor ff menor ffnrec} below states that $\ff^\mathrm{rec} \leq \ff \leq \ff^\mathrm{nrec}$ for all clustering functors $\ff$ satisfying \textbf{A1}. In order to prove this, we will adapt the ideas that led to the proof of~\cite[Theorem 4]{Carlssonetal13AxiCons} to the setting of functors $\ffh$ arising from functors $\ff\colon \catdigraph \to \catdigraph^\mathrm{clust}$.
\end{remark}

\begin{definition}[Extended axiom of value]
We say that $\ff\colon \catdigraph \to \catdigraph^\mathrm{clust}$ satisfies the extended axiom of value (or simply \textbf{A1}\textprime) if $\ff(T_n) = D_n$ and $\ff(K_n) = K_n$, $\forall n \in \N$.
\end{definition}

\begin{theorem} \label{theorem:: a1 implies a1'}
Let $\ff\colon \catdigraph \to \catdigraph^\mathrm{clust}$. Then, $\ff$ satisfies \textbf{A1} $\Leftrightarrow \ff$ satisfies \textbf{A1}{\textprime}.
\end{theorem}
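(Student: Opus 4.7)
The backward direction is immediate: assuming \textbf{A1}\textprime, set $n=2$; since $T_2 = L_2$, we recover $\ff(L_2) = D_2$ and $\ff(K_2) = K_2$, which is \textbf{A1}.

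For the forward direction, I would split the task into the two claims $\ff(K_n) = K_n$ and $\ff(T_n) = D_n$. The first claim is essentially already in hand: from $\ff(K_2) = K_2$ and Proposition~\ref{prop: functor-on-two-point-graphs}(2) we get $\ff \neq \ff^\mathrm{disc}$, so Corollary~\ref{coro:image of complete is complete} yields $\ff(K_n) = K_n$ for every $n$. (Note that no appeal to transitivity is needed here.)

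The substantive step is showing $\ff(T_n) = D_n$. For each $1 \leq m < n$, define $\phi_m\colon T_n \to L_2$ by $\phi_m(a_k) = a_1$ if $k \leq m$ and $\phi_m(a_k) = a_2$ if $k > m$. I would first verify that each $\phi_m$ is a graph map: any arrow $a_k \to a_l$ in $T_n$ has $k < l$, so either both vertices lie on the same side of $m$ (yielding a self-loop in $L_2$), or $k \leq m < l$ (yielding the arrow $a_1 \to a_2$ in $L_2$); the reversed situation never occurs because the partition respects the ordering. Functoriality then produces a graph map $\phi_m\colon \ff(T_n) \to \ff(L_2) = D_2$. Now suppose, for contradiction, that $a_i \xrightarrow{\ff(T_n)} a_j$ for some $i \neq j$; using symmetry of $\ff(T_n)$ (which holds since $T_n \in \catdigraph^\mathrm{sym}$ would fail, but more directly from the codomain $\catdigraph^\mathrm{clust}$), we may assume $i < j$, and then applying $\phi_i$ forces an arrow $a_1 \to a_2$ in $D_2$, a contradiction. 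Hence $\ff(T_n)$ has no non-loop arrows, i.e.\ $\ff(T_n) = D_n$.

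I expect no serious obstacle; the only subtle point is verifying that the order-preserving cut maps $\phi_m$ are graph maps, which is precisely where the transitivity of $T_n$ (versus $L_n$) is crucial, because in $L_n$ no such non-trivial map would exist to collapse two arbitrarily chosen vertices $a_i, a_j$ onto opposite sides of $L_2$. The use of \textbf{A1} enters only through $\ff(L_2) = D_2$ for the $T_n$ part and through $\ff(K_2) = K_2$ for the $K_n$ part, so the argument is parsimonious in hypotheses.
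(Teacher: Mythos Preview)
Your proof is correct and follows essentially the same route as the paper: both use an order-preserving ``cut'' map $T_n \to L_2$ separating $a_i$ from $a_j$ (the paper cuts at $j-1$, you cut at $i$), invoke functoriality and $\ff(L_2)=D_2$ to rule out the arrow, appeal to symmetry from the codomain $\catdigraph^\mathrm{clust}$ to handle $i>j$, and cite Corollary~\ref{coro:image of complete is complete} for the $K_n$ part.

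One small inaccuracy in your closing commentary (not in the proof itself): the cut maps $\phi_m$ are graph maps on $L_n$ just as well as on $T_n$, since $L_n \hookrightarrow T_n$; the transitivity of $T_n$ is not what makes $\phi_m$ work. What is genuinely special is that \textbf{A1}\textprime\ is stated for $T_n$, and the argument incidentally shows the stronger fact that $\ff(G)=D_{(G)}$ for any $G$ admitting an injective graph map into some $T_n$ --- which is precisely the ``no cycles'' condition exploited later in Lemma~\ref{lemma: no cycles then etc} and Theorem~\ref{theo: has no cycle then bijective map Tn}.
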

\begin{proof}
Let's prove the ($\Rightarrow$) implication. Pick $a_i, a_j \in T_n$ with $i < j$. Consider the partition $A = \{a_k \; | \; k < j\}$, $B = \{a_k \; | \; k \geq j\}$ and the graph map $\phi\colon T_n \to L_2$ given by $\phi(A) = b_1$ and $\phi(B) = b_2$, where $b_1 \to b_2$ is the only arrow in $L_2$. By functoriality of $\ff$, $\phi\colon \ff(T_n) \to \ff(L_2) = D_2$ is a graph map. Thus, we cannot have $a_i \xrightarrow{\ff(T_n)} a_j$. Since $\ff(T_n)$ is symmetric and $a_i$ and $a_j$ were arbitrary, we have $\ff(T_n) = D_n$. That $\ff(K_n) = K_n$ follows from Proposition~\ref{coro:image of complete is complete}, since $\ff$ satisfies \textbf{A1}, which implies $\ff \neq \ff^\mathrm{disc}$.

Now let's prove the ($\Leftarrow$) implication. Taking $n=2$ gives us $T_2 = L_2$. This implies that $\ff(L_2) = D_2$ and $\ff(K_2) = K_2$.
\end{proof}

\begin{definition}
We say that a graph $G$ \textit{has no cycles} if there exists no sequence of points $v_1, \dots, v_m \in V$ with $v_1 \xrightarrow{G} v_2 \xrightarrow{G} \cdots \xrightarrow{G} v_m \xrightarrow{G} v_1$. This is equivalent to imposing that $\ff^{\{C_n\}}(G) = D_{(G)}$ for any $n \geq 2$, where $C_n$ is the cycle graph.
\end{definition}

\begin{lemma} \label{lemma: no cycles then etc}
If $G = (V, E) \in \catdigraph$ has no cycles, then there is a bijective map $\phi \colon V \to A_{T_n}$, where $n = |V|$ and $A_{T_n} = \{a_1, \dots, a_n\}$ is the vertex set of the graph $T_n$. Moreover, $\phi$ is a graph map.
\end{lemma}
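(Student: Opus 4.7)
The plan is to prove this by induction on $n = |V|$ via a topological-sort argument: any finite directed graph with no cycles must contain a \emph{sink} $v_\ast \in V$, that is, a vertex with no outgoing arrow $v_\ast \xrightarrow{G} v'$. Recall from Section~\ref{sec: background} that the notation $v \xrightarrow{G} v'$ refers only to non-self-loop arrows, so the sink still carries its own self-loop and this is consistent with the standing convention that $\Delta(V) \subseteq E$.

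First I would establish the existence of a sink. Suppose, for contradiction, that every vertex of $G$ has some outgoing arrow. Starting from any $v_1 \in V$, pick $v_2$ with $v_1 \xrightarrow{G} v_2$, then $v_3$ with $v_2 \xrightarrow{G} v_3$, and so on indefinitely. Because $V$ is finite, some vertex must repeat, say $v_i = v_j$ with $i < j$. Then $v_i \xrightarrow{G} v_{i+1} \xrightarrow{G} \cdots \xrightarrow{G} v_{j-1} \xrightarrow{G} v_j = v_i$ is a cycle in $G$, contradicting the hypothesis.

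Next I would carry out the induction on $n$. The base case $n = 1$ is trivial: $V = \{v\}$ and $\phi(v) = a_1$ satisfies the required property vacuously, since there are no non-self-loop arrows to check. For the inductive step, let $v_\ast$ be a sink of $G$ and set $G' \coloneqq G \cap (V \setminus \{v_\ast\})$; then $G'$ inherits the no-cycles property, since any cycle in $G'$ would also be a cycle in $G$. By the inductive hypothesis there is a bijective graph map $\phi'\colon V \setminus \{v_\ast\} \to \{a_1, \ldots, a_{n-1}\}$ into $T_{n-1}$. I would extend it to $\phi\colon V \to A_{T_n}$ by setting $\phi(v_\ast) = a_n$; this is clearly a bijection.

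To check that $\phi$ is a graph map, fix an arrow $v \xrightarrow{G} v'$. If $v = v_\ast$, then $v_\ast$ would have an outgoing arrow, contradicting that it is a sink. If $v, v' \in V \setminus \{v_\ast\}$, then $v \xrightarrow{G'} v'$, hence $\phi'(v) \seta[T_{n-1}] \phi'(v')$, and composing with the evident inclusion $T_{n-1} \hookrightarrow T_n$ gives $\phi(v) \seta[T_n] \phi(v')$. Finally, if $v' = v_\ast$ and $v \neq v_\ast$, then $\phi(v) = a_i$ for some $i \leq n - 1$ and $\phi(v') = a_n$, so $\phi(v) \xrightarrow{T_n} \phi(v')$ by the definition of $T_n$. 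I do not foresee any serious obstacle; the only point requiring care is respecting the self-loop convention on arrows, which is what makes the sink-existence argument produce a genuine non-trivial cycle when it fails.
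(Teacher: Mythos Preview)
Your proof is correct and follows essentially the same topological-sort strategy as the paper: the paper iteratively peels off \emph{sources} (vertices with no incoming non-loop arrow) and sends the $i$-th source found to $a_i$, whereas you inductively peel off \emph{sinks} and send them to $a_n$. The two arguments are dual and differ only cosmetically (iteration versus induction, source versus sink); the cycle-producing contradiction for existence of a source/sink is identical.
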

\begin{proof}
For each $v \in G$, let $P(v) = \{v' \; | \; v' \xrightarrow{G} v\}$.

\begin{claim}
There exists $v^\ast_1 \in G$ such that $P(v^\ast_1) = \emptyset$.
\end{claim} 
\begin{claimproof}
If the claim were false, given $v_0 \in G$, there would exist $v_1, \ldots, v_n \in G$ such that $v_0 \xleftarrow{G} v_1 \xleftarrow{G} \cdots \xleftarrow{G} v_n$. Since $|V| = n$, we must have $v_i = v_j$ for some $i, j$, which would contradict the assumption that $G$ has no cycles.
\end{claimproof}

Now let $G_1 = G \setminus \{v^\ast_1\}$. Let $P_1(v) = \{v' \; | \; v' \xrightarrow{G_1} v\}$. By the above argument, there is $v^\ast_2 \in G_1$ such that $P_1(v^\ast_2) = \emptyset$, which implies $P(v^\ast_2) \subseteq \{v^\ast_1\}$. 

Let $G_2 = G_1 \setminus \{v^\ast_2\} = G \setminus \{v^\ast_1, v^\ast_2\}$ and let $P_2(v) = \{v' \; | \; v' \xrightarrow{G_2} v\}$. We then find $v^\ast_3 \in G_2$ such that $P_2(v^\ast_3) = \emptyset$, which implies $P(v^\ast_3) \subseteq \{v^\ast_1, v^\ast_2\}$. 

By the same argument, for $v^\ast_3, \ldots, v^\ast_n$ we will obtain $P(v^\ast_i) \subseteq \{v^\ast_1, \ldots, v^\ast_{i-1}\}$. Then, for any $i < j$, we have $v^\ast_j \nrightarrow v^\ast_i$ in $G$. Thus, the map $\phi\colon G \to T_n$ given by $\phi(v^\ast_i) = a_i, \; i = 1, \ldots, n$, is a bijective graph map.
\end{proof}

\begin{theorem} \label{theo: has no cycle then bijective map Tn}
Suppose $G = (V, E) \in \catdigraph$ has no cycles. If $\ff\colon \catdigraph \to \catdigraph^\mathrm{clust}$ satisfies \textbf{A1}, then $\ff(G) = D_{(G)}$.
\end{theorem}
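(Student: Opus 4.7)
The plan is to combine Lemma~\ref{lemma: no cycles then etc} with the extended axiom of value (Theorem~\ref{theorem:: a1 implies a1'}) and functoriality. First, I would invoke Lemma~\ref{lemma: no cycles then etc} to obtain a bijective graph map $\phi\colon G \to T_n$, where $n = |V|$. Applying $\ff$ yields a graph map $\phi\colon \ff(G) \to \ff(T_n)$.

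Next, since $\ff$ satisfies \textbf{A1} by hypothesis and has codomain $\catdigraph^\mathrm{clust}$, Theorem~\ref{theorem:: a1 implies a1'} gives $\ff(T_n) = D_n$. Thus $\ff(T_n)$ contains only self-loops. Suppose for contradiction that $v \xrightarrow{\ff(G)} v'$ for some $v \neq v'$ in $V$. Because $\phi$ is a graph map, we would have $\phi(v) \seta[\ff(T_n)] \phi(v')$, and since $\phi$ is bijective we have $\phi(v) \neq \phi(v')$, forcing the existence of a non-trivial arrow in $\ff(T_n)$, contradicting $\ff(T_n) = D_n$.

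Therefore $\ff(G)$ contains only self-loops, that is, $\ff(G) = D_{(G)}$. The key ingredient is the bijectivity provided by Lemma~\ref{lemma: no cycles then etc}: without it one could not transfer the absence of arrows in $\ff(T_n)$ back to $\ff(G)$, since a non-injective graph map may collapse distinct vertices onto a single self-loop. I do not anticipate any real obstacle here; the main content of the result lies in the preceding lemma and in Theorem~\ref{theorem:: a1 implies a1'}, and this theorem is essentially the clean payoff obtained by combining them.
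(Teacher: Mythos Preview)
Your proposal is correct and follows essentially the same approach as the paper: invoke Lemma~\ref{lemma: no cycles then etc} to get a bijective graph map $\phi\colon G \to T_n$, apply $\ff$ and use Theorem~\ref{theorem:: a1 implies a1'} to conclude $\ff(T_n)=D_n$, and then pull back the absence of nontrivial arrows via the bijectivity of $\phi$. The paper's proof is more terse on that last step, but your explicit contradiction argument and your remark on why bijectivity is essential are exactly the content the paper leaves implicit.
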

\begin{proof}
Let $\phi \colon G \to T_n$ be the graph map of Lemma~\ref{lemma: no cycles then etc}, where $n = |V|$. By functoriality of $\ff$, $\phi\colon \ff(G) \to \ff(T_n)$ is a graph map. By Theorem~\ref{theorem:: a1 implies a1'}, $\ff(T_n) = D_n$. Thus, $\ff(G) = D_{(G)}$.
\end{proof}

\begin{theorem} \label{theo: ffrec menor ff menor ffnrec}
If $\ff\colon \catdigraph \to \catdigraph^\mathrm{clust}$ satisfies \textbf{A1}, then
\[
\ff^\mathrm{rec} \leq \ff \leq \ff^\mathrm{nrec}.
\]
\end{theorem}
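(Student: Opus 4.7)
The plan is to establish the two inequalities $\ff^\mathrm{rec} \leq \ff$ and $\ff \leq \ff^\mathrm{nrec}$ separately. Since both endpoints are representable (Example~\ref{rec-e-nao-rec}), I will use the respective motifs $\ff^\mathrm{us}(L_n)$ and $C_n$ and transport the relevant arrows by functoriality. The strengthened value axiom \textbf{A1}\textprime\ from Theorem~\ref{theorem:: a1 implies a1'} is what makes this possible.

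For $\ff^\mathrm{rec} \leq \ff$, given $v \xrightarrow{\ff^\mathrm{rec}(G)} v'$ with $v \neq v'$, I obtain $n \geq 2$ and a graph map $\phi\colon \ff^\mathrm{us}(L_n) \to (G, v, v')$. The crux is to show $\ff(\ff^\mathrm{us}(L_n)) = K_n$. For each $i < n$, the subgraph on $\{a_i, a_{i+1}\}$ is a $K_2$, so the inclusion $K_2 \hookrightarrow \ff^\mathrm{us}(L_n)$ is a graph map; functoriality together with $\ff(K_2) = K_2$ forces $a_i \leftrightarrow a_{i+1}$ inside $\ff(\ff^\mathrm{us}(L_n))$. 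Since $\ff(\ff^\mathrm{us}(L_n)) \in \catdigraph^\mathrm{clust}$, transitivity propagates these arrows to $a_i \to a_j$ for all $i < j$, and symmetry completes the graph to $K_n$. Writing $v = \phi(x)$ and $v' = \phi(y)$ for distinct $x, y \in \ff^\mathrm{us}(L_n)$ (available because $v, v' \in \phi(\ff^\mathrm{us}(L_n))$ and $v \neq v'$), the arrow $x \to y$ in $\ff(\ff^\mathrm{us}(L_n)) = K_n$ combined with functoriality delivers $v \xrightarrow{\ff(G)} v'$.

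For $\ff \leq \ff^\mathrm{nrec}$, my plan is to quotient $G$ by its strongly connected components. Given $v \xrightarrow{\ff(G)} v'$, declare $x \sim x'$ iff $x = x'$ or both $x \leadsto x'$ and $x' \leadsto x$ in $G$; this is an equivalence relation, and I form the quotient graph $G_P$ with $P = V/{\sim}$ as in Definition~\ref{def:: quotient graph}. I verify that $G_P$ has no cycles: any nontrivial cycle $[x_1] \to \cdots \to [x_k] \to [x_1]$ in $G_P$ lifts to a walk in $G$ proving $x_1 \leadsto x_i \leadsto x_1$ for each $i$, forcing the classes $[x_i]$ to coincide. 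Theorem~\ref{theo: has no cycle then bijective map Tn} then yields $\ff(G_P) = D_{(G_P)}$. The quotient map $\pi\colon G \to G_P$ is a graph map by construction, so by functoriality it gives a graph map $\pi\colon \ff(G) \to D_{(G_P)}$; the arrow $v \to v'$ must collapse to a self-loop, forcing $\pi(v) = \pi(v')$, i.e., $v \sim v'$. Concatenating a path $v \leadsto v'$ with one $v' \leadsto v$ exhibits a cycle of $G$ containing both vertices, which is precisely the defining condition for $v \xrightarrow{\ff^\mathrm{nrec}(G)} v'$. The main obstacle will be cleanly arguing that the SCC quotient $G_P$ has no cycles, since both inequalities then follow directly from functoriality once the extended axiom controls $\ff$ on $K_n$ and, via Theorem~\ref{theo: has no cycle then bijective map Tn}, on acyclic graphs.
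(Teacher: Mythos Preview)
Your proposal is correct and follows essentially the same approach as the paper: for $\ff \leq \ff^\mathrm{nrec}$ you pass to the quotient by strongly connected components (which is precisely the paper's $G_{\ff^\mathrm{nrec}}$), verify it has no cycles, and invoke Theorem~\ref{theo: has no cycle then bijective map Tn}; for $\ff^\mathrm{rec} \leq \ff$ you use the $K_2$-inclusions and transitivity exactly as the paper does, except that you first compute $\ff(\ff^\mathrm{us}(L_n)) = K_n$ and then push forward along $\phi$, whereas the paper maps each $K_2$ directly into $G$ and applies transitivity there---a purely cosmetic difference.
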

\begin{proof}
Let $G = (V, E) \in \catdigraph$.

For the second inequality, let $\tilde{G} = G_{\ff^\mathrm{nrec}} = (\tilde{V}, \tilde{E})$ as in Definition~\ref{def:: quotient graph}.

\begin{claim}
$\tilde{G}$ has no cycles.
\end{claim}
\begin{claimproof}
Indeed, if we had a cycle $x_1 \to x_2 \to \dots \to x_n \to x_1$ in $\tilde{G}$, this would imply that there are $v_i, \tilde{v}_i \in x_i$ with 
\[
v_1 \leadsto \tilde{v}_1 \to v_2 \leadsto \tilde{v}_2 \to \dots \to v_n \leadsto \tilde{v}_n \to v_1 \textrm{ in $G$},
\]
hence $x_1  = x_2 = \dots = x_n$, a contradiction.
\end{claimproof}

Consider the graph map given by the projection $\pi\colon G \to \tilde{G}$, $\pi(v) = [v]$. Notice that $[v] \neq [v']$ is equivalent to $v \nrightarrow v'$ in $\ff^\mathrm{nrec}(G)$. Applying $\ff$, we obtain $\pi\colon \ff(G) \to \ff(\tilde{G}) = D_{(\tilde{G})}$, by Theorem~\ref{theo: has no cycle then bijective map Tn}. This implies that if $v, v' \in G$ satisfy $[v] \neq [v']$, then $v \nrightarrow v'$ in $\ff(G)$, or, by the contrapositive, if $v \xrightarrow{\ff(G)} v'$ then $v \xrightarrow{\ff^\mathrm{nrec}(G)} v'$, that is, $\ff(G) \leq \ff^\mathrm{nrec}(G) $.

Now for the leftmost inequality: if $v \xrightarrow{\ff^\mathrm{rec}(G)} v'$ then there are $v = v_1, v_2, \ldots, v_n = v'$ such that $v_1 \leftrightarrow v_2 \leftrightarrow \dots \leftrightarrow v_n$ in $G$. Consider the graph maps $\phi_i\colon K_2 \to (G, v_i, v_{i+1})$ for $i = 1, \ldots, n$. Then we also have graph maps $\phi_i\colon \ff(K_2) \to (\ff(G), v_i, v_{i+1})$. Since $\ff$ satisfies \textbf{A1}, $\ff(K_2) = K_2$, which implies $v_i \xrightarrow{\ff(G)} v_{i+1}$. By the transitivity of $\ff(G)$, we obtain $v \xrightarrow{\ff(G)} v'$. Thus, $\ff^\mathrm{rec} \leq \ff$.
\end{proof}

\begin{corollary}
Let $\ff\colon \catdigraph \to \catdigraph^\mathrm{clust}$ be a functor satisfying \textbf{A1}. For any extended network $\bx = (X, w_X)$, denoting $\ffh(\bx) = (X, u^\ff_X)$, $\ffh^\mathrm{nrec}(\bx) = (X, u_X^\mathrm{nrec})$ and $\ffh^\mathrm{rec}(\bx) = (X, u_X^\mathrm{rec})$, we have 
\[
u_X^\mathrm{nrec} \leq u^\ff_X \leq u_X^\mathrm{rec}.
\]
\end{corollary}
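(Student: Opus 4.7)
The plan is to derive the corollary as an almost immediate consequence of two facts already established in the paper: Theorem~\ref{theo: ffrec menor ff menor ffnrec}, giving the sandwich $\ff^\mathrm{rec} \leq \ff \leq \ff^\mathrm{nrec}$ at the level of endofunctors on $\catdigraph$, and Remark~\ref{remark:: about ffhat}, which transfers $\leq$ inequalities between endofunctors into reverse pointwise inequalities between the associated weight functions on $\netex$.

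First I would unpack the definition of $\ffh$. For each $\epsilon\in\mathbb{R}$ we have $\ffh(\bx)_\epsilon = \ff(\bx_\epsilon)$, and the induced weight is
\[
u^\ff_X(x,x') = \min\{\epsilon\in\mathbb{R} \;|\; x \seta[\ff(\bx_\epsilon)] x'\},
\]
with the convention $\min\emptyset = +\infty$. The same formula applies to $\ff^\mathrm{rec}$ and $\ff^\mathrm{nrec}$ in place of $\ff$, giving $u^\mathrm{rec}_X$ and $u^\mathrm{nrec}_X$ respectively.

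Next I invoke Theorem~\ref{theo: ffrec menor ff menor ffnrec}: since $\ff$ takes values in $\catdigraph^\mathrm{clust}$ and satisfies \textbf{A1}, we have the chain $\ff^\mathrm{rec} \leq \ff \leq \ff^\mathrm{nrec}$. In particular, for every $\epsilon\in\mathbb{R}$ and every graph $G$, this means $\ff^\mathrm{rec}(G) \hookrightarrow \ff(G) \hookrightarrow \ff^\mathrm{nrec}(G)$. Applying this to $G = \bx_\epsilon$ and invoking the inclusion criterion of Remark~\ref{remark:: about ffhat} twice, we conclude that if $x \seta[\ff^\mathrm{rec}(\bx_\epsilon)] x'$ then $x \seta[\ff(\bx_\epsilon)] x'$, and similarly if $x \seta[\ff(\bx_\epsilon)] x'$ then $x \seta[\ff^\mathrm{nrec}(\bx_\epsilon)] x'$. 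Hence the set of $\epsilon$ over which the minimum in the definition of $u^\mathrm{rec}_X(x,x')$ is taken is contained in that for $u^\ff_X(x,x')$, which in turn is contained in that for $u^\mathrm{nrec}_X(x,x')$.

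Passing to the minima reverses the inclusions: a smaller set of admissible $\epsilon$ gives a larger minimum. This yields
\[
u^\mathrm{nrec}_X(x,x') \leq u^\ff_X(x,x') \leq u^\mathrm{rec}_X(x,x')
\]
for every $x,x' \in X$, as required. The $+\infty$ values cause no trouble: if some set of admissible $\epsilon$ is empty (giving value $+\infty$) then any subset is also empty, preserving the inequality. I do not anticipate any technical obstacle beyond carefully tracking the direction-reversal between the partial order on endofunctors and the pointwise order on weight functions.
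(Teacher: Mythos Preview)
Your proposal is correct and follows exactly the paper's own approach: the paper's proof is the one-liner ``Follows from Theorem~\ref{theo: ffrec menor ff menor ffnrec} and Remark~\ref{remark:: about ffhat}'', and you have simply unpacked those two ingredients in detail.
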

\begin{proof}
Follows from Theorem~\ref{theo: ffrec menor ff menor ffnrec} and Remark~\ref{remark:: about ffhat}.
\end{proof}

\begin{remark}
We can define, as in~\cite{Carlssonetal13AxiCons}, an alternative axiom of value: we say that $\ff\colon \catdigraph \to \catdigraph^\mathrm{trans}$ satisfies axiom \textbf{A1}{\textprime \textprime} if $\ff(D_2) = D_2$ and $\ff(L_2) = L_2$ (which implies $\ff(K_2) = K_2)$. By Proposition~\ref{prop:tc}, $\ff^\mathrm{tc}$ is the unique endofunctor on $\catdigraph$ that satisfies this axiom. 

Restricted to $\ultranet^\mathrm{sym}$, the map $\ffh^\mathrm{tc}$ is called \textit{network single linkage hierarchical clustering functor} (see~\cite{MemoliHiReOf}). 

When $\bx = (X, w_X)$ is a dissimilarity network and we write $\ff^\mathrm{tc}(\bx) = (X, u^\mathrm{tc})$, the value $u^\mathrm{tc}(x, x')$ is called the \textit{directed minimum chain cost of $(x, x')$} in~\cite{Carlssonetal13AxiCons}, and $\ffh^\mathrm{tc}$ is the \textit{directed single linkage}. Our result is in agreement with~\cite[Theorem 7]{Carlssonetal13AxiCons}, which states that directed single linkage is the unique functor satisfying \textbf{A1}{\textprime \textprime} (for dissimilarity networks).
\end{remark}

\begin{remark}
By Theorem~\ref{theo: f sym to sym}, $\ff(\catdigraph^\mathrm{sym}) \subset \catdigraph^\mathrm{sym}$, so, given any two functors $\ff_1\colon \catdigraph \to \catdigraph^\mathrm{sym}$ and $\ff_2\colon \catdigraph \to \catdigraph^\mathrm{trans}$, their composite $\ff_2 \circ \ff_1$ is a clustering functor. Examples are: the reciprocal clustering $\ff^\mathrm{rec} = \ff^\mathrm{tc} \circ \ff^\mathrm{ls}$; the non-reciprocal clustering $\ff^\mathrm{nrec} = \ff^\mathrm{tc} \circ \ffo$, where $\Omega = \{C_n\}_{n \in \mathbb{N}}$; the semi-reciprocal with size $t$, $\ff^\textrm{semi-$t$} = \ff^\mathrm{tc} \circ \ff^\mathrm{ls} \circ \ff^{[t]}$.

This gives us a simple recipe to construct clustering functors on (extended) networks: just take $\ffh$ for $\ff = \ff_2 \circ \ff_1$ as above.
\end{remark}

\subsection{Stability}

In~\cite{Carlsson2016ExcisiveHC} the authors defined a distance on the collection of all  dissimilarity networks, and in~\cite{Memoli2017DiIs} this distance, denoted by $d_\net$, was extended to the collection  of all networks. Using this distance, we can then ask whether a map $\ffh: \net \to \net$, for some endofunctor $\ff$, is \textit{stable} with respect to $d_\net$. More precisely: given two networks $\bx$ and $\by$, is it true that $d_\net \left( \ffh(\bx), \ffh(\by) \right) \leq d_\net(\bx, \by)$?

\begin{definition}[{\cite{Memoli2017DiIs}}]
Let $\bx = (X, w_X), \by = (Y, w_Y)$ be two networks. A \textit{correspondence} between $X$ and $Y$ is a subset $R \subseteq X \times Y$ such that the projections $\pi_X\colon R \to X$ and $\pi_Y\colon R \to Y$ are surjective. Denote by $C(X, Y)$ the set of all correspondences between $X$ and $Y$. The \textit{distortion} of $R$ (with respect to $\bx$ and $\by$) is the quantity
\[
\dis(R; \bx, \by) \coloneqq \max_{(x, y), (x', y') \in R} \big|w_X(x, x') - w_Y(y, y') \big|.
\]

The \textit{network distance} between $\bx$ and $\by$ is defined as
\[
d_\net(\bx, \by) \coloneqq \dfrac{1}{2} \inf_{R \in C(X, Y)} \dis(R; \bx, \by).
\]

We refer the reader to~\cite{Memoli2017DiIs} to see why this is a generalization of the Gromov-Hausdorff distance and the proof that it is actually a pseudometric, among other properties. 
\end{definition}  

\begin{theorem}[Stability] \label{theo: stability}
Let $\ff\colon \catdigraph \to \catdigraph$ be any endofunctor such that $\ff \neq \ff^\mathrm{disc}$. Let $\bx = (X, w_X)$ and $\by = (Y, w_Y)$ be any networks. Then,
\[
d_\net \left( \ffh(\bx), \ffh(\by) \right) \leq d_\net(\bx, \by).
\]
\end{theorem}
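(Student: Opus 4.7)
The plan is to verify the stronger statement that for any correspondence $R \in C(X, Y)$ one has $\dis(R; \ffh(\bx), \ffh(\by)) \leq \dis(R; \bx, \by)$; halving this inequality and taking the infimum over all correspondences then yields the theorem directly. Accordingly I set $\eta \coloneqq \dis(R; \bx, \by)$, fix arbitrary pairs $(x_0, y_0), (x_1, y_1) \in R$, and aim to establish $|w_X^\ff(x_0, x_1) - w_Y^\ff(y_0, y_1)| \leq \eta$, from which the bound on distortion follows by taking the maximum over such pairs.

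The main case is $x_0 \neq x_1$ and $y_0 \neq y_1$. Let $\alpha \coloneqq w_X^\ff(x_0, x_1)$, so by definition of $\ffh$ one has $x_0 \seta[\ff(\bx_\alpha)] x_1$ and in particular $x_0, x_1 \in V(\bx_\alpha)$. Using surjectivity of the projection $\pi_Y\colon R \to Y$, I choose a function $\phi\colon X \to Y$ with $(x, \phi(x)) \in R$ for every $x \in X$, additionally pinning $\phi(x_0) = y_0$ and $\phi(x_1) = y_1$ (consistent precisely because $x_0 \neq x_1$). The distortion bound applied to pairs $(x, \phi(x)), (x', \phi(x')) \in R$ gives $w_Y(\phi(x), \phi(x')) \leq w_X(x, x') + \eta$ for all $x, x' \in X$; taking $x = x'$ shows that vertices of $\bx_\alpha$ land in $V(\by_{\alpha + \eta})$, and the general case shows that arrows of $\bx_\alpha$ map to arrows of $\by_{\alpha + \eta}$. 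Hence the restriction $\phi|_{V(\bx_\alpha)}\colon \bx_\alpha \to \by_{\alpha + \eta}$ is a graph map. Functoriality promotes this to a graph map $\ff(\bx_\alpha) \to \ff(\by_{\alpha + \eta})$, under which $x_0 \seta[\ff(\bx_\alpha)] x_1$ is sent to $y_0 \seta[\ff(\by_{\alpha + \eta})] y_1$, yielding $w_Y^\ff(y_0, y_1) \leq \alpha + \eta$. The reverse inequality follows symmetrically by exchanging the roles of $\bx$ and $\by$ and using a function $\psi\colon Y \to X$ with $\psi(y_i) = x_i$.

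The degenerate subcases, in which $x_0 = x_1$ or $y_0 = y_1$, are precisely those where the two prescribed images of $\phi$ (respectively $\psi$) collide and the above construction fails; this is the main obstacle, and it is exactly where the hypothesis $\ff \neq \ff^\mathrm{disc}$ is forced on us. By Proposition~\ref{prop: functor-on-two-point-graphs}, this hypothesis guarantees $\ff(K_2) = K_2$. In the representative subcase $x_0 = x_1$ and $y_0 \neq y_1$ one first notes $w_X^\ff(x_0, x_1) = w_X(x_0, x_0)$, and the distortion bound applied to pairs drawn from $(x_0, y_0), (x_0, y_1) \in R$ forces both $w_Y(y_0, y_1)$ and $w_Y(y_1, y_0)$ to be at most $c \coloneqq w_X(x_0, x_0) + \eta$, so the assignment $a_1 \mapsto y_0$, $a_2 \mapsto y_1$ is a graph map $K_2 \to \by_c$; applying $\ff$ and using $\ff(K_2) = K_2$ gives $w_Y^\ff(y_0, y_1) \leq c$. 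The reverse direction reduces to $w_X(x_0, x_0) \leq w_Y^\ff(y_0, y_1) + \eta$, which follows from $w_Y(y_0, y_0) \leq w_Y^\ff(y_0, y_1)$ (since $y_0$ lies in the vertex set of the corresponding $\epsilon$-slice) combined with the distortion inequality $|w_X(x_0, x_0) - w_Y(y_0, y_0)| \leq \eta$. The subcase $y_0 = y_1$ is handled by the same template, and the doubly coincident case is immediate from distortion. The essential role of the hypothesis $\ff \neq \ff^\mathrm{disc}$ is thus to supply the $\ff(K_2) = K_2$ fallback that replaces the correspondence-to-graph-map construction when it cannot be carried out.
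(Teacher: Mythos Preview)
Your argument is correct and follows the same core strategy as the paper: pick a correspondence, use it to manufacture a map $\phi\colon X \to Y$ whose restriction to each $\epsilon$-slice is a graph map into the $(\epsilon+\eta)$-slice, and then invoke functoriality of $\ff$. You are in fact more careful than the paper in one respect: you explicitly separate out and handle the degenerate cases $x_0 = x_1$ or $y_0 = y_1$, where the two prescribed values of $\phi$ collide and the construction is ill-posed. The paper's proof writes down $\phi(x)=y$, $\phi(x')=y'$ without commenting on this collision.

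One small correction to your closing remark on the role of the hypothesis $\ff \neq \ff^\mathrm{disc}$. It is not used \emph{only} in the degenerate fallback: in your main case you set $\alpha = w_X^\ff(x_0,x_1)$ and then work with $\bx_\alpha$, which tacitly requires $\alpha < \infty$. That finiteness is exactly what the paper deduces from $\ff \neq \ff^\mathrm{disc}$ (via $\ff(K_n)=K_n$, so that for $\epsilon$ beyond $\max w_X$ the slice $\ff(\bx_\epsilon)$ is complete). Your degenerate-case argument likewise needs $w_Y^\ff(y_0,y_1)<\infty$ for the reverse inequality. So the hypothesis is doing double duty: it both guarantees the output weights are finite (the paper's stated use) and supplies $\ff(K_2)=K_2$ for your fallback.
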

\begin{proof}
Let $R \in C(X, Y)$ be such that $\eta \coloneqq \dis(R; \bx, \by) = 2 d_\net(\bx, \by)$. Write $\ffh(\bx) = (X, w_X^\ff)$ and $\ffh(\by) = (Y, w_Y^\ff)$. 

Let $(x, y), (x', y') \in R$. Let $\phi\colon X \to Y$ be given by $\phi(x) = y$, $\phi(x') = y'$ and $\phi(a) = b$, for any choice of $b \in Y$ such that $(a, b) \in R$, $a \neq x, x'$.

\begin{claim}
The map $\phi$ induces a graph map $\phi_\epsilon\colon \bx_\epsilon \to \by_{\epsilon + \eta}$ given by $\phi_\epsilon(a) = \phi(a)$  $\forall\epsilon$.
\end{claim}

\begin{claimproof}
Indeed, suppose $a \seta[\bx_\epsilon] a'$, that is,  $w_X(a, a') \leq \epsilon$. Since $(a, \phi(a)), (a', \phi(a')) \in R$, we have
\[
|w_X(a, a') - w_Y(\phi(a), \phi(a'))| \leq \eta \Rightarrow
w_Y(\phi(a), \phi(a')) \leq w_X(a, a') + \eta \leq \epsilon + \eta,
\]
i.e., $\phi(a) \seta[\by_{\epsilon + \eta}] \phi(a')$. 
\end{claimproof}

Now let $\epsilon \coloneqq w_X^\ff(x, x')$, and so $x \seta[\ff(\bx_\epsilon)] x'$. Observe that $\epsilon$ is finite since $\ff \neq \ff^\mathrm{disc}$ and for some $\delta \in \R$ we will have that $\bx_\delta$ and $\ff(\bx_\delta)$ are complete graphs.

Further, notice that the map $\phi_\epsilon$ satisfies $\phi_\epsilon\colon (\bx_\epsilon, x, x') \to (\by_{\epsilon + \eta}, y, y')$. The functoriality of $\ff$ implies that $\phi_\epsilon\colon (\ff(\bx_\epsilon), x, x') \to (\ff(\by_{\epsilon + \eta}), y, y')$ is a graph map. Then $y \seta[\ff(\by_{\epsilon + \eta})] y'$, i.e., $w_Y^\ff(y, y') \leq \epsilon + \eta$. Thus,
\[
w_Y^\ff(y, y') - w_X^\ff(x, x') \leq \eta.
\]

Let $\psi \colon Y \to X$ be such that $\psi(y) = x$, $\psi(y') = x'$ and $(\psi(b), b) \in R$, for all $b \in Y$, and obtain
\[
w_X^\ff(x, x') - w_Y^\ff(y, y') \leq \eta.
\]

Both inequalities above lead to
\[
\left| w_X^\ff(x, x') - w_Y^\ff(y, y') \right| \leq \eta.
\]

But, since $(x, y), (x', y') \in R$ were chosen arbitrarily, we have
\[
\dis \left(R; \ffh(\bx), \ffh(\by) \right) \leq \eta = \dis(R; \bx, \by),
\]
which implies $d_\net \left( \ffh(\bx), \ffh(\by) \right) \leq d_\net(\bx, \by).$
\end{proof}

\begin{corollary}
The reciprocal $\mathfrak{H}^\mathrm{rec}$, non-reciprocal $\mathfrak{H}^\mathrm{nrec}$, unilateral $\mathfrak{H}^\mathrm{u}$ and directed single linkage $\mathfrak{H}^\mathrm{dsl}$ functors from~\cite{Carlssonetal13AxiCons} are stable.
\end{corollary}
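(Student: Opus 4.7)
The plan is to reduce the corollary directly to the Stability Theorem by exhibiting each of the four functors as $\ffh$ for some endofunctor $\ff$ on $\catdigraph$ with $\ff \neq \ff^\mathrm{disc}$, and then invoking the theorem verbatim.

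For three of the four functors, explicit presentations were already recorded in the final remark of Subsection~\ref{subsec: hierarchical clustering}: one takes $\mathfrak{H}^\mathrm{rec} = \widehat{\ff^\mathrm{rec}}$ with $\ff^\mathrm{rec} = \ff^\mathrm{tc} \circ \ff^\mathrm{ls}$; $\mathfrak{H}^\mathrm{nrec} = \widehat{\ff^\mathrm{nrec}}$ with $\ff^\mathrm{nrec} = \ff^\mathrm{tc} \circ \ff^\Omega$ for $\Omega = \{C_n\}_{n \in \N}$; and $\mathfrak{H}^\mathrm{dsl} = \widehat{\ff^\mathrm{tc}}$ (this is the identification highlighted in the remark on single linkage). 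For the unilateral functor, the plan is to write $\mathfrak{H}^\mathrm{u} = \widehat{\ff^\mathrm{u}}$ where $\ff^\mathrm{u} = \ff^\mathrm{tc} \circ \ff^\mathrm{us}$; unwinding definitions, this captures exactly the unilateral clustering of~\cite{Carlssonetal13AxiCons}, because $\ff^\mathrm{u}(\bx_\epsilon)$ has an arrow $x \to x'$ iff there is a directed path in $\bx_\epsilon$ between $x$ and $x'$ (in either orientation), which is the defining condition of unilateral connectedness at scale~$\epsilon$.

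Next I would verify the non-triviality hypothesis $\ff \neq \ff^\mathrm{disc}$ required by the Stability Theorem. In each of the four cases $\ff(K_2) = K_2$ is immediate: the lower symmetrization, upper symmetrization, transitive closure and cycle representable functors all preserve every arrow of $K_2$, and composition of arrow-preserving-on-$K_2$ functors preserves this property. By Proposition~\ref{prop: functor-on-two-point-graphs}(2), this forces $\ff \neq \ff^\mathrm{disc}$.

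With these two observations in place, Theorem~\ref{theo: stability} applies to each of the four endofunctors and yields the stability inequality $d_\net(\mathfrak{H}(\bx), \mathfrak{H}(\by)) \leq d_\net(\bx, \by)$ for arbitrary networks $\bx$ and $\by$. The only genuine step beyond bookkeeping is the identification of $\mathfrak{H}^\mathrm{u}$ with $\widehat{\ff^\mathrm{u}}$; I expect this to be the main (but still routine) obstacle, since it requires matching the level-set description of $\mathfrak{H}^\mathrm{u}$ in~\cite{Carlssonetal13AxiCons} with the graph-level composition $\ff^\mathrm{tc} \circ \ff^\mathrm{us}$ applied to $\bx_\epsilon$.
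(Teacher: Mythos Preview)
Your proposal is correct and follows essentially the same approach as the paper: identify each of the four functors as $\ffh$ for an explicit endofunctor $\ff$ (with $\mathfrak{H}^\mathrm{u} = \ffh$ for $\ff = \ff^\mathrm{tc}\circ\ff^\mathrm{us}$, exactly as you guessed) and then invoke Theorem~\ref{theo: stability}. The paper's proof is terser and omits the explicit check that $\ff\neq\ff^\mathrm{disc}$, but your added verification via Proposition~\ref{prop: functor-on-two-point-graphs}(2) is a harmless elaboration of the same argument.
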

\begin{proof}
Indeed, $\mathfrak{H}^\mathrm{nrec} = \ffh^\mathrm{nrec}$, $\mathfrak{H}^\mathrm{rec} = \ffh^\mathrm{rec}$, $\mathfrak{H}^\mathrm{u} = \ffh$ where $\ff = \ff^\mathrm{tc} \circ \ff^\mathrm{us}$, and $\mathfrak{H}^\mathrm{dsl} = \ffh^\mathrm{tc}$.
\end{proof}

\subsection{Hierarchical clustering of a graph}

An extended network can be seen as a sequence of nested graphs. The method studied in Section~\ref{subsec: hierarchical clustering} arose by applying an endofunctor to this sequence.

An alternative approach is to fix one graph and consider many ``nested'' endofunctors, as in the following definition.

\begin{definition}
Let $\Omega_1, \Omega_2, \dots, \Omega_n$ be families of graphs with $\Omega_1 \preccurlyeq \Omega_2 \preccurlyeq \ldots \preccurlyeq \Omega_n$. Let $\ff_i \coloneqq \ff^\mathrm{tc} \circ \ff^{\Omega_i}$, for all $i = 1, \dots, n$. Since $\ff^\mathrm{tc}$ is arrow increasing and $ \ff^{\Omega_i} \leq \ff^{\Omega_{i+1}}$, $i=1, \ldots, n-1$, we have $n$ clustering functors $\ff_1 \leq \ff_2 \leq \dots \leq \ff_n$. Take $\ff_0 \coloneqq \ff^\mathrm{disc}$ and $\ff_{n+1} \coloneqq \ff^\mathrm{comp}$. For a given graph $G = (V, E)$, define $u \colon V \times V \to \mathbb{R}$ by 
\[
u(v, v') \coloneqq \min \left\{i \; | \; v \xrightarrow{\ff_i(G)} v'\right\}.
\] 
We call $(V, u)$ the \textit{(symmetric extended) ultranetwork of $G$ obtained from $\Omega_1, \ldots, \Omega_n$}.
\end{definition}

Note that it is not interesting to have $\ff_i = \ff_j$ for $i \neq j$. The sets $\Omega_i = \{K_{i+1}\}$ are an example of this, since $\ff_i = \ff^\mathrm{tc} \circ \ff^\mathrm{us}$, for any $i \geq 1$. 

\begin{example} \label{exem: dendro grafo}
For $k = 1, \dots, 4$, let $\Omega_k = \{C_{k+1}\}$.
Let $G$ be the graph of Figure~\ref{fig: composition} (left). The ultranetwork of $G$ obtained from $\Omega_1, \ldots, \Omega_4$ is depicted in Figure~\ref{fig: composition} (right).
\end{example}

\begin{figure}[ht] 

\centering
\hspace*{0cm}\raisebox{2cm}{
\begin{tikzpicture}[>=Stealth,thick]
\node[fill=black, circle, minimum size=4pt, inner sep=0pt] (a) at (0,0) {}; 
\node[fill=black, circle, minimum size=4pt, inner sep=0pt] (b) at (3,0) {}; 
\node[fill=black, circle, minimum size=4pt, inner sep=0pt] (c) at (1.5,1.7) {};
\node[fill=black, circle, minimum size=4pt, inner sep=0pt] (d) at (4,1.7) {};
\draw[->] (a) -- (b);
\draw[->] (b) -- (a);
\draw[->]  (b) -- (c);
\draw[->]  (c) -- (a);
\draw[->]  (c) -- (d);
\node[left] at (a){$a$};
\node[right] at (b){$b$};
\node[above] at (c){$c$};
\node[right] at (d){$d$};
\end{tikzpicture}}
\
\begin{tikzpicture}[sloped]
\node[fill=black, circle, minimum size=4pt, inner sep=0pt] (a) at (0,1) {}; 
\node[fill=black, circle, minimum size=4pt, inner sep=0pt] (b) at (0,2) {}; 
\node[fill=black, circle, minimum size=4pt, inner sep=0pt] (c) at (0,3) {};
\node[fill=black, circle, minimum size=4pt, inner sep=0pt] (d) at (0,4) {};
\node (ab) at (1,1.5){};
\node (abc) at (2, 2.25){};
\node (abcd) at (4, 3.125){};
\draw  (a) -| (ab.center);
\draw  (b) -| (ab.center);
\draw  (c) -| (abc.center);
\draw  (d) -| (abcd.center);
\draw  (ab.center) -| (abc.center);
\draw  (abc.center) -| (abcd.center);
\draw[->] (-1,0) --  (6.25,0)node[right]{$\epsilon$};

\node[left] at (a){$a$};
\node[left] at (b){$b$};
\node[left] at (c){$c$};
\node[left] at (d){$d$};
\node[below] at (0,0) {0};
\node[below] at (1,0) {1};
\node[below] at (2,0) {2};
\node[below] at (3,0) {3};
\node[below] at (4,0) {4};
\node[below] at (5,0) {5}; 
\draw[black] (abcd.center) -- (5, 3.125);
\end{tikzpicture}

\caption{On the left: a graph $G$. On the right: the ultranetwork of $G$ obtained from $\Omega_1, \ldots, \Omega_4$. See Example~\ref{exem: dendro grafo}} 
\label{fig: composition}
\end{figure}
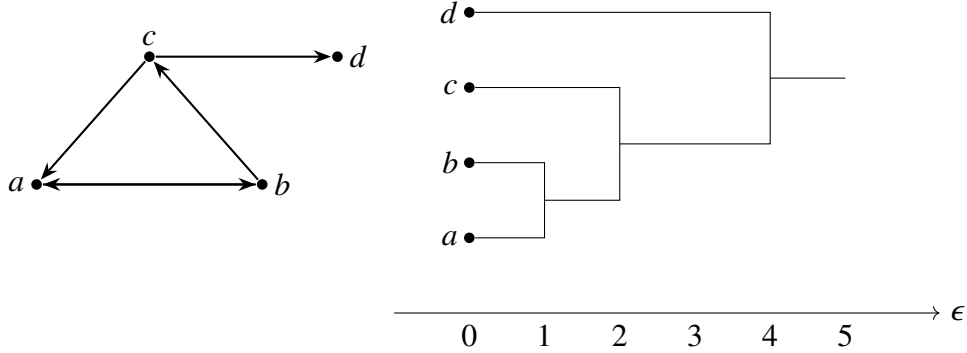

The following proposition states when $\Omega_i$ satisfies $\ff_i(L_2) = K_2$, for some $i$, then all $\ff_j$ with $j > i$ are redundant.

\begin{proposition}
If $\Omega$ is such that $\ffo(L_2) = K_2$, then $\ff^\mathrm{tc} \circ \ffo = \ff^\mathrm{conn}$ or $\ff^\mathrm{comp}$.
\end{proposition}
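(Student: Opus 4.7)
The plan is to split into two cases based on whether $\ff^\Omega$ equals $\ff^\mathrm{comp}$ or not. The case $\ff^\Omega = \ff^\mathrm{comp}$ is immediate: $\ff^\mathrm{tc} \circ \ff^\mathrm{comp}(G) = \ff^\mathrm{tc}(K_{(G)}) = K_{(G)}$, so the composite is $\ff^\mathrm{comp}$. From here on, assume $\ff^\Omega \neq \ff^\mathrm{comp}$; note also that $\ff^\Omega \neq \ff^\mathrm{disc}$, since $\ff^\mathrm{disc}(L_2) = D_2 \neq K_2$. The goal is then to show $\ff^\mathrm{tc} \circ \ff^\Omega = \ff^\mathrm{conn}$ by sandwiching it between $\ff^\mathrm{conn}$ on both sides.

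For the lower bound $\ff^\mathrm{conn} \leq \ff^\mathrm{tc} \circ \ff^\Omega$, I would first show that the hypothesis $\ff^\Omega(L_2) = K_2$ forces $\ff^\mathrm{us} \leq \ff^\Omega$. Indeed, given a graph $G$ and $v, v' \in G$ with $v \xrightarrow{\ff^\mathrm{us}(G)} v'$, either $v \to v'$ or $v' \to v$ in $G$, so in either case there is a graph map $\phi\colon L_2 \to G$ hitting both $v$ and $v'$; functoriality and $\ff^\Omega(L_2) = K_2$ then yield $v \xleftrightarrow{\ff^\Omega(G)} v'$. Next I would verify (by unrolling definitions) that $\ff^\mathrm{tc} \circ \ff^\mathrm{us} = \ff^\mathrm{conn}$: the transitive closure of the symmetrization of $G$ connects two vertices precisely when they lie in the same weakly connected component of $G$. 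Combining these two facts gives $\ff^\mathrm{conn} = \ff^\mathrm{tc} \circ \ff^\mathrm{us} \leq \ff^\mathrm{tc} \circ \ff^\Omega$.

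For the upper bound $\ff^\mathrm{tc} \circ \ff^\Omega \leq \ff^\mathrm{conn}$, I would invoke Proposition~\ref{prop: order in funct}: since $\ff^\Omega \neq \ff^\mathrm{disc}, \ff^\mathrm{comp}$, it satisfies $\ff^\Omega \leq \ff^\mathrm{conn}$. Thus $\ff^\mathrm{tc} \circ \ff^\Omega \leq \ff^\mathrm{tc} \circ \ff^\mathrm{conn}$. Finally I would observe that $\ff^\mathrm{conn}(G)$ is itself transitive (weak connectivity is an equivalence relation), so $\ff^\mathrm{tc} \circ \ff^\mathrm{conn} = \ff^\mathrm{conn}$. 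Chaining the two bounds yields the desired equality.

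The only mildly delicate step is the clean use of Proposition~\ref{prop: order in funct}, which requires ruling out $\ff^\Omega = \ff^\mathrm{disc}$ at the start; everything else is a direct application of previously established machinery (functoriality applied to maps out of $L_2$, and the fact that $\ff^\mathrm{conn}$ is already a clustering functor). I do not anticipate a serious obstacle.
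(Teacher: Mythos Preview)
Your proof is correct and follows essentially the same route as the paper: both deduce $\ff^\mathrm{us} \leq \ff^\Omega$ from the hypothesis, use $\ff^\mathrm{conn} = \ff^\mathrm{tc} \circ \ff^\mathrm{us}$ for the lower bound, and invoke Proposition~\ref{prop: order in funct} for the upper bound. The only cosmetic difference is that the paper applies Proposition~\ref{prop: order in funct} directly to the composite $\ff^\mathrm{tc}\circ\ff^\Omega$, whereas you apply it to $\ff^\Omega$ and then compose with $\ff^\mathrm{tc}$; your version has the minor bonus of pinpointing that the $\ff^\mathrm{comp}$ alternative arises exactly when $\ff^\Omega = \ff^\mathrm{comp}$.
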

\begin{proof}
Let $G \in \catdigraph$. Then $\ffo(L_2) = K_2$ implies $\ff^\mathrm{us} \leq \ff$. Thus, 
\[
\ff^\mathrm{conn} = \ff^\mathrm{tc} \circ \ff^\mathrm{us} \leq \ff^\mathrm{tc} \circ \ff.
\]

By Proposition~\ref{prop: order in funct}, $\ff^\mathrm{tc} \circ \ff = \ff^\mathrm{conn}$ or $\ff^\mathrm{comp}$.
\end{proof}

\section{Discussion}
The definition and properties of endofunctors $\ff$ that carry a notion of density (as in~\cite{Carlsson10ClCl}) can be done with a more general construction relying on simplicial complexes. This will be the subject of an upcoming paper. 
See~\cite{hanbaek} for a rendition of the idea of motivic clustering in the context of graphons.

\subsection*{Acknowledgements} We thank Prof.\ Thiago de Melo for his  careful reading and detailed feedback of the manuscript and for his help with creating figures.

\bibliography{arxiv-bibliography}{}
\bibliographystyle{IEEEtran}

\end{document}